\documentclass[11pt, a4paper]{article}

\usepackage{amsmath}
\usepackage{amssymb}
\usepackage{amsthm}
\usepackage{mathtools}
\usepackage{colortbl}
\usepackage{graphicx}
\graphicspath{{plots/}}
\usepackage{hyperref}
\usepackage{geometry}
\usepackage{algorithm}
\usepackage{algpseudocode}
\usepackage{booktabs}
\usepackage{caption}
\usepackage{subcaption}
\usepackage{array}
\usepackage{physics}
\usepackage{enumitem}
\usepackage[svgnames]{xcolor}
\usepackage{bm}
\usepackage{mathrsfs}

\usepackage{newtxtext}
\usepackage{newtxmath}
\usepackage[backend=biber,style=numeric,sorting=none,natbib=true,maxcitenames=1]{biblatex}

\usepackage{amsthm}
\usepackage{thmtools}
\usepackage{thm-restate}

\theoremstyle{plain}
\newtheorem{theorem}{Theorem}[section]
\newtheorem{proposition}[theorem]{Proposition}
\newtheorem{lemma}[theorem]{Lemma}
\newtheorem{corollary}[theorem]{Corollary}
\theoremstyle{definition}

\theoremstyle{remark}
\newtheorem{remark}[theorem]{Remark}

\usepackage{tikz}
\usepackage{graphicx}
\usetikzlibrary{
    calc,
    positioning,
    arrows.meta
}

\usepackage{cleveref}
\crefname{assumption}{Assumption}{Assumptions}
\crefname{equation}{Equation}{Equations}
\crefname{theorem}{Theorem}{Theorems}
\crefname{lemma}{Lemma}{Lemmas}
\crefname{proposition}{Proposition}{Propositions}
\crefname{corollary}{Corollary}{Corollaries}
\crefname{definition}{Definition}{Definitions}
\crefname{note}{Note}{Notes}
\crefname{table}{Table}{Tables}
\crefname{figure}{Figure}{Figures}
\crefname{example}{Example}{Examples}
\crefname{section}{Section}{Sections}
\crefname{app}{Appendix}{Appendices}

\makeatletter
\newcommand{\qedheresafe}{%
  \tag*{$\qedsymbol$}%
  \gdef\QED@stack{\qed@elt{}}%
}
\makeatother

\usepackage{pifont}

\let\Tr\relax
\let\dd\relax

\DeclarePairedDelimiterX{\infdivx}[2]{(}{)}{%
  #1\;\delimsize\|\;#2%
}

\newcommand{\Rb}{\mathbb{R}}

\newcommand{\Eb}{\mathbb{E}}

\newcommand{\bg}{\bar{g}}

\newcommand{\E}{\mathcal{E}}

\newcommand{\N}{\mathcal{N}}

\newcommand{\Hc}{\mathcal{H}}
\newcommand{\Lc}{\mathcal{L}}
\newcommand{\Ac}{\mathcal{A}}
\newcommand{\Sg}{\mathcal{S}_g}
\newcommand{\id}{\mathrm{Id}}
\newcommand{\e}{\mathcal{E}^\perp}
\newcommand{\GL}[1]{\mathrm{GL}({#1})}

\newcommand{\dd}{\mathrm{d}}

\DeclareMathOperator{\Tr}{Tr}

\usepackage{xparse}
\NewDocumentCommand{\inn}{O{x} O{x'} O{V_X}}{\left\langle #1, #2\right\rangle_{#3}}
\newcommand{\fs}{\mathsf{\Sigma}}
\newcommand{\fk}{\mathsf{K}}
\newcommand{\tix}{\Tilde{x}}
\newcommand{\tih}{\Tilde{h}}
\usepackage{multirow}

\usepackage{etoolbox} 
\newbool{includeapp}
\setbool{includeapp}{true} 

\usepackage{siunitx}

\hypersetup{
  colorlinks = true,
  allcolors = DarkRed,
}

\definecolor{color1}{HTML}{0077BB}
\definecolor{color2}{HTML}{EE7733}
\definecolor{color3}{HTML}{33BBEE}
\definecolor{color4}{HTML}{EE3377}
\definecolor{color5}{HTML}{CC3311}
\definecolor{color6}{HTML}{009988}
\definecolor{color7}{HTML}{BBBBBB}

\DeclareFieldFormat{urldate}{}

\DeclareSourcemap{
  \maps[datatype=bibtex]{
    \map{
      \step[fieldset=primaryclass, null]
    }
  }
}



\title{Boosting Data Augmentation with\\Stochastic Weight Averaging}

\makeatletter
\def\@fnsymbol#1{\ensuremath{\ifcase#1\or a\or b\or *\or c\or d\or e\or f\or g \or h \else\@ctrerr\fi}}
\makeatother

\author{%
  Longde Huang \footnotemark[1] \and
  Axel Flinth\footnotemark[3]{\ \,}\footnotemark[2] \and
  Jan E.\ Gerken\footnotemark[3]{\ \,}\footnotemark[1]
}

\date{}

\begin{document}
\renewcommand{\thefootnote}{\fnsymbol{footnote}}
\maketitle
\footnotetext[3]{Equal Contribution}
\footnotetext[1]{Department of Mathematical Sciences, Chalmers University of Technology and the University of Gothenburg, SE-412 96 Gothenburg, Sweden.\\ Emails: longde@chalmers.se, gerken@chalmers.se}
\footnotetext[2]{Department of Mathematics and Mathematical Statistics, Umeå University, Linnéus väg 49, 901 87 Umeå, Sweden.\\ Email: axel.flinth@umu.se}
\renewcommand{\thefootnote}{\arabic{footnote}}
\setcounter{footnote}{0}

\begin{abstract}
The symmetries of a learning task have become an important factor in designing modern deep learning solutions. Data augmentation is a straightforward and effective way of incorporating symmetries into a generic neural network. 
Recent results show that infinitely large deep ensembles show perfect symmetry when trained on augmented data. However, since training ensembles requires repeating the training process many times, this method is costly. In this work, we study stochastic weight averaging (SWA) applied to classification as an alternative ensembling technique that does not require repeated training runs. We analyze SWA by approximating the stochastic training trajectory at the end of training with an Ornstein--Uhlenbeck process. We show that in the infinite-width limit, SWA on augmented data provides an equivariance boost that goes beyond what could be expected from the performance increase due to SWA alone. We verify our results with extensive numerical experiments on numerous models spanning image and graph classification with both discrete and continuous symmetries.
\end{abstract}

\section{Introduction}
Many machine-learning tasks, ranging from protein folding~\citep{abramson2024} to medical image processing~\citep{bekkers2018} feature symmetries that constrain the learning problem and, if taken into account, can be used to improve the performance of the trained models. In deep learning more specifically, there are two major approaches for incorporating symmetries into the learning algorithm. 

Firstly, the symmetry can be realized as a constraint on the neural network architecture. In particular, the layers of the network can be chosen such that they are equivariant functions with respect to the symmetry of the task at hand~\citep{bronstein2021,gerken2023}. This approach requires specialized architectures (discussed in a large body of literature) but realizes the symmetry exactly and improves sample efficiency during training.

Secondly, the neural network can be trained on transformed versions of the original training data, a method known as data augmentation. Since this technique does not constrain the network architecture, it can be applied to any highly optimized off-the-shelf architecture as long as the symmetry transformation can be implemented. It is therefore straightforward and flexible. However, it does not lead to an exactly equivariant model, but rather one that is approximately equivariant (or equivariant in distribution).

Theoretical studies of equivariant neural networks are plentiful due to deep connections between equivariant layers and representation theory. In particular, practically relevant special cases such as the symmetry groups $S_n$ of permutations, $SO(n)$ of rotations or $SE(n)$ of roto-translations have been studied extensively in the literature, for various different model architectures.

In contrast, most studies of data augmentation are purely empirical, see e.g.~\cite{gerken2022}. This is because data augmentation modifies the training procedure and therefore one inevitably needs to study the complicated, non-linear training dynamics of neural networks. Despite this obstacle, recent advances in this area show that in expectation over the initialization distribution, neural networks are exactly equivariant when they are trained on augmented data~\citep{gerken2024,nordenfors2024ensembles}. Approximately, this can be realized by training a deep ensemble.

However, ensemble training is notoriously computationally intensive, since the training needs to be repeated for each ensemble member. For this reason, and motivated by the recent results mentioned above, we study the effects of data augmentation on \emph{stochastic weight averaging} (SWA). Instead of an ensemble over independent training trajectories, SWA takes an ensemble mean over several models from the same training trajectory~\citep{izmailov2019a}, thereby reducing the training cost to that of a single model. 

Intuitively, the training trajectory of SGD is dominated by noise from batch selection when it is close to a minimum of the loss. Therefore, SWA can be expected to have a similar effect to averaging over random initializations. In this work, we investigate whether this intuition holds up. In particular, we approximate the stochastic training trajectory at the end of training with an Ornstein--Uhlenbeck process and use the ratio of the expected losses with and without SWA to quantify the performance boost due to SWA.

A central obstacle to the theoretical analysis of SWA in this setting is the non-independence resulting from the samples being taken from a single training trajectory. 
To overcome this problem, we assume a well-trained network and express the performance boost in terms of the trace of the Hessian of the network. In this setting, we then consider the expected \emph{equivariance} boost which focuses on the non-equivariant contributions to the loss. In the infinite-width limit, we compute a bound on this expected equivariance boost using the theory of neural tangent kernels (NTKs). Our calculations show that combining data augmentation with SWA leads to an equivariance boost which goes beyond the mere performance increase expected from ordinary SWA. 


We validate our theoretical analysis with extensive numerical experiments. In particular, we explicitly verify that the bound predicted by our theory holds and has the right dependence on group size and averaging time on a synthetic problem over which we have full control. We furthermore demonstrate the claimed improvement on computer vision and graph classification tasks, as well as both discrete and continuous symmetry groups, for numerous different model classes.

Our main contributions are as follows:
\begin{itemize}
    \item We formalize SWA at the end of training with an Ornstein--Uhlenbeck process and analyse the equivariance properties of the resulting model in the case of augmented training data. To quantify the performance boost due to SWA, we introduce the loss ratio $R$ in \eqref{eq:r-loss}. Similarly, to quantify the equivariance boost due to SWA, we define the loss ratio $R_{\E^\perp}$ in \eqref{eq:r-perp}. We overcome the central obstacle of dependent ensemble members from one trajectory by assuming a well-trained network and expressing $R$ and $R_{\E^\perp}$ in terms of the trace of the Hessian of the network, see \eqref{eq:r-as-trH} and \eqref{eq:rperp-as-trH}.
    \item We then take the infinite-width limit and leverage the theory of NTKs to derive a bound on $R_{\E^\perp}$ (Theorem~\ref{thm:improvement-ratio-general}) which shows that SWA with data augmentation leads to a non-trivial improvement in equivariance for a wide range of models. In the process, we derive novel results about the NTK limit for equivariant models.
    \item We validate our theoretical results numerically by considering a synthetic learning task. We verify for this task that our bound still holds for a finitely-sized network and has the predicted dependence on the group size and averaging time (Section~\ref{sec:synth-expmts}). We furthermore test the equivariance gain from SWA in more realistic settings (Section~\ref{sec:practical}). In particular, we demonstrate the predicted improvement on five vision datasets with seven different models (MLPs, CNNs, vision and graph transformers) for discrete rotations. We also show that the same effect can be observed for a molecular graph classification task invariant with respect to 3D rotations.
\end{itemize}

\section{Related Work}
\textbf{Equivariance and data augmentation.} A large body of work builds equivariance directly into the network architecture, so that the symmetry constraint is imposed exactly throughout training rather than only approximately. Group-equivariant CNNs \citep{cohen2016groupequivariantconvolutionalnetworks} replaced standard convolutions with group convolutions over discrete groups, and were later generalized to continuous and steerable representations \citep{weiler2018group,cohen2018sphericalcnns}, ultimately unified under the framework of geometric deep learning \citep{bronstein2021}. Taking the reverse approach, \citet{kondor2018generalizationequivarianceconvolutionneural} studied equivariant maps from first principles, proving that convolutions are the unique equivariant linear maps for compact groups. These architectures guarantee equivariance without relying on the data distribution, at the cost of restricting the parameter space to a limited subspace, which \citet{nordenfors2025optimization} study in a general setting. This architectural restriction has been argued to improve generalization \citep{elesedy2021provablystrictgeneralisationbenefit,pmlr-v161-sannai21a}

From this parameter restriction arises the structural question: is layerwise equivariance the only way to realize equivariance? \citet{NEURIPS2023_2c74f005} first tackled this question in theory, arguing that this is not the general scenario despite the existence of supporting examples; they nevertheless conjectured that it holds empirically for trained CNNs. \citet{shahverdi2026identifiable}, under a stronger condition of identifiability, showed rigorously that end-to-end equivariance always implies the existence of a parameter choice realizing the same function with layerwise equivariance. This thread of works is the premise on which we treat the equivariant subspace $\mathcal E$ as well-defined, and study the training dynamics restricted to it.

Layerwise realizability does not make the architectural route costless in practice, since it requires committing to a specific group at design time, implementing specialized layers for it, and training within a restricted parameter space. As an alternative, data augmentation offers a route to equivariance that leaves the parameter space unconstrained, at the cost that the resulting network is only approximately equivariant. Whether this matters is contested. \citet{gerken2022} find that for invariant classification tasks for spherical images, sufficiently augmented models with large scale match equivariant architectures. Similarly, \cite{wang2024swallowingbitterpillsimplified} reports that dropping the rotational inductive bias simplifies model scaling considerably. \citet{brehmerDoesEquivarianceMatter2025}, in contrast, argues equivariant models outperform augmented ones at every compute budget tested. Augmentation thus appears as a viable substitute for architectural equivariance in some scenarios, including the classification tasks with finite groups considered in our work. \citet{chenGroupTheoreticFrameworkData2020a} give a theoretical framework for this alternative, modelling data augmentation as an averaging operation over group orbits that renders the data distribution approximately invariant, which is exactly the setting of our work.

\textbf{Ensembles and checkpoint averaging.} To achieve exact equivariance from data augmentation, ensembling has emerged as a provable mechanism \citep{gerken2024,nordenfors2024ensembles}. The result of \citet{nordenfors2024ensembles}, that an infinite ensemble of networks trained with fully augmented data is equivariant in expectation, appears in our more general framework as a special case. This connection to ensembling motivates a closer look at the broader literature on combining multiple models.

Ensembles of independently trained networks are a classical and extensively studied technique for improving generalization, and we refer to \citet{GANAIE2022105151} for a general review. Among the broad spectrum of methods, ensembling with multiple sequentially trained models has emerged as an efficient strategy. \citet{huangSnapshotEnsemblesTrain2017} and \citet{chenCheckpointEnsemblesEnsemble2017} constructed ensembles by averaging the outputs of models saved at different points during a single training round. Both works are based on the hypothesis that the training trajectory explores different local minima, and averaging over them reduces overfitting. \citet{garipovLossSurfacesMode2018}, on the other hand, explored the loss landscape further and discovered that local minima are connected by simple low-loss curves, challenging the isolated minima view. Inspired by this insight of mode connectivity, they traverse nearby low-loss regions to collect a diverse set of ensemble members.

A common issue among these methods is that they still require storing and evaluating multiple models at inference time, limiting their practicality. Weight averaging avoids this cost entirely by collapsing the ensemble into a single set of weights before inference, and is exactly the backbone of the mechanism we study in our work.

\textbf{Weight averaging.} Averaging the iterates of a stochastic optimization algorithm has classical roots in Polyak--Ruppert averaging \citep{article}. The most common form in deep learning is the exponential moving average (EMA) \citep{tarvainen2018meanteachersbetterrole}, a related but distinct technique that stochastic weight averaging (SWA) \citep{izmailov2019a} is often contrasted against. The two differ in their averaging scheme, which leads to different accounts of their effect on the loss landscape: by averaging iterates sampled uniformly rather than with exponential decay, SWA is argued to land in the interior of a wide, flat optimum, which is connected to better generalization. Several works adopt this wide-minima point of view. \citet{xieDiffusionTheoryDeep2021} ground this flat-minima intuition by modeling SGD as a diffusion process whose stationary distribution concentrates exponentially more on flat minima; \citet{damianLabelNoiseSGD2021} similarly show that SGD with label noise biases toward flatter minimizers by analyzing how the noise affects local quadratic structure of the loss.

This flat-minima account is not, however, unchallenged. \citet{guoStochasticWeightAveraging2022} revisit SWA empirically and find its benefit highly dependent on how well the underlying SGD trajectory has converged before averaging, arguing the operation contributes primarily through variance reduction rather than flat-minima discovery. This coincides with \citet{mandtStochasticGradientDescent2018}, who model SGD as a continuous-time SDE and show that near a well-fitted minimum, its stationary distribution is governed by a quadratic approximation of the loss with covariance set by the Hessian, which is the same picture we adopt. Unlike \citet{mandtStochasticGradientDescent2018}, however, who treat this as a general posterior characterization of SGD, we consider the setting where the dynamics decompose into equivariant and non-equivariant components, studying the impact of SWA on each in turn.

The quadratic, Hessian-governed picture is itself an active subject of empirical analysis. \citet{meterez2026defensequadraticmodel} stress-test it at LLM pretraining scale and find it remains predictive over a non-trivial fraction of training, while \citet{arousSpectralAlignmentStochastic2023} show that in high-dimensional classification tasks the SGD trajectory aligns with a low-dimensional outlier eigenspace of the Hessian, layer by layer. We likewise focus on classification tasks and study the spectrum of the Hessian, but apply the theory of the neural tangent kernel to derive a quantitative analysis.

\textbf{Neural tangent kernel.} The NTK \citep{jacot2018} characterizes the training dynamics of sufficiently wide neural networks as kernel gradient descent under a fixed, architecture-determined kernel, and is based on the network's correspondence to a Gaussian process at initialization known as the neural network Gaussian process (NNGP) \citep{lee2018deepneuralnetworksgaussian}. Subsequent work has used the spectrum of the NTK to characterize structure in the Hessian and gradients of trained networks. \citet{jacotAsymptoticSpectrumHessian2020} show that the NTK itself governs the asymptotic spectrum of the Hessian throughout training. \citet{basriFrequencyBiasNeural2020} use the NTK's spectral decomposition to characterize which frequencies a network learns fastest, extending the classical frequency-bias result to non-uniformly sampled data. \citet{seleznovaGradPCALeveragingNTK2025} build an OOD detector for classification models, based on the discovery of a related alignment phenomenon: the NTK's approximate block-diagonal structure across classes.

Closest to our setting in equivariant tasks, \citet{pmlr-v267-misof25a} compute the NTK of equivariant architectures whose intermediate representations are regular representations of a finite group. Our derivation in Section~\ref{sec:ntk_eq} generalizes this to arbitrary intermediate representations and irrep decompositions, and we use the resulting equivariant NTK to bound the trace of the equivariant Hessian $H_{\mathcal E}$ relative to the full Hessian $H$, which leads to our main theorem.

\section{Problem Setting}
In this section, we will set the notation and formalize stochastic weight averaging (SWA).

\textbf{Dataset and Data Augmentation.}
For a generic training task, we denote the true dataset as a joint distribution \(\mu_{X,Y}\) over \(V_X\times V_Y\), from which the training samples are drawn. Here, \(V_X= \Rb^{n_{X}}\) and \(V_Y= \Rb^{n_{Y}}\) are the input and output spaces. We assume the distribution \(\mu_{X,Y}\) to be supported on a compact set \(\Omega\). Furthermore, we assume a pair of group representations \((\rho_X,\,\rho_Y)\) of a group \(G\) on the input and output spaces, respectively,
\begin{equation}
\rho_X: G\rightarrow \GL{V_X}\,,\quad
\rho_Y: G\rightarrow \GL{{V_Y}}\,.
\end{equation}
In this context, we consider data augmentation of the training data as a condition on the distribution. Specifically, an augmented dataset corresponds to a distribution $\mu_{X,Y}$ that is invariant under the group action, namely the joint random variable \((X,\,Y)\) is identically distributed with \((\rho_X(g)X, \rho_Y(g)Y)\), \(\forall g\in G\). This also means \((\rho_X\times\rho_Y)(g)\Omega = \Omega,\,\forall g\in G\). For simplicity, we denote \(\rho_X(g)\), \(\rho_Y(g)\) with \(g_X, g_Y\)\label{notation:gxgy}.

\textbf{Network.}
As the neural network we train, we consider a multilayer perceptron (MLP) \(\N_\theta\) with parameters \(\theta=(W^0,\,W^1,\,\dots,\, W^{L})\) and nonlinearities \(\sigma = (\sigma_0, \dots, \sigma_L)\). In terms of the preactivations $(h^0,\dots,h^{L})$, the network function is given by 
\begin{align}
    h^0(x)=W^0x\,,
    \quad
    h^l(x)=\frac{1}{\sqrt{n_l}}W^{l}\sigma_{l-1}(h^{l-1}(x))
    \quad\text{for}\quad l=1,\dots,L\,,
    \quad
    \N_\theta(x)=h^{L}(x)\,,
    \label{eq:network}
\end{align}
where $h^l(x)\in\Rb^{n_{l+1}}$ and $W^{l}\in \Rb^{n_{l+1}\times n_l}$. In particular, we have \(n_X = n_0\) and \(n_Y = n_{L+1}\). We denote the space of parameters $\theta$ of this network by
\begin{align}
    \Hc=\bigoplus_{l=0}^L \Rb^{n_{l+1}\times n_l}\,.
\end{align}

To accommodate practical and modern architectures of interest, such as convolutional networks (CNN), transformers, and residual structures, we follow a standard parameter-space restriction and further restrict the admissible parameters \(\theta\) to an affine subspace \(\Lc\subset \Hc\). At layer $l$, we will denote space of admissible parameters by $\Lc^l$.

We call \(\N_\theta\) an equivariant network, if for \((x,y)\)-\(\mu_{X,Y}\) a.e. we have
\begin{equation}
    \N_\theta(g_Xx)=g_Y\N_\theta(x)\,.
\end{equation}

\textbf{Loss function.}
The loss function \(\ell(\N_{\theta}(x), y)\) is defined over $V_Y\times V_Y$. We assume it is non-negative, twice differentiable, locally Lipschitz and attains zero if and only if \(\N_\theta(x) = y\). We further assume that it is group invariant,
\begin{equation}
    \ell(g_Y\N_{\theta}(x), g_Yy)=\ell(\N_{\theta}(x), y)\,.\label{eq:inv_loss}
\end{equation}
Consequently, if the network \(\N_\theta\) itself is equivariant, then $\ell_\theta(x,y):=\ell(\N_{\theta}(x), y)$ is invariant,
\begin{equation}
    \ell_{\theta}(x,y) = \ell_{\theta}(g_Xx,g_Y y)\,.
\end{equation}
The cumulative loss, $L(\theta)$, is defined as the expectation of $\ell_\theta(x,y)$ with respect to the data distribution,
\begin{equation}
    L(\theta) = \Eb_\mu(\ell_\theta(x,y)) = \int_\Omega \ell_\theta(x,y)\dd \mu_{X,Y}\,.
\end{equation}

\textbf{Group action on the parameter space.}
For a network with a feed forward structure, one natural way to achieve equivariance is through layerwise equivariance. Therefore, we define a series of representations \(\rho_i\) on the intermediate spaces \(V_i=\Rb^{n_i}\).
We assume all representations, including the one on the input and the output space, are unitary with respect to the canonical tensor basis.

The group then naturally acts on the parameter space as
\begin{equation} \label{eq:natural_action}
    \bar{\rho}: G\rightarrow \GL{\Hc}\,,
    \quad
    \bar{\rho}(g)\theta=\bar{\rho}(g)(W^0,\dots, W^L)=(\rho_1(g)W^0\rho_X(g)^{-1},\dots, \rho_Y(g) W^{L} \rho_L(g)^{-1}).
\end{equation}
We require the non-linearities \(\sigma_i\) to be equivariant with respect to the corresponding intermediate representations \(\rho_i\),
\begin{equation} \label{eq:equiv_nonlin}
    \rho_i(g)\circ \sigma_i = \sigma_i\circ\rho_i(g),
\end{equation}
Note that it is equivalent that a set of parameters \(\theta\) is equivariant w.r.t. these representations,
\begin{equation}
    W^l\rho_{l}(g)=\rho_{l+1}(g)W^l\,,
\end{equation}
and that it is invariant under the associated representation \(\bar{\rho}\) (denote \(\bar{\rho}(g)\) as \(\bg\)),
\begin{equation}
    \theta = \bg \theta\,.
    \label{eq:equiv-theta}
\end{equation}
The parameters satisfying~\eqref{eq:equiv-theta} form a linear subspace \(\Hc_G\) in \(\Hc\). We assume the space of admissible parameters \(\Lc\) is invariant under the group action (\(\bg \Lc \subset \Lc,\,\forall g\in G\)), and the space of admissible equivariant parameters \(\E = \Hc_G\cap \Lc\) is non-empty. By picking \(\theta_G\in \E\), we can associate \(\Lc\) with \(T_{\theta_G}\Lc+\theta_G \). 

Assuming \eqref{eq:natural_action} and \eqref{eq:equiv_nonlin}, the group action on the data is equivalent to an action on the parameter space,
\begin{equation} \label{eq:parameter_transform}
    \N_{\theta}(g_Xx) =g_Y\N_{\bg^{-1}\theta}(x).
\end{equation}
This coincides with the definition of \(\E\), since if \(\theta \in \E\), the equation above turns into
\begin{equation}
    \N_{\theta}(g_Xx) =g_Y\N_{\theta}(x),
\end{equation}
which is just the usual definition of equivariance. Consequently, for the loss we have
\begin{align} \label{eq:lossinv}
    \ell_{\bg\theta}(x,y) = \ell(\N_{\bg\theta}(x),y)= l(g_Y\N_{\theta}(g_X^{-1}x), y)
    =\ell(\N_{\theta}(g_X^{-1}x),g_Y^{-1}y) = \ell_{\theta}(g_X^{-1}x, g_Y^{-1}y)\,.
\end{align}

\begin{remark} \label{rem:param_vs_functions}
   All practically used equivariant neural networks are constructed out of equivariant layers, i.e.\ $\theta\in\E$ in our language. However in general, that a network is equivariant does not imply that $\theta\in\E$. As shown in \citet{shahverdi2026identifiable}, if the parameters are \emph{identifiable} (intuitively, this means that the map $\theta \to f_\theta$ is injective up to symmetries), there is an equivalence between equivariant layers and equivariant functions.
\end{remark}

We assume the group \(G\) to be finite. We will furthermore assume that the representation $\bar{\rho}$ is unitary with respect to the Euclidean inner product. Then, 
we can consider the orthogonal complement \(T\e\) of \(T\E\). Furthermore, the orthogonal projection of a vector \(v\in T\Lc\) to \(T\E\) is given by \citep[Prop. 2.8]{fulton1991}
\begin{equation}
    p:T\Lc\rightarrow T\E\,,
    \quad
    v\mapsto \frac{1}{|G|}\sum_{g\in G}\bg v\,.
    \label{eq:e-proj}
\end{equation}

\textbf{Learning Algorithm.}
We train the network with respect to the loss function with stochastic gradient descent (SGD).
With learning rate \(\eta\) and batch size \(B\), the training step could be formulated as
\begin{align}
    \theta_{t+1}=\theta_t-\eta \frac{1}{B}\sum_{i=1}^B\nabla_\theta{\ell}_{\theta_t} (X_i, Y_i) \label{eq:gd},
\end{align}
where \((X_i, Y_i)\) are independent random variables identically distributed with \((X, Y)\). The sample gradient $\nabla_\theta l_{\theta_t}(X,Y)$ has mean and covariance
\begin{align}
    \mu({\theta_t})&=\int_\Omega \nabla_\theta\ell_{\theta_t}(x,y)\dd\mu_{X,Y}=\nabla_\theta L(\theta_t)\\
    {\Sigma}({\theta_t})&= \int_\Omega \bigl(\nabla_\theta \ell_{\theta_t}(x,y)-\mu(\theta_t)\bigr)\bigl(\nabla_\theta\ell_{\theta_t}(x,y)-\mu(\theta_t)\bigr)^\top\dd \mu_{X,Y}\,.\label{eq:cov-mat}
\end{align}

\textbf{Stochastic weight averaging.}
 The stochasticity of batch sampling provides an opportunity for the training process to explore wide basins in the loss landscape, but it also results in noise that prevents the trajectory from reaching the optimal solution. Stochastic weight averaging can be viewed as an estimator to approximate this solution, by taking the mean of the previous weights
 \begin{equation}
     \label{eq:weight_averaging}
     \bar\theta_T = \frac{1}{T}\sum_{i=1}^{T}\theta_{t_i}\,.
 \end{equation}
 
 \textbf{Performance boost.} To quantify the performance boost from stochastic weight averaging, we compare the expected loss before and after averaging. For a given starting iterate $\theta_0$
 of the averaging window, define
\begin{equation}
R(T) = \frac{\Eb[L(\theta_T)]}{\Eb[L(\bar\theta_T)]}\,,\label{eq:r-loss}
\end{equation}
where the expectation is over the batch sampling (equivalently, over the SGD trajectory). In particular, $R(T)>1$ indicates that averaging improves the loss, with larger values indicating a greater boost.

\section{Theoretical Results}
\label{sec:theory}
 As we outlined above, we will analyse stochastic gradient descent \eqref{eq:gd}. For simplicity, we will ignore effects due to finite step size and instead consider the stochastic differential equation
 \begin{equation}
    \dd \theta_t = - \nabla L(\theta_t) \dd t + \sqrt{\tfrac{\eta}{B}}P(\theta_t) \dd W_t\,, \label{eq:stochapp}
\end{equation}
where \( \dd W(t) \) is Brownian motion with covariance \(I\), and \(P(\theta)P(\theta)^T=\Sigma(\theta)\). For small values of $\eta$, \eqref{eq:stochapp} is a good approximation of the SGD dynamics. 
 A heuristic argument for this is as follows: The sample gradient $\nabla_\theta{\ell}_{\theta_t} (X, Y) $ has mean $\nabla_\theta L({\theta_t})$ and covariance ${\Sigma}({\theta_t})$. By the central limit theorem, $  \frac{1}{B}\sum_{i=1}^B\nabla_\theta{\ell}_{\theta_t} (X_i, Y_i)$ approximates a Gaussian distribution with mean $\nabla_\theta L({\theta_t})$ and covariance $\tfrac{1}{B}{\Sigma}({\theta_t})$. Adding such increments to the variable $\theta_t$ (as we do in SGD) hence corresponds essentially to an Euler-Maruyama scheme to solve \eqref{eq:stochapp}. The limit can also be justified more rigorously -- see \cite[Corollary 10]{li2019stochastic}. This motivates exclusively analyzing \eqref{eq:stochapp} in the following.
\begin{remark}
    To simplify the notation, we will normalize $\sqrt{\tfrac{\eta}{B}}=1$ from here on.
\end{remark}

\subsection{Equivariance properties of the stochastic gradients}
\label{sec:local-mininmum}
When training on augmented data, the Hessian $\nabla^2 L$ and covariance matrix $\Sigma$ satisfy the following equivariance relations. They will be crucial to our entire analysis.

\begin{lemma}\label{lemma:cov}
    For any \(\theta\in\Lc\), the following transformation rules hold,
    \begin{equation}
        \bg \Sigma(\theta)\bg^\top = \Sigma(\bg\theta), \quad \bg \nabla^2L(\theta)\bg^\top = \nabla^2L(\bg \theta). 
    \end{equation}
\end{lemma}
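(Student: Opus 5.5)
The plan is to derive both identities from the transformation rule of the per-sample loss \eqref{eq:lossinv}, $\ell_{\bg\theta}(x,y)=\ell_\theta(g_X^{-1}x,g_Y^{-1}y)$, together with the invariance of $\mu_{X,Y}$ under augmentation.

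The first step is to differentiate \eqref{eq:lossinv} in $\theta$. Since $\bg$ is linear, the chain rule gives $\bg^\top\nabla\ell_{\bg\theta}(x,y)=\nabla\ell_\theta(g_X^{-1}x,g_Y^{-1}y)$, where gradients are taken with respect to the parameter argument. Because $\bar\rho$ is unitary, $\bg^{-\top}=\bg$, and therefore
\begin{equation*}
\nabla\ell_{\bg\theta}(x,y)=\bg\,\nabla\ell_\theta(g_X^{-1}x,g_Y^{-1}y).
\end{equation*}
Differentiating once more gives
\begin{equation*}
\nabla^2\ell_{\bg\theta}(x,y)=\bg\,\nabla^2\ell_\theta(g_X^{-1}x,g_Y^{-1}y)\,\bg^\top.
\end{equation*}

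If parameters are restricted to $\Lc$, I will work in $T\Lc$, which is $\bg$-invariant. The restriction of a unitary map to an invariant subspace is still unitary, and the orthogonal projection onto $T\Lc$ commutes with $\bg$, so the same formulas hold for the restricted gradient and Hessian.

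The second step is to integrate against $\mu_{X,Y}$. The map $(x,y)\mapsto(g_X^{-1}x,g_Y^{-1}y)$ preserves $\mu_{X,Y}$ and maps $\Omega$ to itself, so the change of variables yields
\begin{equation*}
\nabla^2L(\bg\theta)=\bg\,\nabla^2L(\theta)\,\bg^\top \quad\text{and}\quad \mu(\bg\theta)=\bg\,\mu(\theta).
\end{equation*}

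The third step handles the covariance. Substituting into \eqref{eq:cov-mat}, the centered gradient at $\bg\theta$ equals $\bg$ applied to the centered gradient at $\theta$, evaluated at the transformed point. Hence its outer product is $\bg(\cdot)(\cdot)^\top\bg^\top$, and the same change of variables gives $\Sigma(\bg\theta)=\bg\Sigma(\theta)\bg^\top$.

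The main obstacle is justifying the interchange of differentiation and integration for the Hessian. I would argue it as follows. The loss is $C^2$, and the network is smooth in $\theta$ whenever the $\sigma_i$ are twice differentiable. Since $\Omega$ is compact, $\nabla^2\ell_\theta$ is bounded uniformly on $\Omega\times$ a neighbourhood of $\theta$. Dominated convergence then permits differentiating under the integral. This assumes the nonlinearities are $C^2$; otherwise the identity is read in the a.e. or distributional sense.

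Everything else is bookkeeping with $\bg^{-1}=\bg^\top$.
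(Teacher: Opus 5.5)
Your proposal is correct and is essentially the paper's proof: differentiate \eqref{eq:lossinv} once and twice with the chain rule, use $\bg^{-\top}=\bg$, and then use the invariance of $\mu_{X,Y}$ under $(x,y)\mapsto(g_X^{-1}x,g_Y^{-1}y)$ to transfer the per-sample identities to $\Sigma$ and $\nabla^2 L$. Your check that $\mu(\bg\theta)=\bg\,\mu(\theta)$ correctly handles the centering in \eqref{eq:cov-mat}, which the paper's computation silently drops, and the interchange issue you flag can be sidestepped for the Hessian by differentiating the invariance $L(\bg\theta)=L(\theta)$ twice directly.
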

\begin{proof}
Applying the chain rule to \eqref{eq:lossinv} implies $\bg^{T}\nabla \ell_{\theta}(x,y)|_{\theta=\bar{g}\theta}= \nabla \ell_\theta(g_X^{-1}x,g_Y^{-1}y)$, or equivalently
\begin{align} \label{eq:chain1}
    \nabla \ell_{\theta}(x,y)\vert_{\theta = \bg \theta} = \bg \nabla \ell_\theta(g_X^{-1}x,g_Y^{-1}y).
\end{align}
Applying the chain rule one more time yields  
\begin{align}\label{eq:chain2}
   \nabla^2 \ell_{\theta}(x,y)|_{\theta=\bar{g}\theta}  = \bg\nabla^2 \ell_\theta(g_X^{-1}x,g_Y^{-1}y)\bg^\top.
\end{align}
Because the data is augmented, $(x,y) \sim  (g_X^{-1}x,g_Y^{-1}y)$, and hence, using~\eqref{eq:chain1} and~\eqref{eq:chain2}, we obtain
\begin{align}
    \Sigma(\theta) |_{\theta=\bar{g}\theta}&= \Eb [\nabla\ell_{\theta}(x,y) \nabla \ell_{\theta}(x,y)^\top|_{\theta=\bar{g}\theta}] = \Eb [\bg\nabla\ell_{\theta}(g_X^{-1}x,g_Y^{-1}y) \nabla \ell_{\theta}(g_X^{-1}x,g_Y^{-1}y)^\top\bg^\top] \\
    &= \bg \Eb [\nabla\ell_{\theta}(x,y) \nabla \ell_{\bg\theta}(x,y)^\top]\bg^\top =\bg \Sigma( \theta) \bg^\top, \\
    \nabla^2L( \theta)|_{\theta=\bar{g}\theta} &= \Eb(  \nabla^2 \ell_{\theta}(x,y)|_{\theta=\bar{g}\theta}) =\Eb[\bg\nabla^2 \ell_\theta(g_X^{-1}x,g_Y^{-1}y)\bg^\top] \\
    &=\bg\Eb[\nabla^2\ell(x,y)]\bg^\top 
    = \bg \nabla^2L(\theta) \bg^\top.\qedhere
\end{align}
\end{proof}
Consider the flow $\Phi_t$ associated to \eqref{eq:stochapp}, that maps the initial probability measure $\mu_0$ of $\theta$ to the measure $\mu_t$ at time $t$, i.e.\ $\mu_t = \Phi_t^*\mu_0$. A first consequence of Lemma~\ref{lemma:cov} is the following equivariance of $\Phi$,
\begin{align} 
    \Phi_t^*\bg^* &= \bg^*\Phi_t^*\,. \label{eq:eqflow}
\end{align}
Here, $f^*$ denotes the pushforward operation of a map $f$. The discrete version of this equivariance guarantee was in \cite{nordenfors2024ensembles} shown to imply a weak equivariance guarantee for an \emph{ensemble} of neural networks trained with SGD. Concretely, if we initialize an ensemble using an invariant distribution, the same will be true for all times $t$, which in turn will imply that the ensemble mean will be equivariant at all times. 

\begin{proposition} \label{prop:ensemble}
Let $\mu_0$ be an invariant parameter distribution: $\bg^*\mu_0 = \mu_0$ for all $g \in G$. Let $\mu_t = \Phi_t^*\mu_0$ be the distribution obtained by independently training an ensemble of networks using \eqref{eq:stochapp} for a time $t$. Then, the ensemble-averaged network
\begin{equation}
\overline{\N}_t(x) = \mathbb{E}_{\theta\sim \mu_t}[\N_{\theta}(x)]\,.\label{eq:net-mean}
\end{equation}
is $G$-equivariant for every $t>0$.
\end{proposition}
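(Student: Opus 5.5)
The plan is to first show that $\mu_t$ stays invariant, and then push this invariance through \eqref{eq:parameter_transform} to the ensemble mean.

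\textbf{Step 1: invariance of $\mu_t$.} By \eqref{eq:eqflow}, for every $g\in G$ we have
\[
\bg^*\mu_t=\bg^*\Phi_t^*\mu_0=\Phi_t^*\bg^*\mu_0=\Phi_t^*\mu_0=\mu_t .
\]
So the only real content here is justifying \eqref{eq:eqflow} itself. To do this, I would check that if $\theta_t$ solves \eqref{eq:stochapp}, then $\tilde\theta_t=\bg\theta_t$ is again a (weak) solution of the same SDE.

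\textbf{Step 2: $\bg\theta_t$ solves \eqref{eq:stochapp}.} Since $\bg$ is linear, $\dd\tilde\theta_t=-\bg\nabla L(\theta_t)\dd t+\bg P(\theta_t)\dd W_t$.
\begin{itemize}
\item \emph{Drift.} Equation \eqref{eq:chain1}, averaged over the augmented data, gives $\nabla L(\bg\theta)=\bg\nabla L(\theta)$. Hence the drift equals $-\nabla L(\tilde\theta_t)$.
\item \emph{Diffusion.} By Lemma~\ref{lemma:cov}, $\bg P(\theta)\bigl(\bg P(\theta)\bigr)^\top=\bg\Sigma(\theta)\bg^\top=\Sigma(\bg\theta)$. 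So $\bg P(\theta_t)$ is a valid square root of $\Sigma(\tilde\theta_t)$, possibly differing from $P(\tilde\theta_t)$ by an orthogonal factor. Because the law of a weak solution depends only on the drift and on $PP^\top$, and an orthogonal rotation of Brownian motion is again Brownian motion, $\tilde\theta_t$ has the law of the solution started at $\bg^*\mu_0$. (This could equally be phrased via the Fokker--Planck generator $\mathcal{L}f=-\nabla L\cdot\nabla f+\tfrac12\Tr(\Sigma\nabla^2 f)$, which commutes with $f\mapsto f\circ\bg$ by the same two identities, using unitarity $\bg^\top=\bg^{-1}$.)
\end{itemize}
I expect this step to be the main technical obstacle. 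It needs well-posedness and uniqueness in law of \eqref{eq:stochapp}, which I would simply assume from the local Lipschitz regularity of the coefficients. It also needs the non-uniqueness of $P$ to be handled as above.

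\textbf{Step 3: equivariance of the ensemble mean.} Using \eqref{eq:parameter_transform}, then the substitution $\theta'=\bg^{-1}\theta$, and the invariance $(\bg^{-1})^*\mu_t=\mu_t$:
\[
\overline{\N}_t(g_Xx)=\Eb_{\theta\sim\mu_t}[g_Y\N_{\bg^{-1}\theta}(x)]=g_Y\,\Eb_{\theta'\sim(\bg^{-1})^*\mu_t}[\N_{\theta'}(x)]=g_Y\overline{\N}_t(x).
\]
Linearity of $g_Y$ lets it be pulled out of the expectation. Integrability of $\N_\theta(x)$ under $\mu_t$ is assumed, as implicit in the definition \eqref{eq:net-mean}. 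This holds for all $x$ and all $t>0$, which proves the proposition.
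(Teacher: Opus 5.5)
Your proof is correct. It shares the paper's skeleton: show that the flow commutes with $\bar{g}$ using the two identities $\nabla L(\bar{g}\theta)=\bar{g}\nabla L(\theta)$ and Lemma~\ref{lemma:cov}, use uniqueness to get invariance of $\mu_t$, then push through \eqref{eq:parameter_transform}. Your Step~3 is the same computation as the paper's final step.

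The difference lies in how \eqref{eq:eqflow} is justified.

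\textbf{The paper's route.} It works with the Fokker--Planck equation for the density $p_t$. In Lemma~\ref{lemma:diffops} it shows by index computations that the divergence and the second-order divergence intertwine with $\mathcal{S}_g f=f(\bar{g}\,\cdot)$. From this it deduces $\mathcal{A}\mathcal{S}_g=\mathcal{S}_g\mathcal{A}$ and concludes by uniqueness of the PDE solution.

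\textbf{Your route.} You argue at the level of the SDE: $\bar{g}\theta_t$ is a weak solution of the same equation, and uniqueness in law (well-posedness of the martingale problem) identifies its law with $\Phi_t^*\bar{g}^*\mu_0$.

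\textbf{Trade-offs.} Your route needs one observation the paper's avoids. $\bar{g}P(\theta)$ is only \emph{a} square root of $\Sigma(\bar{g}\theta)$, and you handle this correctly by noting that the law depends only on the drift and $PP^\top$. The Fokker--Planck equation involves only $D=\tfrac12\Sigma$, so this ambiguity never arises there. In exchange, your route skips the second-order index calculus and applies to arbitrary initial measures. The paper's density formulation tacitly assumes $\mu_t$ has a density, which fails for the Dirac initializations it discusses right after the proposition unless one passes to distributional solutions.

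Your parenthetical generator version, with the backward operator commuting with $f\mapsto f\circ\bar{g}$, is exactly the dual of the paper's computation. Both proofs assume the needed well-posedness rather than proving it, so they are equally rigorous on that point.
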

\begin{proof}
    See Appendix~\ref{app:proof_ensemble}.
\end{proof}

Note that this result is not very informative for a single model -- its initial distribution $\mu_0$ is a Dirac delta, and hence only invariant if it is concentrated in a point $\theta_0\in \E$ (applying the theorem in this context in fact recovers the results from \cite{nordenfors2025optimization} about single networks). Still, it is interesting that \eqref{eq:eqflow} persists in our stochastic setting, and we therefore provide a proof -- based on analysing the Fokker--Planck equation -- in Appendix~\ref{app:proof_ensemble}.

\subsection{Dynamics close to equivariant minima and the Ornstein-Uhlenbeck process}
We will now move on to study SWA~\eqref{eq:weight_averaging} more specifically. Since stochastic averages are taken along single training runs, we need to analyse the solution paths of \eqref{eq:stochapp}. We will assume that solutions converge towards equivariant minima since they are trained on augmented data. Put differently, we will study the dynamics of \eqref{eq:stochapp} near a local minimum $\theta_*$ of the cumulative loss $L$, which we furthermore assume is equivariant in the strict sense $\theta_*\in \E$ (cf. Remark \ref{rem:param_vs_functions}). To simplify the notation, we will shift the coordinates, so that $\theta_*=0$.

Near $\theta_*=0$, we can approximate $L$ with the quadratic function $L(\theta) = \tfrac{1}{2} \theta^\top H\theta$, with $H = \nabla^2 L(\theta_*)$. We can furthermore make the approximation $P(\theta)\approx P(\theta_*)=:P$. The dynamics \eqref{eq:stochapp} then become
\begin{equation}
    \dd \theta_t = -H\theta_t + P\dd W_t\,.
\end{equation}
This is the well-known \emph{Ornstein--Uhlenbeck} process. Its solution has a closed form: If the process is initialized at $\theta_0$, we have 
\begin{equation} \label{eq:OU}
    \theta_t = e^{-Ht}\theta_0 + \int_0^t e^{-H(t-s)}P \dd W_s\,.
\end{equation}

Since we are studying the dynamics at an equivariant point $\theta_*$, $H$ and $P$ decompose into block matrices with respect to the decomposition $\E \oplus \E^\perp$, as specified by the following 
\begin{lemma}
\label{lem:HP-decomp}
The matrices $H$ and $P$ commute with $\bg$. Consequently, they have a block diagonal structure: They map $\E$ to $\E$ and $\E^\perp$ to $\E^\perp$.
\end{lemma}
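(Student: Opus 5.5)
The plan is to specialise Lemma~\ref{lemma:cov} to the equivariant minimum $\theta_*=0$ and then use that $\bg$ is orthogonal. Since $\theta_*\in\E$, we have $\bg\theta_*=\theta_*$ for every $g\in G$. Evaluating both identities of Lemma~\ref{lemma:cov} at $\theta=\theta_*$ therefore gives
\begin{equation*}
\bg H\bg^\top = H,\qquad \bg\Sigma\bg^\top=\Sigma,
\end{equation*}
where $H=\nabla^2L(\theta_*)$ and $\Sigma=\Sigma(\theta_*)$. Because $\bar\rho$ is unitary (orthogonal), $\bg^\top=\bg^{-1}$. Hence $\bg H=H\bg$ and $\bg\Sigma=\Sigma\bg$ for all $g$.

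For $P$ there is a subtlety: $P$ is only defined by $PP^\top=\Sigma$, so an arbitrary factor need not commute with $\bg$. I would fix the choice $P=\Sigma^{1/2}$, the symmetric positive semidefinite square root. This choice is harmless, because the law of $P\,\dd W_t$ depends only on $PP^\top$. Since $\Sigma^{1/2}$ is a polynomial in $\Sigma$ (e.g.\ by Lagrange interpolation on its finitely many eigenvalues), anything commuting with $\Sigma$ also commutes with $\Sigma^{1/2}$. Alternatively, $\bg\Sigma^{1/2}\bg^{-1}$ is a PSD square root of $\bg\Sigma\bg^{-1}=\Sigma$, so uniqueness of the PSD square root gives $\bg P\bg^{-1}=P$. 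This step is the only real obstacle, and I would state the choice of $P$ explicitly.

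For the block structure, let $M\in\{H,P\}$ commute with every $\bg$. Then $M$ commutes with the projection $p=\frac{1}{|G|}\sum_g\bg$ from \eqref{eq:e-proj}, which is the orthogonal projection onto $T\E$. For $v\in\E$ we get $p(Mv)=Mpv=Mv$, so $Mv\in\E$. For $w\in\E^\perp$ we have $pw=0$, hence $p(Mw)=Mpw=0$, so $Mw\in\E^\perp$. Thus $M$ leaves both $\E$ and $\E^\perp$ invariant, i.e.\ it is block diagonal with respect to $\E\oplus\E^\perp$.

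Finally, because $\Lc$ is affine and $p$ is defined on $T\Lc$, I would read all these statements on the tangent space $T\Lc$. I would also note that $H$ and $\Sigma$ are the restrictions to $T\Lc$, which is $\bar\rho$-invariant by assumption, so the commutation relations survive the restriction.
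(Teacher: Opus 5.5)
Your proof is correct, but it takes a somewhat different route from the paper's.

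\textbf{Block structure.} The paper also evaluates Lemma~\ref{lemma:cov} at $\theta_*$ to get $\bg^\top H\bg=H$ and $\bg^\top\Sigma\bg=\Sigma$. It then proves the block structure by a bilinear-form computation: for $v\in\E$, $w\in\E^\perp$,
$v^\top Hw=\frac{1}{|G|}\sum_g(\bg v)^\top H\bg w=v^\top H\,p(w)=0$.
This implicitly uses the symmetry of $H$ (and of $\Sigma$) to get invariance in both directions. Your argument via $[M,p]=0$ shows that image and kernel of $p$ are preserved. It is basis-free and needs no symmetry of $M$.

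\textbf{Handling of $P$.} This is the more substantive difference. The paper never shows that $P$ commutes with $\bg$. It simply \emph{chooses} a block-diagonal factor $P=\mathrm{diag}\{P_\E,P_{\E^\perp}\}$ with $P_\E P_\E^\top=\Sigma_\E$ and $P_{\E^\perp}P_{\E^\perp}^\top=\Sigma_{\E^\perp}$. That gives the block structure used in \eqref{eq:Eperpsol}. However, an arbitrary factor $P_{\E^\perp}$ (e.g.\ a Cholesky factor) need not commute with $\bg|_{\E^\perp}$. So the paper's argument does not actually establish the first sentence of the lemma for $P$.

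Your choice $P=\Sigma^{1/2}$ closes exactly this gap. You justify it either by polynomial functional calculus or by uniqueness of the PSD square root, and you correctly note that the law of the process depends only on $PP^\top$. It is also the choice the paper implicitly needs later, where it asserts that $P$ has eigenvalues $\sqrt{\lambda_i}$ in the common eigenbasis of $H\approx\Sigma$.
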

\begin{proof}Since $\theta_*\in \E$, we have $\bg\theta_*=\theta_*$. Together with Lemma \ref{lemma:cov}, this shows that $\bg^T H\bg=H$, $\bg^T \Sigma(\theta_*)\bg= \Sigma(\theta_*)$, which is the first claim.
   
    Let $v\in \E$ and $w\in\E^\perp$. To prove the second claim, we note that by definition of $\E$, we have $\bg v = v$. Furthermore, the projection~\eqref{eq:e-proj} yields
    \begin{align}
        \frac{1}{|G|}\sum_{g\in G} \bg w  = 0\,.
    \end{align}
    We therefore obtain
    \begin{align}
        v^\top Hw =    \frac{1}{|G|}\sum_{g\in G} v^\top \bg^\top H \bg w  = \frac{1}{|G|}\sum_{g\in G} (\bg v)^\top  H \bg w = v^\top H\frac{1}{|G|}\sum_{g\in G}\bg w = 0\,,
    \end{align}
  The argument for $v^\top \Sigma w=0$ proceeds in the same manner. Since $\mathcal{E}$ and $\mathcal{E}^\perp$ are orthogonal complements, we may choose orthonormal bases $\{e_i\}$ for $\mathcal{E}$ and $\{e_i^\perp\}$ for $\mathcal{E}^\perp$, so that $\{e_i\} \cup \{e_i^\perp\}$ forms an orthonormal basis for the ambient space. In this basis, $\Sigma$ takes the block-diagonal form $\Sigma = \mathrm{diag}\{\Sigma_{\mathcal{E}}, \Sigma_{\mathcal{E}^\perp}\}$, where $\Sigma_{\mathcal{E}}$ and $\Sigma_{\mathcal{E}^\perp}$ denote the restrictions of $\Sigma$ to $\mathcal{E}$ and $\mathcal{E}^\perp$, respectively. We choose $P$ to be block-diagonal in the same basis, $P = \mathrm{diag}\{P_{\mathcal{E}}, P_{\mathcal{E}^\perp}\}$, with $P_{\mathcal{E}} P_{\mathcal{E}}^\top = \Sigma_{\mathcal{E}}$ and $P_{\mathcal{E}^\perp} P_{\mathcal{E}^\perp}^\top = \Sigma_{\mathcal{E}^\perp}$. Then $PP^\top = \Sigma$, and for any $v \in \mathcal{E}$, $w \in \mathcal{E}^\perp$, we have $v^\top P w = 0$.
\end{proof}
The previous lemma, and basic properties of the matrix exponential, immediately show that the solution \eqref{eq:OU} also decomposes: We get that $\theta = (\theta^\E, \theta^{\E^\perp})$, with 
\begin{align}
    \theta^\E_t &= e^{- H_\E t}\theta^\E_0 + \int_0^t e^{-H_\E(t-s)}P_\E \dd W_s^\E \nonumber\\
    \theta^{\E^\perp}_t &= e^{- H_{\E^\perp} t}\theta^{\E^\perp}_0 + \int_0^t e^{-H_{\E^\perp}(t-s)}P_{\E^\perp} \dd W_s^{\E^\perp}\,, \label{eq:Eperpsol}
\end{align}
for independent Brownian motions $W_s^\E, W_s^{\E^\perp}$ on $\E$ and $\E^\perp$, respectively. 

With the block structure of the Hessian, we introduce a quantity to measure the equivariance of the network. Remember that $\N_\theta$ is equivariant if $\theta\in \E$. Hence, a reasonable measure of the ``non-equivariance'' of a parameter is the norm of the projection of $\theta$ onto $\E^\perp$. By Lemma~\ref{lem:HP-decomp} (and the fact that $H\succ 0$ at the strict local minimum $\theta_*$), this is proportional to the quantity
\begin{equation} \label{eq:nonequivariance}
    L_{\E^\perp}(\theta_t) = \left(\theta_t^{\E^\perp}\right)^\top H_{\E^\perp}\theta_t^{\E^\perp}\,.
\end{equation}
Here, we measure the norm of the non-equivariant component of $\theta$ in the metric defined by $H_{\E^\perp}$ which captures the length scale of the curvature of the loss landscape. I.e.\ if $L_{\E^\perp}$ is small, then the non-equivariant component of the parameters is small in comparison to the length scale of the loss and the network is close to equivariance. 

Analogous to~\eqref{eq:r-loss}, we introduce a quantity to measure the gain in equivariance from performing SWA. Concretely, we define it as the ratio of $L_{\E^\perp}$ evaluated on averaged weights and on unaveraged weights at the end of training,
\begin{equation}
    R_{\E^\perp}(T) = \frac{\Eb[L_{\E^\perp}(\theta_T)]}{\Eb[L_{\E^\perp}(\bar{\theta}_T)]}\,.
    \label{eq:r-perp}
\end{equation}
We will in the following argue that $R_{\E^\perp}(T)$ will be large (corresponding to a large gain in equivariance) in two settings: First, we will look at the case of weights sampled far apart from each other in time. Next, we will consider a more realistic scenario of dense samples, in a specialized setting of invariant classification problems. In the latter case, we will give detailed analysis of the behaviour of $R_{\E^\perp}(T)$ in the NTK limit.

\subsection{Well-separated samples}
Since we are at a strict local minimum, the operators $H_\E$ and $H_{\E^\perp}$ are positive definite. Therefore, $e^{-H_\E t}$ and $e^{-H_{\E^\perp}t}$ will for large values of $t$ be vanishingly small. This together with the decomposition~\eqref{eq:Eperpsol} shows that $\theta_t^\E$ and $\theta_t^{\E^\perp}$ will be approximately independent of each other, and, more importantly, approximately independent of the initialization. Hence, if the sampling times $t_i=(i\Delta_t)_{i=1}^T$ in~\eqref{eq:weight_averaging} are well separated in time ($\Delta_t$ sufficiently large), the sequence of samples $\theta^{\E^\perp}_{t_i}$ will be approximately i.i.d. samples of the limit distribution
\begin{align}
    \theta_{\Delta_t}^{\E^\perp} = \int_0^{\Delta_t} e^{-H_{\E^\perp}(t-s)}P_{\E^\perp} \dd W_s^{\E^\perp}\,, 
\end{align}
which is a Gaussian with mean zero and covariance matrix 
\begin{align}
    \sigma_{\Delta_t} = \int_0^{\Delta_t} e^{-H_{\E^\perp}(t-s)}P_{\E^\perp}P_{\E^\perp}^T e^{-H_{\E^\perp}(t-s)}\dd s\,.
\end{align}
The same argument applies to $\theta^\E_{t_i}$. As an i.i.d. mean of Gaussians, the stochastic weight average $\theta^{\mathrm{avg.}}_t$ also becomes Gaussian distributed for large $T$, with covariance $\tfrac{1}{T}\sigma_{\Delta_t}$. Since the covariance of $\theta_t$ does not scale with $T$, we therefore conclude
\begin{equation}
    R(T\Delta_t) \sim T\,. \qquad \text{(separated samples)}
\end{equation}
Note that this result, while it conveys the main intuition behind why SWA improves equivariance, is not readily applicable to practical training runs. For one, it is hard to estimate a priori for how long we need to wait between taking samples in order for our approximations above to be valid. Longer wait times would furthermore increase training time. In the next section, we will therefore give a more detailed analysis that does not rely on the assumption of well-separated samples.

\subsection{Dense samples in a classification setting} 
Let us now consider the complete opposite limit of taking samples well separated in time: taking them infinitesimally often. That is, make the approximation
\begin{align}
      \overline{\theta}_T = \frac{1}{T}\sum_{i=1}^T \theta_{t_i} \approx \frac{1}{T}\int_0^T \theta_t \dd t.\label{eq:cont-avg}
\end{align}
Using standard results of stochastic calculus, one sees that $\overline{\theta}_T$ also here becomes Gaussian distributed, with mean zero, but with a more complicated expression for the covariance:
\begin{align}
    \mathrm{Cov}(\overline{\theta}_T) = \frac{1}{T^2}\int_0^T\int_0^T \int_0^{\min(s,t)} e^{-H(t-r)}PP^Te^{-H(s-r)}\,\dd r\, \dd s\, \dd t.
\end{align}
To give a direct comparison between $L(\theta_t)$ and $L(\overline{\theta}_t)$ is therefore harder than in the last section. It will depend highly on the interplay of the eigendecompositions of $H$ and $P$, and we will therefore not do it in full generality.
We instead again specialize, this time to \emph{group-invariant classification problems}. That is, the group acts invariantly on the class label $y \in \{1,\dots,n_Y\}$. As usual in such a setting, we use our network outputs to define a posterior probability distribution 
\begin{equation}
    p_\theta(y \, \vert \, x)= \mathrm{softmax}(\N_\theta(x))
\end{equation} 
over the label space $\{1, \dots, n_Y\}$. This distribution could be interpreted as a vector in the Euclidean space $\Rb^{n_Y}$. We train the network using a cross-entropy loss:
\begin{align}
    \ell_\theta(x,y) = -\log (p_\theta(y|x)).
\end{align}

We further restrict the analysis to the loss landscape near a well-fitted minimum. We claim that in this regime, the matrix $\Sigma=PP^T$ is approximately equal to the Hessian $H$. By well-fitted, we mean that the residual error $\varepsilon(x) = p_\theta(x) - e_{y(x)} \in \Rb^{n_Y}$ between the predicted distribution $p_\theta(x)= \sum_{y=1}^{n_Y} p_\theta(y \, \vert x)e_y$ and the data distribution $p_{\mathrm{data}}(y \, \vert x)=e_{y(x)}$ is small. Here, $e_k\in\mathbb{R}^{n_Y}$ is the unit vector with components $(e_k)_i=\delta_{ik}$, and $y(x)$ is the true label class for the sample $x$.

To formalize this approximation, let $J(x) = \nabla_\theta \N_\theta(x)$ be the Jacobian of the network. Using the chain rule, the gradient of the loss is
\begin{align}
    \nabla_\theta L(\theta ) 
    &= \int_{V_X} J(x)^\top(p_\theta(x)-e_{y(x)})\dd \mu_X = \int_{V_X} J(x)^\top\varepsilon(x)\dd \mu_X\,.
\end{align}
Consequently, the Hessian takes the form\footnote{Here, we use the notation $\nabla^2_\theta \N_\theta(x)$ for the tensor $\nabla^2_\theta\N(\theta)_{ijk} = \partial_{\theta_i}\partial_{\theta_j}\N_\theta(x)_k$, and denote by $\nabla_\theta^2 \N_\theta(x)\varepsilon(x)$ the contraction with $\varepsilon$ over the $k$-dimensional space.}
\begin{align} 
   H = \nabla^2 L(\theta_*) =& \underbrace{ \int_{V_X} J(x)^\top \Lambda(x) J(x) \dd \mu_X \;}_{\mathcal{G}}+\; \underbrace{\int_{V_X}\nabla_\theta^2 \N_\theta(x)\varepsilon(x)\dd \mu_X}_{\mathcal{R}}\,, \label{eq:Hessianform}
\end{align}
where we defined $\Lambda(x) = \mathrm{diag}(p_\theta(x)) - p_\theta(x)p_\theta(x)^\top$. Note that the residual term $\mathcal{R}$ is linear in the small error $\varepsilon$. Now consider the covariance matrix $\Sigma$. Applying \eqref{eq:cov-mat} and the chain rule, we obtain
\begin{align}
      \Sigma &= \int_{V_X}  \,\nabla_\theta \log p_\theta(y|x)\,\nabla_\theta \log p_\theta(y|x)^\top\,  \dd \mu_X = \int_{V_X} J(x)^\top \left[\varepsilon(x)\varepsilon(x)^\top\right] J(x)\, \dd \mu_X\,. \label{eq:sigmaform}
\end{align}
By expanding the bracket $\varepsilon(x)\varepsilon(x)^\top = \mathrm{diag}(e_{y(x)})- 2e_{y(x)}p_\theta(x)^\top + p_\theta(x)p_\theta(x)^\top$, we see that it equals $\Lambda (x) - \Delta(x)$, where $\Delta(x) = \mathrm{diag}( \varepsilon(x))-2\varepsilon(x) p_\theta(x)^\top$. Combining \eqref{eq:Hessianform} and \eqref{eq:sigmaform}, the difference between the two matrices is
\begin{align}
    H- \Sigma 
    = \mathcal{R} + \int_{V_X}\sum_y J(x)^\top\Delta(x) J(x) \dd \mu_X .
\end{align}
Since both terms on the right-hand side are linear in $\varepsilon$, we have $H \approx \Sigma$ when $\varepsilon \ll 1$. Furthermore, this structure allows us to formalize the relation between $L_{\mathrm{eq}}$ and $L_{\E^\perp}$.

Another direct outcome is that the two have a common eigenbasis, in which the dynamics decompose further into one-dimensional solutions
\begin{align}
    \theta_t^i = e^{-\lambda_it}\theta_0^i + \int_0^t e^{-\lambda_i(t-s)}\sqrt{\lambda_i}\,\dd B_s^i\,.
    \label{eq:1d-sol}
\end{align}
Here, $B_s^i$ are independent, one-dimensional Brownian motions and $\lambda_i$ are the eigenvalues of $H$ (the eigenvalues of $P$ are then $\sqrt{\lambda_i}$). This gives the following simplified form of the averaged process~\eqref{eq:cont-avg}:
\begin{align}
   \bar\theta^i_T = &\frac{1}{T}\int_0^T\theta^i_t\dd t = \frac{1}{T}\int_0^T e^{-\lambda_it}\theta_0^i \dd t+ \frac{1}{T}\int_0^T\int_0^t e^{-\lambda_i (t-s)} \sqrt{\lambda_i }\,\dd B_s^i\,\dd t\\
     =&\frac{1-e^{-\lambda_iT }}{\lambda_i T}\theta_0^i + \frac{1}{\sqrt{\lambda_i}T}\int_0^T (1 - e^{-\lambda_i(T-s)})\,\dd B_s^i\,.
     \label{eq:1d-avg-sol}
\end{align}
We will now use~\eqref{eq:1d-sol} and~\eqref{eq:1d-avg-sol} to derive an expression for $R_{\E^\perp}(T)$ from~\eqref{eq:r-perp}. Using the Itô isometry, we obtain
\begin{align}
    \Eb[\tfrac12\theta_T^i (H\theta_T)^i] &= \frac{1}{2}\lambda_i \Eb[(\theta_T^i)^2] = \frac{1}{2}\lambda_i e^{-2\lambda_iT}(\theta_0^i)^2 +  \frac{1}{2}\lambda_i^2 \int_0^T e^{-2\lambda_i(T-s)} \dd s\\
    &= \frac{\lambda_i}{2} \left( e^{-2\lambda_iT}(\theta_0^i)^2 +  \frac{1}{2} (1-e^{-2\lambda_iT})\right)\\
    \Eb[\tfrac12\bar\theta_T^i (H\bar\theta_T)^i] &= \frac{1}{2}\lambda_i \Eb[(\bar\theta_T^i)^2] =  \frac{(1-e^{-\lambda_i T})^2}{2\lambda_iT^2}(\theta_0^i)^2 +  \frac {1}{2T^2}\int_0^T (1-e^{-2\lambda_i(T-s)})^2 \dd s\\
    &=  \frac{(1-e^{-\lambda_i T})^2}{2\lambda_iT^2}(\theta_0^i)^2 + \frac{1}{2}\left(\frac{1}{T} - \frac{1-e^{-2\lambda_i T}}{\lambda_iT^2 }+ \frac{1-e^{-4\lambda_i T}}{4\lambda_iT^2}\right)\,.
    \label{eq:averaged_comp}
\end{align}
The behavior of the above expressions for large $T$ depends on whether $\lambda_i T$ is large or small. For $\lambda_i T\gg 1$, we can approximate
$e^{-c\lambda_iT}\approx 0$, so that we get
\begin{align}
 \Eb[\tfrac12\theta_T^i (H\theta_T)^i] &\approx \frac{\lambda_i}{4} \sim \lambda_i \\
\Eb[\tfrac12\bar\theta_T^i (H\bar\theta_T)^i]& \approx \frac{1}{2\lambda_iT^2}(\theta_0^i)^2 + \frac{1}{2}\left(\frac{1}{T} - \frac{1}{\lambda_iT^2 }+ \frac{1}{4\lambda_iT^2}\right)\sim \frac{1}{2T}\qquad (\lambda_iT \gg 1)\,.
\end{align}
For $\lambda_i T\ll 1$, we instead have $1-e^{-c\lambda_iT} \approx c\lambda_i T $, so that
\begin{align}
 \Eb[\tfrac12\theta_T^i (H\theta_T)^i] &\approx  \frac{\lambda_i}{2}((1-2\lambda_iT) (\theta_0^i)^2 + \lambda_i T) \sim \frac{\lambda_i}{2} (\theta_0^i)^2\\
  \Eb[\tfrac12\bar\theta_T^i (H\bar\theta_T)^i] &\approx   \frac{\lambda_i}{2}(\theta_0^i)^2 + \frac{1}{2}\left(\frac{1}{T} - \frac{2}{T}+ \frac{1}{T}\right) \sim \frac{\lambda_i}{2} (\theta_0^i)^2 \qquad (\lambda_iT \ll 1)\,.
\end{align}
Summing the above expressions over all dimensions recovers the expected losses $L(\theta_T)$ and $L(\bar{\theta}_T)$. For large $T$, we obtain
\begin{align}
    \Eb[L(\theta_T)] &= \sum_{i} \Eb[\tfrac12\theta_T^i (H\theta_T)^i]\sim  \frac{1}{2}\sum_{\substack{\lambda_iT \ll 1}} \lambda_i (\theta^i_0)^2 + \frac{1}{4}\sum_{\substack{\lambda_iT \gg 1}} \lambda_i  = \frac{1}{4}\mathrm{Tr}(H) +\epsilon_T\\
    \Eb[L(\bar\theta_T)] & = \sum_{i}  \Eb[\tfrac12\bar\theta_T^i (H\bar\theta_T)^i] \sim  \frac{1}{2}\sum_{\substack{\lambda_iT \ll 1}} \lambda_i (\theta_0^i)^2 + \frac{1}{2}\sum_{\substack{\lambda_iT \gg 1}} \frac{1}{T} = \frac{1}{2T} \mathrm{N}(1/T) + \bar \epsilon_T\,,
\end{align}
where $\mathrm{N}(1/T)$ denotes the number of large eigenvalues $\lambda_i$ with $\lambda_i\gg 1/T$ and $|\epsilon_T| , |\bar\epsilon_T|\leq \sum_{\lambda_i T\ll 1} \lambda_i \to 0$, $T\to \infty$. Hence, for large $T$,
\begin{align}
    R(T) = \frac{T}{2}\cdot \frac{\mathrm{Tr}(H) + \epsilon_T}{\mathrm{N}(1/T) + \bar\epsilon_T}. \label{eq:r-as-trH}
\end{align}
Analogously, defining $\mathrm{N}_{\E^\perp}(1/T)$ as the count of large eigenvalues in $H_{\E^\perp}$, we have 
\begin{align}
    R_{\E^\perp}(T) = \frac{T}{2}\cdot \frac{\mathrm{Tr}(H_{\E^\perp}) + \epsilon_T}{\mathrm{N}_{\E^\perp}(1/T) + \bar\epsilon_T}\,. \label{eq:rperp-as-trH}
\end{align}
These results already show that, even with dense samples, both $R(T)$ and $R_{\E^\perp}(T)$ converge to infinity as $T\to \infty$. Hence, stochastic weight averaging encourages equivariance and boosts performance at the same time.

This begs the question: How much of the equivariance gain from stochastic weight averaging simply comes from a generic improvement in overall performance? To measure this, we should study the ratio between $R_{\E^\perp}(T)$ and $R(T)$, i.e.
\begin{align}
   \frac{R_{\E^\perp}(T)}{R(T)} = \frac{\mathrm{N}(1/T) + \epsilon_T'}{\mathrm{N}_{\E^\perp}(1/T) + \bar\epsilon_T}\frac{\mathrm{Tr}(H_{\E^\perp}) + \bar\epsilon_T}{\mathrm{Tr}(H) + \epsilon_T'}\,.
\end{align}
Asymptotically for $T\to \infty$, $\mathrm{N}(1/T)$ and $\mathrm{N}_{\E^\perp}(1/T)$ will count all dimensions in $\Lc$ and $\E^\perp$, respectively, and the $\epsilon$-terms converge to zero. Consequently,
\begin{align}
    \lim_{T\to \infty }  \frac{R_{\E^\perp}(T)}{R(T)} = \frac{\dim \Lc}{\dim \E^\perp }\frac{\mathrm{Tr}(H_{\E^\perp})}{\mathrm{Tr}(H)} \,.\label{eq:inf-equiv-improv-ratio}
\end{align}

In the next section, we will derive a bound on this ratio in the NTK-, or \emph{infinitely wide}, limit \citep{jacot2018}.
\subsection{A refined analysis in the NTK limit}
We begin by specifying the theoretical setting. We consider a \emph{base architecture} as above, i.e.\ with input space $X$, intermediate spaces $V_l$ and output space $Y$ on which representations $\rho_X$, $\rho_l$ and $\rho_Y$ act. We increase the network's width by a factor $N$ by taking $N$ direct sums of the intermediate spaces, i.e.\  $V_l \to V_l^{\oplus N}$. The group then acts by the direct sum representation on the enlarged spaces, $\rho_l \to \rho_l^{\oplus N} $. We will consider the limit $N\to \infty$. 

\begin{figure}
    \centering
    \begin{tikzpicture}[
    feature/.style={
        draw,
        thick,
        minimum width=0.45cm,
        minimum height=1.8cm
    },
    smallfeat/.style={
        draw,
        thick,
        minimum width=0.35cm,
        minimum height=0.7cm
    },
    rho/.style={
        above=3mm,
        font=\small
    },
     smallrho/.style={
        above=.6mm,
        font=\tiny
    },
    hdots/.style={
        below=.1mm,
    },
    >=latex,
]


\node[feature] (f0) at (0,0) {$X$};
\node[rho] at (f0.north) {$\rho_0$};

\node[feature,right=1cm of f0] (f1) {$V_1$};
\node[rho] at (f1.north) {$\rho_1$};

\node (inbet) at (3,0) {$\hdots$};

\node[feature,right=2cm of f1] (f2) {$V_L$};
\node[rho] at (f2.north) {$\rho_2$};

\node[feature,right=1cm of f2] (f3) {$Y$};
\node[rho] at (f3.north) {$\rho_3$};

\draw[->] (f0) -- (f1);
\draw[->] (f1) -- (inbet);
\draw[->] (inbet) -- (f2);
\draw[->] (f2) -- (f3);


\begin{scope}[xshift=8cm]

\node[feature] (g0) at (0,0) {$X$};
\node[rho] at (g0.north) {$\rho_0$};

\node[smallfeat] (g1a) at (2, .9) {};
\node at (2,.9) {{\tiny $V_1$}};
\node[smallfeat] (g1b) at (2,-1) {};
\node at (2,-1) {{\tiny $V_1$}};
\node[smallrho]  at (g1a.north) {$\rho_1$};
\node[hdots] at (g1a.south) {$\vdots$};
\node[smallrho] at (g1b.north) {$\rho_1$};

\node (inbeta) at (3,.9) {$\hdots$};
\node (inbeta1) at (2.7,.9) {};
\node (inbeta2) at (3.3,.9) {};

\node (inbetc) at (3,0) {$\hdots$};
\node (inbetc1) at (2.7,0) {};
\node (inbetc2) at (3.3,0) {};

\node (inbetb) at (3,-1) {$\hdots$};
\node (inbetb1) at (2.7,-1) {};
\node (inbetb2) at (3.3,-1) {};

\node[smallfeat] (g2a) at (4, .9) {};
\node at (4,.9) {{\tiny $V_L$}};
\node[smallfeat] (g2b) at (4,-1) {};
\node at (4,-1) {{\tiny $V_L$}};
\node[smallrho]  at (g2a.north) {$\rho_L$};
\node[hdots] at (g2a.south) {$\vdots$};
\node[smallrho] at (g2b.north) {$\rho_L$};

\node[feature] (g3) at (6,0) {$Y$};
\node[rho] at (g3.north) {$\rho_3$};

\draw[->] (g0) -- (g1a);
\draw[->] (g0) -- (g1b);

\draw[->] (g1a) -- (inbeta1);
\draw[->] (g1b) -- (inbetb1);
\draw[->] (g1a) -- (inbetb1);
\draw[->] (g1b) -- (inbeta1);
\draw[->] (g1a) -- (inbetc1);
\draw[->] (g1b) -- (inbetc1);

\draw[->] (inbeta2) -- (g2a);
\draw[->] (inbetb2) -- (g2b);
\draw[->] (inbeta2) -- (g2b);
\draw[->] (inbetb2) -- (g2a);
\draw[->] (inbetc2) -- (g2b);
\draw[->] (inbetc2) -- (g2a);

\draw[->] (g2a) -- (g3);
\draw[->] (g2b) -- (g3);

\end{scope}

\end{tikzpicture}
    \caption{Definition of the NTK limit we consider here. On the left is the 'base architecture', on the right is the extended architecture by a factor $N$.}
    \label{fig:NTK-limit}
    \vspace{-2em}
\end{figure}

 As finite-dimensional representations of a finite group, the actions $\rho_l$ on the base intermediate spaces decompose into direct sums of irreducible representations (irreps) $\rho_\lambda$ as~\citep{fulton1991}
\begin{align}
\rho_l = \bigoplus_{\lambda} \rho_\lambda^{\oplus m_{\lambda}^l}\,,\label{eq:irrep-decomp}
\end{align}
where $m^l_\lambda\in\mathbb{N}_0$ denotes the multiplicity of the irrep $\lambda$ and the sum runs over all irreps of $G$. In particular,  \(n_l = \dim V_l = \dim\rho_l = \sum_\lambda m_\lambda^l \dim \rho_\lambda\). The extended representations decompose in the same way, the only thing that changes are the multiplicities: $m_\lambda^l \to Nm_\lambda^l$.

Now, the base equivariant parameters $\theta\in \E$ satisfy $g_{l+1}W^l = W^l g_l$. That is, they belong to the space $\mathrm{Hom}(\rho_l, \rho_{l+1})$. By (the real version of) Schur's lemma, that space is
\begin{equation}
    \mathrm{Hom}(\rho_l, \rho_{l+1}) \cong \{\oplus_\lambda A^l_\lambda \otimes E_{\lambda}:A_\lambda^l\in \Rb^{m_\lambda^{l+1}\times m_{\lambda}^l}\}, \label{eq:schur-decomp} 
\end{equation}
where $E_\lambda$ is the space of equivariant endomorphisms on $\rho_\lambda$. By a well-known theorem of Frobenius, $E_\lambda$ has real dimension $1$, $2$ or $4$, depending on whether the irrep $\rho_\lambda$ is of \emph{real}, \emph{complex} or \emph{quaternionic} type \citep{fulton1991}. To keep the exposition readable, we will in the following concentrate on the case of all $\rho_\lambda$ being real, so that $E_\lambda=\mathrm{span}\{\id\}$. We will comment on the minor adjustment needed in the general case at the very end.

In the case of $E_\lambda = \mathrm{span}\{\id\}$, the dimension of the space of equivariant parameters at layer $l$ is
\begin{align}
    \dim\E^l_{\mathrm{base}}=\sum_\lambda m_\lambda^{l+1}m_{\lambda}^l\,.
\end{align}
The dimension of the space of all parameters at base layer $l$ is on the other hand given by
\begin{align}
    \dim\Lc^l_{\mathrm{base}}=n_{l+1}n_l = \left(\sum_\lambda m_\lambda^{l+1}\dim \rho_\lambda \right)\left(\sum_{\lambda'} m_{\lambda'}^l \dim\rho_{\lambda'}\right)\,.
\end{align}
Since $G$ is a finite group, $\sum_\lambda (\dim \rho_\lambda)^2=|G|$ (see e.g.~\cite{fulton1991}) and we obtain the following bound by using Cauchy--Schwarz:
\begin{align}
    \dim \Lc^l_{\mathrm{base}} \leq &|G|\left(\sum_\lambda (m_\lambda^{l+1})^2\right)^\frac{1}{2}\left(\sum_{\lambda} (m_{\lambda}^{l})^2\right)^\frac{1}{2}
\end{align}
We now assume that the intermediate spaces all scale with the same rate in $N$ so that the ratios of their widths stay fixed. Formally, $r^{-1}\leq m_\lambda^{l+1} / m_\lambda^l\leq r$ for all $\lambda$ and $l>1$ for some $r\geq 1$. This gives
\begin{align}
    \dim \Lc^l_{\mathrm{base}}\leq r|G|\dim \E_{\mathrm{base}}^l\,.
\end{align}
Since $\dim \E_{\mathrm{ext}}^l = N^2\dim \E_{\mathrm{base}}^l$ for $1\leq l\leq L-1$, and similarly for $\dim \Lc_{\mathrm{ext}}^l$, this inequality remains valid as we take the limit. However, it does not apply to the first and last layers whose parameters involve the fixed input and output representations. We circumvent this issue by a simple argument,
\begin{align}
\dim \Lc^0_{\mathrm{ext}}&= n_X \cdot (Nn_1) = \frac{n_X}{Nn_2}\cdot (Nn_1)\cdot(Nn_2) = \frac{n_X}{Nn_2}\cdot \dim \Lc^1_{\mathrm{ext}} \\ 
&\leq \frac{n_X}{Nn_2}\sum_{l=1}^{L-1}\dim \Lc^l_{\mathrm{ext}}\leq \frac{n_Xr|G|}{Nn_2}\sum_{l=1}^{L-1}\dim \E^l_{\mathrm{ext}}\leq \frac{n_Xr|G|}{N}\dim\E_{\mathrm{ext}}\,
\end{align}
and similarly
\begin{align}
    \dim \Lc^L_{\mathrm{ext}}&=  (Nn_L)n_Y = \frac{n_Y}{Nn_{L-1}}\cdot (Nn_{L-1})\cdot(Nn_L) = \frac{n_Y}{Nn_{L-1}}\cdot \dim \Lc^{L-1}_{\mathrm{ext}} \\ 
&\leq \frac{n_Y}{Nn_{L-1}}\sum_{l=1}^{L-1}\dim \Lc^l_{\mathrm{ext}}\leq \frac{n_Xr|G|}{Nn_{L-1}}\sum_{l=1}^{L-1}\dim \E^l_{\mathrm{ext}}\leq \frac{n_Yr|G|}{N}\dim\E_{\mathrm{ext}}\,
\end{align}

We conclude that 
\begin{equation}
    \frac{\dim \E_{\mathrm{ext}} }{ \dim \Lc_{\mathrm{ext}}} =\frac{\dim\E_{\mathrm{ext}}}{\sum_{l=1}^{L-1}\dim\Lc_l + \dim \Lc^0_{\mathrm{ext}} + \dim \Lc^L_{\mathrm{ext}}}\geq \frac{1}{(1+(n_X+n_Y)N^{-1})r|G|}\sim \frac{1}{r|G|}
\end{equation}
for large $N$. Therefore,
\begin{equation}
    \frac{\dim \E^\perp_{\mathrm{ext}}}{\dim\Lc_{\mathrm{ext}}}\leq 1-\frac{ 1}{r|G|}
    \qquad\Rightarrow\qquad
    \frac{\dim\Lc_{\mathrm{ext}}}{\dim \E^\perp_{\mathrm{ext}}}\geq \frac{r|G|}{r|G|-1}
    \label{eq:dimratio}
\end{equation}
for large $N$. Note that $r$ is a constant that depends on the network structure but does not scale with $N$.
\subsubsection{Ratio of the traces in the infinite-width limit}
From here we consider the network training in the infinite-width limit. Namely, denoting the network width scaling as $N$, the learning rate is $O_N(1)$, while the weights are initialized as $W_{ij}^l\sim\N(0,{c_l^2})$. Under this condition and mild assumptions on the nonlinearity, the pre-activations $h^l(x)$ converge to Gaussian processes with covariance kernel
\begin{equation}
\Sigma^{(l)}(x,x') = \fs(x,x')\mathrm{Id}_{Nn_l}.    
\end{equation}
Furthermore,
the neural tangent kernel (NTK), defined as 
\begin{equation}
    {K}(x,x') = J(x) J(x')^\top,
\end{equation}
converges in operator norm to ${K}_\infty(x,x')= \mathsf{K}(x, x')\, I_Y$ for some symmetric positive semidefinite kernel $\mathsf{K}: \mathbb{R}^{{n_X}} \times \mathbb{R}^{n_X} \to \mathbb{R}$ 
with $\Tr (\mathsf{K}) = \int_X \mathsf{K}(x, x)p_\mathrm{data}(x)\dd x < \infty$ \citep{jacot2018}. 

The limiting kernel can be obtained recursively layer-by-layer. For an MLP with nonlinearities $\sigma_l$, the recursion relations are given by
\begin{align}
\mathsf{\Sigma}^{(0)}(x,x') &= \frac{c_0^2}{n_X} x^\top x'\,,  \qquad
 \mathsf{K}^{(0)}(x,x') = 0\label{eq:ntk-recursion-init}\\
 \fs^{(l+1)}(x,x') &=  \, c^2_{l+1}\mathbb{E}_{u,u'\sim\fs^{(l)}(x,x')}\big[\sigma_{l}(u)\sigma_{l}(u')\big],\\
 \dot{{\fs}}^{(l+1)}(x,x') &=  \, c^2_{l+1}\mathbb{E}_{u,u'\sim\fs^{(l)}(x,x')}\big[\sigma_{l}'(u)\sigma_{l}'(u')\big], \\
\fk^{(l+1)}(x,x') &= \fk^{(l)}(x,x') \dot{\fs}^{(l+1)}(x,x') + \fs^{(l+1)}(x,x'),\label{eq:ntk_recursive}
\end{align}
where, by $\mathbb{E}_{u,u'\sim\fs^{(l)}(x,x')}$ we denote the expectation value with respect to
\begin{equation}
(u,u')\sim \N\left(0,\begin{pmatrix}
    \fs^{(l)}(x,x) & \fs^{(l)}(x,x')\\
    \fs^{(l)}(x',x) & \fs^{(l)}(x',x')
\end{pmatrix}\right)\,.
\end{equation}
\begin{remark}
    To simplify the notation, we will normalize $c_l=1$ from here on.
\end{remark}
We introduce the label-smoothed cross entropy loss, introduced by \citet{szegedy2015rethinkinginceptionarchitecturecomputer}, to avoid the degenerate Hessian of the one-hot cross-entropy (the derivatives from the non ground-truth classes are 0 in this case). The dependency on the smoothing parameter $\alpha$ will drop out at the end of our calculation.
\begin{lemma} \label{lem:crossentropy}
Consider the label-smoothed cross entropy loss
\begin{align}
\mathcal{L}_\alpha(\theta) 
=& (1-\alpha) L_{\mathrm{CrossEntropy}}(\theta) +\alpha \int_X\sum_y -\frac{1}{n_Y}\log p_\theta(y|x)\dd x
\end{align}
with smoothing parameter $\alpha \in (0,1)$. Define
\begin{equation}
    y_x^\alpha =(1- \alpha)e_y + \frac{\alpha}{n_Y}\mathbf{1}\,,
\end{equation}
where $\mathbf{1}$ is the vector with all components one. Assume that the smoothed labels approximate the data well, i.e.\ $p(x) \approx y_x^{\alpha}$ and so $\epsilon(y,x)\approx0$ for all $x$, thus $H\approx\mathcal{G}=\int_XJ^\top(x)\Lambda(x)J(x)\dd x$ by Eq.~\eqref{eq:Hessianform}. Then:
\begin{equation}\lim_{m\to\infty}\frac{\Tr (H)}{\Tr(\fk)} = \frac{\alpha(2-\alpha)(n_Y-1)}{n_Y}=:\tau_{\alpha}\,.
\end{equation}
\end{lemma}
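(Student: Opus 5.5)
We start from the Gauss--Newton form of the Hessian, which the assumption $p_\theta(x)\approx y^\alpha_x$ justifies: $H\approx\mathcal{G}=\int_X J(x)^\top\Lambda(x)J(x)\,\dd x$. We apply the cyclic property of the trace pointwise in $x$ to obtain
\begin{equation}
\Tr(H)\approx\int_X \Tr\big(\Lambda(x)\,J(x)J(x)^\top\big)\dd x=\int_X\Tr\big(\Lambda(x)K(x,x)\big)\dd x .
\end{equation}
In the infinite-width limit, $K(x,x)\to\fk(x,x)I_Y$ in operator norm. For every $x$, $\Lambda(x)$ is a fixed bounded matrix, since its entries lie in $[-1,1]$, so the integrand converges to $\fk(x,x)\Tr\Lambda(x)$. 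Because $\Omega$ is compact and the kernel is trace class, dominated convergence lets us exchange the limit with the integral. The limit is therefore $\int_X \fk(x,x)\Tr\Lambda(x)\,\dd x$.

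Next we compute $\Tr\Lambda(x)$ at the fitted point $p=y^\alpha_x$. We have $\Tr\Lambda=\sum_i p_i-\|p\|^2=1-\|p\|^2$. The smoothed label has one entry equal to $1-\alpha+\alpha/n_Y$ and $n_Y-1$ entries equal to $\alpha/n_Y$, so
\begin{equation}
\|p\|^2=(1-\alpha)^2+\tfrac{2(1-\alpha)\alpha}{n_Y}+\tfrac{\alpha^2}{n_Y}.
\end{equation}
It then remains to verify $1-\|p\|^2=\alpha(2-\alpha)-\tfrac{\alpha(2-\alpha)}{n_Y}=\alpha(2-\alpha)\tfrac{n_Y-1}{n_Y}=\tau_\alpha$. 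This quantity does not depend on $x$ or on the label $y$ (that is, on which coordinate is large), so it factors out of the integral. What remains is $\int_X\fk(x,x)\,\dd x=\Tr(\fk)$, and dividing by it gives the claim. The limit $m\to\infty$ in the statement is interpreted as the width limit $N\to\infty$, and the approximation $H\approx\mathcal{G}$ is taken as exact in the sense of the hypothesis.

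The main obstacle is the interchange of the width limit with the integral, together with the fact that $\Lambda$ is evaluated at the network's own $p_\theta$, which in principle also depends on $N$. We first rely on the uniform bound $|\Tr(\Lambda K)|\le\|\Lambda\|\,\Tr K(x,x)\le n_Y\|\Lambda\|\,\|K(x,x)\|$. Combined with the uniform convergence of the NTK on the compact $\Omega$ (Jacot et al.), this gives the needed domination. Because we work under the hypothesis $p\approx y^\alpha_x$ uniformly, $\Lambda(x)$ may be replaced by its fixed value at $y^\alpha_x$ up to an error that is controlled by the same bound.
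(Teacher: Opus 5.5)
Your proposal is correct and follows the paper's proof essentially step for step. The cyclic property of the trace turns $\Tr(H)\approx\Tr(\mathcal{G})$ into $\int_X\Tr(\Lambda(x)K(x,x))$, and the NTK limit $K(x,x)\to\fk(x,x)I_Y$ pulls out $\Tr\Lambda(x)$. Then $\Tr\Lambda(x)=1-\|y^\alpha_x\|^2=\tau_\alpha$ is independent of $x$ and $y$, which leaves $\tau_\alpha\Tr(\fk)$. The only difference is that you justify exchanging the width limit with the integral by a domination bound, a step the paper leaves implicit. Jacot et al.\ prove convergence only for fixed inputs, so uniformity over $\Omega$ is an extra assumption rather than something you can cite.
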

\begin{proof}
By assumption,
\begin{align}
\|p(x)\|^2\approx ||y_x ^\alpha||^2
&= \left((1-\alpha) + \tfrac{\alpha}{n_Y}\right)^2 + (n_Y-1) \left(\tfrac{\alpha}{n_Y}\right)^2 = (1-\alpha)^2 + \tfrac{2\alpha(1-\alpha)}{n_Y} + \tfrac{\alpha^2}{n_Y} \\
&= 1 - 2\alpha + \alpha^2 + \tfrac{2\alpha - 2\alpha^2 + \alpha^2}{n_Y} = 1 - \tfrac{\alpha(2-\alpha)(n_Y-1)}{n_Y}.
\end{align}
Therefore
\begin{equation}
\Tr (\Lambda(x)) = 1 - \|p(x)\|^2 = \frac{\alpha(2-\alpha)(n_Y-1)}{n_Y}=\tau_{\alpha}.
\end{equation}
And
\begin{align}
\Tr (H) 
= &\int_X\Tr (J(x)^\top \Lambda(x) J(x))\dd \mu_X 
= \int_X\Tr (\Lambda(x) J(x) J(x)^\top)\dd \mu_X\\
=&\int_X\Tr (\Lambda(x) K(x,x))\dd \mu_X
\stackrel{N\rightarrow\infty}{\longrightarrow} \int_X\fk(x,x)\Tr(\Lambda(x))\dd \mu_X
= \tau_\alpha\Tr(\fk)
\end{align}
\end{proof}
We would now like to apply NTK theory to also estimate $\Tr(H_\mathcal{E})$ in the limit. To this end, we consider the \emph{equivariant subnetwork} of the full network
\begin{align}
    \N^{\mathrm{eq}}_\theta(x) = \N_{P_\mathcal{E}\theta}(x).
\end{align}
Where $P_\E$ represents the orthogonal projection from $\Lc$ onto $\E$ with respect to the standard inner product on the parameter coordinates. It takes a simple calculation to argue that at points $\theta_*\in \mathcal{E}$, $H_\mathcal{E}$ is the Hessian of the network $\N^{\mathrm{eq}}$. In order to repeat the above calculation for $H_\mathcal{E}$, we hence need to study the NTK function $\fk_\E$ of the network $\N^{\mathrm{eq}}$, which is the focus of the next section.

\subsubsection{NTK for an equivariant subnetwork}
\label{sec:ntk_eq}
We begin with a detailed discussion on the irrep decomposition introduced in~\eqref{eq:schur-decomp}.

\textbf{Standard basis.} We work in the setting discussed above: Intermediate pre-activations $x^l$ and post-activations $h^l$ live in $V_l\cong\mathbb R^{n_l}$, carrying a representation $\rho^l$ of $G$. The network's actual coordinates — the ones in which the nonlinearity $\sigma$ is applied component-wise, and in which the weight matrices $W^l$ have i.i.d.\ Gaussian entries before any equivariance constraint — are the \emph{standard basis} coordinates, which we write $x^l\in\mathbb R^{n_l}$. This standard basis is generally not adapted to $\rho^l$.

\textbf{Change of basis.} Fix an orthogonal matrix $P^l\in O(n_l)$ that block-diagonalizes $\rho^l$ into its irrep decomposition:
\begin{equation}
    \big(P^l\big)^\top\,\rho^l(g)\,P^l = \bigoplus_\lambda \rho_\lambda(g)^{\oplus m_\lambda^l} \qquad \text{for all } g\in G,
\end{equation}
so that $V_l=\bigoplus_\lambda\rho_\lambda^{\oplus m_\lambda^l}$ in the basis produced by $P^l$. We call this the \emph{irrep-adapted basis}, and define the irrep coordinates of an arbitrary vector $v$ by
\begin{equation}
    \Tilde{v}^l_{\lambda,k,a} = \big(P^l v^l\big)_{\lambda,k,a}\,,
\end{equation}
where $\lambda$ indexes the irreps, $k\in[m_\lambda^l]$ indexes the multiplicities of each irrep $\rho_\lambda$, and $a\in[d_\lambda]$, $d_\lambda=\dim\rho_\lambda$, indexes the basis inside the irrep $\rho_\lambda$ itself. Every quantity subscripted $(\lambda,k,a)$ below lives in this irrep-adapted basis. Since $P^l$ is orthogonal, this change of basis preserves inner products and hence the (isotropic) Gaussian law of the unconstrained weights.

After enlarging by $N$ times, we acquire a new index $j$, referring to the $N$ copies of the base space $\mathbb R^{n_l}$. Note that after enlarging by $N$ times, $P^l$ is applied channel-wise, i.e.\ we have
\begin{equation}
    \Tilde{v}^l_{\lambda,j,k,a}=\big(P^l v^l_{j,\cdot}\big)_{\lambda,k,a}\,.
\end{equation}

\textbf{The nonlinearity on the full space.} The activation function $\sigma:\mathbb R\to\mathbb R$ is applied entrywise \emph{only} in the standard basis: $x^{l+1}= \sigma(h^l)$ means $(x^{l+1})_i = \sigma\big((h^l)_i\big)$ for each standard coordinate $i\in[n_{l+1}]$. Passing to irrep coordinates via $P^l$, we define
\begin{equation}
    \varsigma: V^l\rightarrow V^l,\quad x^l\mapsto P^l\,\sigma\big(\big(P^l\big)^\top x^l\big)\,,
\end{equation}
i.e.\ $\varsigma$ is $\sigma$ conjugated by the change of basis. Because $P^l$ mixes standard coordinates across different $(\lambda,k,a)$, $\varsigma$ is in general not a component-wise map on $V_l$. However, by the block structure of $P^l$ across channels, $\varsigma$ is still defined channel-by-channel on each copy of the base space $V^l$, even after we extend it by $N$ times: $\varsigma$ acting on the full layer decomposes as $N$ independent copies of the same base map, so the post-activations $\tix^{l+1}_{\lambda,j,k,a}=\varsigma(\tih^l_{j,\cdot})_{\lambda,k,a}$ depend only on channel $j$'s own pre-activation and never on any other channel $j'\neq j$. This allows us to treat the $N$ channels as conditionally i.i.d.\ throughout.

With this notation in place, the non-equivariant network's irrep coordinates across consecutive layers are related through
\begin{align}
    \tih^0_{\lambda,j,k,a} &= \frac{1}{\sqrt{n^0}}\sum_{\lambda', k',a'}\Tilde{W}^0_{\lambda\lambda',j,kk',aa'} \tix^0_{\lambda',k',a'} \\
    \tih^{l}_{\lambda,j,k,a} &= \frac{1}{\sqrt{n^{l}N}}\sum_{j',\lambda', k',a'}\Tilde{W}^{l}_{\lambda\lambda',jj',kk',aa'}\tix^l_{\lambda',j',k',a'}\label{eq:linrel}\\
    \tix^{l+1}_{\lambda,j,k,a} &= \varsigma(\tih^l_{j,\cdot})_{\lambda,k,a}\,,
\end{align}
where $\Tilde{W}^l_{\lambda\lambda',jj',kk',aa'}$ are $W^l$ with a change of basis, hence are still standard Gaussian matrices (entries i.i.d.\ $\mathcal N(0,1)$).

\textbf{Equivariant weights.} For the equivariant subnetwork, the weights $\Tilde{W}^l$ are further restricted to intertwiners $A^l$. Since the equivariant weights are obtained by orthogonally projecting the isotropically Gaussian weights of the non-equivariant network onto the intertwiner subspace, they are themselves isotropic Gaussians on the equivariant subspace: Within each irrep $\rho_\lambda$, Schur's lemma forces the weight to be shared across the $d_\lambda$ copies indexed by $a$, so the projection reduces to a single scalar $A^l_{\lambda,jj',kk'}\sim \N(0,1)$ per $(\lambda,j,j',k,k')$, i.i.d.\ across all indices and layers. The overall prefactor is not constrained by Schur's lemma, but instead determined by the norm-preserving orthogonal projection. With this, relation \eqref{eq:linrel} becomes, for the equivariant network,
\begin{align}
    \tih^{0}_{\lambda,j,k,a} &=\frac{1}{\sqrt{ n^0d_\lambda}} \sum_{k'} A_{\lambda,j,kk'}^0\,\tix^{0}_{\lambda,k',a}, \\
    \tih^{l}_{\lambda,j,k,a} &= \frac{1}{\sqrt{n^{l}Nd_\lambda}}\sum_{j',k'}  A_{\lambda,jj',kk'}^l\,\tix^{l}_{\lambda,j',k',a}. \label{eq:eqlinrel}
\end{align} In particular, conditioned on the previous layer, the $\tix^{l+1}_{\lambda,j,k,a}$ are identically distributed for same $\lambda$, $k$ and $a$ but different $j$; and are independent for different $j$, $\lambda$, and $k$, since they are built from disjoint sets of weights.

Analogous to the standard MLP case, as $N\to\infty$, the pre-activations in irrep-adapted bases $\tih^l$ converge to Gaussian processes with kernels
\begin{align}
    \Sigma_\E^l(x, x')_{(\lambda,j,k,a)(\lambda',j',k',a')}=\Eb\big[\tih^l_{\lambda,j,k,a}(x)\,\tih^l_{\lambda',j',k',a'}(x')\big],\label{eq:equiv-nngp}
\end{align}
and the NTK of the equivariant subnetwork converges to a deterministic limit,
\begin{align}
    K_\E^l(x, x')_{(\lambda,j,k,a)(\lambda',j',k',a')}=\Eb\big[\big(\nabla\tih^l_{\lambda,j,k,a}(x)\big)^\top\,\nabla\tih^l_{\lambda',j',k',a'}(x')\big]\,.\label{eq:equiv-ntk}
\end{align}
It is worth clarifying that $\Sigma_\E^l,K_\E^l:\mathbb R^{n_X}\times\mathbb R^{n_X}\to\mathbb R^{Nn_{l+1}\times Nn_{l+1}}$ are the matrix-valued equivariant NNGP kernel and NTK. Their ambient dimension $Nn_{l+1}\times Nn_{l+1}$ grows with $N$, so \emph{a priori} ``convergence as $N\to\infty$'' is not meaningful, since the kernels for different $N$ live in different spaces. Lemma~\ref{lem:kernelrec} resolves this. It shows that $\Sigma_\E^l$ and $K_\E^l$ are, for every $N$, entirely determined by a fixed collection of $d_\lambda\times d_\lambda$ blocks $\fs_\lambda^l,\fk_\lambda^l$, which are independent of $N$ and $j,k$.
\begin{lemma}\label{lem:kernelrec}
The kernels $\Sigma_\E^l$ and $K^l_\E$ admit the following block structure,
\begin{align}
    \big(\Sigma_\E^l\big)_{(\lambda,j,k,a),(\lambda',j',k',a')}(x,x')&=\delta_{\lambda\lambda'}\delta_{jj'}\delta_{kk'}\big(\fs^l_\lambda\big)_{aa'}(x,x')\\
    \big(K_\E^{l}\big)_{(\lambda,j,k,a),(\lambda',j',k',a')}(x,x')&=\delta_{\lambda\lambda'}\delta_{jj'}\delta_{kk'}\big(\fk^l_\lambda\big)_{aa'}(x,x'),
\end{align}
where each $\fs^l_\lambda$ and $\fk^l_\lambda$ are $d_\lambda$-dimensional matrices. Moreover, they have the following recursive formulas,
\begin{align}
    \big(\fs_\lambda^0\big)_{aa'}(x,x') &= \frac{1}{n^0d_\lambda}\sum_{\bar k} \tix_{\lambda,\bar k,a}\,\tix'_{\lambda,\bar k,a'},\\
    \big(\fs_\lambda^{l+1}\big)_{aa'}(x, x') &= \frac{m_\lambda^{l+1}}{n_{l+1}d_\lambda}\,\mathbb E_{u,u'\sim \fs_\lambda^l(x,x')}\big[\varsigma(u)_{a}\,\varsigma(u')_{a'}\big],\\
    \fk_\lambda^0(x,x') &=\fs_\lambda^0(x,x') ,\\
    \big(\fk_\lambda^{l+1}\big)_{a a'}(x,x') &= \big(\fs_\lambda^{l+1}\big)_{a a'}(x,x') {+} \frac{m^{l+1}_\lambda}{n_{l+1}d_\lambda}\,\mathbb E_{u,u'\sim \fs_\lambda^l(x,x')}\big[\varsigma'(u)_{a}\,\,\varsigma'(u')_{a'}\big]\fk_\lambda^l(x, x'),\label{eq:kernelrec}
\end{align}
where $u,u'\sim\fs^l_\lambda$ denotes $(u,u')\sim\mathcal N\!\left(0,\begin{bmatrix}\fs_\lambda^l(x,x)&\fs_\lambda^l(x,x')\\\fs_\lambda^l(x',x)&\fs_\lambda^l( x',x')\end{bmatrix}\right)$.
\end{lemma}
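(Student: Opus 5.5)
The proof is an induction on the layer index $l$. It mirrors the standard derivation of the NTK recursion of \citet{jacot2018}, but is carried out in the irrep-adapted coordinates and uses the equivariant linear relations~\eqref{eq:eqlinrel}.

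\textbf{Base case.} At $l=0$, the preactivation $\tih^0_{\lambda,j,k,a}$ is a linear combination of the i.i.d.\ standard Gaussians $A^0_{\lambda,j,kk'}$ with deterministic coefficients $\tix^0_{\lambda,k',a}$. Hence $\tih^0$ is exactly Gaussian. Computing $\Eb[\tih^0_{\lambda,j,k,a}(x)\tih^0_{\lambda',j',k',a'}(x')]$ via $\Eb[A^0_{\lambda,j,k\bar k}A^0_{\lambda',j',k'\bar k'}]=\delta_{\lambda\lambda'}\delta_{jj'}\delta_{kk'}\delta_{\bar k\bar k'}$ gives the claimed $\delta$-structure together with the formula for $\fs^0_\lambda$. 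Because $\tih^0$ is linear in $A^0$, its gradient with respect to $A^0$ reproduces the same sum. Therefore $\fk^0_\lambda=\fs^0_\lambda$.

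\textbf{NNGP step.} Assume the claim holds at layer $l$. Conditioned on layer $l$, the variables $\tih^{l+1}$ are again Gaussian in the fresh weights $A^{l+1}$. Their conditional covariance is
\[
\delta_{\lambda\lambda'}\delta_{jj'}\delta_{kk'}\frac{1}{n_{l+1}Nd_\lambda}\sum_{j',k'}\tix^{l+1}_{\lambda,j',k',a}(x)\,\tix^{l+1}_{\lambda,j',k',a'}(x').
\]
By the inductive hypothesis, the channels $j'$ are i.i.d.\ with $\tih^l_{j',\cdot}$ distributed as $\mathcal N(0,\fs^l)$, where $\fs^l$ is block diagonal over $(\lambda,k)$. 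Moreover, $\tix^{l+1}_{j'}=\varsigma(\tih^l_{j'})$ depends only on channel $j'$. The law of large numbers in $j'$ therefore turns $\frac1N\sum_{j'}$ into an expectation. The remaining sum over $k'$ contributes a factor $m^{l+1}_\lambda$, provided the summands are identically distributed in $k'$; the index bookkeeping must be handled carefully so that the multiplicity indexing matches the statement's normalization $\frac{m^{l+1}_\lambda}{n_{l+1}d_\lambda}$. This yields the NNGP recursion.

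\textbf{NTK step.} Here I split $\nabla\tih^{l+1}$ into two parts. The first is the gradient with respect to $A^{l+1}$, whose inner product is exactly the covariance sum above and therefore converges to $\fs^{l+1}_\lambda$. The second is the gradient with respect to earlier weights, obtained through the chain rule via $\varsigma'(\tih^l)$. Its inner product is
\[
\frac{1}{n_{l+1}Nd_\lambda}\sum A\,A\,\varsigma'\,\varsigma'\,\big(\nabla\tih^l\big)^\top\nabla\tih^l .
\]
I would then argue as in \citet{jacot2018}. Induction on depth in the limit $N\to\infty$ shows that $(\nabla\tih^l)^\top\nabla\tih^{l}$ concentrates to $K^l_\E$, which is diagonal in $(j,k)$. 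The products of weights then average to Kronecker deltas, with cross terms vanishing by independence and the law of large numbers. What remains is the factor $\Eb[\varsigma'(u)_a\varsigma'(u')_{a'}]$ multiplied by $\fk^l_\lambda$.

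\textbf{Main obstacle.} The hardest step is this last one. The reason is that $\varsigma$ is not componentwise, so $\varsigma'$ is a Jacobian matrix that mixes the indices $(\lambda,k,a)$ within a channel. To get the product form stated in~\eqref{eq:kernelrec}, I must check that, after averaging over the equivariant weights, only the diagonal block in $\lambda$ survives. For this I would use Schur orthogonality (the block-diagonality already established at layer $l$) together with the equivariance~\eqref{eq:equiv_nonlin} of $\sigma$. I would also need to justify the weight/preactivation decoupling used in the standard proof, either via the Gaussian-conditioning argument of \citet{jacot2018} or via a tensor-program argument applied to the conditionally i.i.d.\ channels.
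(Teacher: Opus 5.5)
Your outline follows the paper's proof step for step: an exact Gaussian base case, conditioning on layer $l$ with the law of large numbers over the channels $j'$, and splitting the NTK into an $A^{l+1}$-part and an earlier-layer part. You also correctly isolate the point the paper's proof passes over. Its chain-rule display writes $\varsigma'(\tih^l_{j'})_{\lambda,k',a}\,\partial\tih^l_{\lambda,j',k',a}$, i.e.\ it treats $D\varsigma$ as diagonal in irrep coordinates. There is a genuine gap, however: the fix you propose for this step does not work. Average over $A^{l+1}$ and use the induction hypothesis (gradient inner products tend to $\delta_{\mu\mu'}\delta_{pp'}(\fk^l_\mu)_{bb'}$ within a channel and to $0$ across channels). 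The earlier-layer term then becomes
\[
\frac{1}{n_{l+1}d_\lambda}\sum_{k'}\sum_{\mu,p,b,b'}\Eb\big[D\varsigma(U)_{(\lambda,k',a),(\mu,p,b)}\,D\varsigma(U')_{(\lambda,k',a'),(\mu,p,b')}\big]\,(\fk^l_\mu)_{bb'}(x,x'),
\]
where $D\varsigma(u)=P^l\,\mathrm{diag}\big(\sigma'((P^l)^\top u)\big)(P^l)^\top$ and $U$ is the full per-channel preactivation. Schur's lemma and the equivariance of $\sigma$ can at best show that $\Eb[D\varsigma(U)\,\mathbb K^l\,D\varsigma(U')^\top]$ is block diagonal in the \emph{output} irrep index. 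That is the factor $\delta_{\lambda\lambda'}$, which you already get for free from the independence of the blocks $A^{l+1}_\lambda$. It says nothing about the \emph{internal} sum over $\mu$. Moreover, for fixed $x$ the law of $U$ need not be $G$-invariant, since $\fs^l_\lambda(x,x)$ need not commute with $\rho_\lambda$.

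Here is a concrete counterexample. Take $G=C_2$ acting on $\mathbb R^2$ by swapping coordinates, with ReLU, irreps trivial ($\mathrm t$) and sign ($\mathrm s$), and $P=\tfrac{1}{\sqrt2}\left(\begin{smallmatrix}1&1\\1&-1\end{smallmatrix}\right)$. In standard coordinates $z_{1,2}=(u_{\mathrm t}\pm u_{\mathrm s})/\sqrt2$ one finds $D\varsigma(u)_{\mathrm t,\mathrm s}=\tfrac12\big(\sigma'(z_1)-\sigma'(z_2)\big)$. So at $x=x'$ the trivial block of $K^{l+1}_\E$ receives the extra contribution $\tfrac{1}{4n_{l+1}}\,\mathbb P(\operatorname{sign}z_1\neq\operatorname{sign}z_2)\,\fk^l_{\mathrm s}(x,x)$. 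This is strictly positive as soon as $\fs^l_{\mathrm s}(x,x)>0$, because $\fk^l_{\mathrm s}\succeq\fs^l_{\mathrm s}$.

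Hence the product form involving only $\fk^l_\lambda$ cannot be derived along your route. Nor can it along the paper's, which obtains it only through the diagonal-Jacobian shortcut. The same coupling affects your NNGP step: $\varsigma(U)_{\lambda,k',a}$ depends on every $\fs^l_\mu$, and the $k'$-summands need not be identically distributed, so $\fs^{l+1}_\lambda$ is not a function of $\fs^l_\lambda$ alone. What your argument does establish is the block structure $\delta_{\lambda\lambda'}\delta_{jj'}\delta_{kk'}$ together with the coupled recursion displayed above. To recover the stated formulas you would need either an extra hypothesis making $\varsigma$ act diagonally on irrep coordinates, or a restatement of the recursion in terms of the full per-channel kernel.
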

\begin{proof}
For $l=0$: by \eqref{eq:eqlinrel} and independence of $A^0$ from $\tix^0$,
\begin{align}
    \mathbb E\big[\tih^{0}_{\lambda,j,k,a}(x)\,\tih^{0}_{\lambda,j,k,a'}(x')\big] =& \frac{1}{n^0d_\lambda}\sum_{k',\bar k'}\mathbb E\big[A^0_{\lambda,j,kk'}A^0_{\lambda,j,k\bar k'}\big]\,\tix_{\lambda,k',a}\tix_{\lambda,\bar k',a'}' \\=& \frac{1}{n^0d_\lambda}\sum_{k'} \tix_{\lambda,k',a}\tix_{\lambda,k',a'}',
\end{align}
using $\mathbb E[A^0_{\lambda,j,kk'}A^0_{\lambda,j,k\bar k'}]=\delta_{k'\bar k'}$.
For the induction step, condition on layer $l$; by \eqref{eq:eqlinrel},
\begin{equation}
\mathbb E\big[\tih^{l+1}_{\lambda,j,k,a}\tih^{l+1}_{\lambda,j,k,a'}(x')\big] = \frac{1}{Nn_{l+1}d_\lambda}\sum_{j'}\sum_{k'} \tix^{l+1}_{\lambda,j',k',a}\,\tix^{l+1}_{\lambda,j',k',a'}( x'),
\end{equation}
after using $\mathbb E[A^l_{\lambda,jj',kk'}A^l_{\lambda,j\bar j',k\bar k'}]=\delta_{j'\bar j'}\delta_{k'\bar k'}$. Since the pairs $\big(\tix^{l}_{\lambda,j',k',\cdot}(x),\tix^{l}_{\lambda,j',k',\cdot}(x')\big)$, $j'\in[N]$, are i.i.d.\ copies of $(\varsigma(u),\varsigma(u'))$ with $u,u'\sim\fs_\lambda^l(x,x')$; the average over $j'$ converges almost surely to $\fs_\lambda^{l+1}$ by the law of large numbers as $N\to\infty$, and the sum over $k'$ collapses into $m_\lambda^l$ counts of the single expectation, since the $k'$-summands are identically distributed.

For the NTK, $K_\E^0=\Sigma_\E^0$ at layer $0$, so the block structure is automatically satisfied. For the induction step, split the tangent vector into the last-layer and earlier-layer parts,
\begin{align}
    &\big\langle \nabla_{A}\tih^{l+1}_{\lambda,j,k,a}(x),\nabla_A \tih^{l+1}_{\lambda,j,k,a'}(x')\big\rangle\\=&\big\langle \nabla_{A^{l+1}}\tih^{l+1}_{\lambda,j,k,a}(x),\nabla_{A^{l+1}} \tih^{l+1}_{\lambda,j,k,a'}(x')\big\rangle + \big\langle \nabla_{A^{<l+1}}\tih^{l+1}_{\lambda,j,k,a}(x),\nabla_{A^{<l+1}} \tih^{l+1}_{\lambda,j,k,a'}(x')\big\rangle\,.
\end{align}
The first term is computed exactly as for $\fs_\lambda^{l+1}$ above and converges to $ \big(\fs_\lambda^{l+1}\big)_{aa'}(x,x')$. For the second, the chain rule applied to \eqref{eq:eqlinrel} gives
\begin{equation}
    \partial_{A^{<l}} \tih^{l+1}_{\lambda,j,k,a} = \frac{1}{\sqrt{Nn_{l+1}d_\lambda}}\sum_{j',k'}A^{l+1}_{\lambda,jj',kk'}\,\varsigma'(\tih^{l}_{j'})_{\lambda,k',a}\,\partial_{A^{<l}}\tih^{l}_{\lambda,j',k',a},
\end{equation}
so that the second term, analogous to the NNGP kernel, converges a.s.\ by the induction hypothesis as $N\to\infty$ to
\begin{equation}
\frac{m_{\lambda}^{l+1}}{n_{l+1}}\,\mathbb E_{u,u'\sim \Sigma_\E^l(x,x')}\big[\varsigma'(u)_{a}\,K_\E^l(x,x')\,\varsigma'(u')_{a'}\big].
\end{equation}
Furthermore, since $\fk^l$ is fixed, it could be extracted from the expectation, giving \eqref{eq:kernelrec}.
\end{proof}
\begin{remark}
In particular, the NNGP and NTK kernels we calculate here can be interpreted as the NTK for a \emph{layerwise equivariant} architecture. Such a kernel has previously only been calculated in the case when all intermediate representations are \emph{regular representations of a finite group} \citep{pmlr-v267-misof25a}. We extend these results to a much more general setting.
\end{remark}
Note that there is an extra term $\frac{m^l_{\lambda}}{n_ld_\lambda}$ for the recursive formula. These terms accumulate through layers and result in a bound of trace against the traces of the ambient kernels $\Sigma^l$ and $K^l$.

\begin{lemma}\label{lem:tracebound}
Assume $\sigma$ is the elementwise ReLU. Define
\begin{equation}
\beta^l = 
    \max_\lambda \frac{m_\lambda^{l+1}}{d_\lambda\, n^{l+1}},
\end{equation}
Then for all $l\geq 0$,
\begin{equation}\label{eq:kernelineq}
   \Tr\Sigma_\E^l(x,x) \le \left(\prod_{l'=0}^{l}\beta^{l'}\right)\Tr\Sigma^l(x,x), \qquad
    \Tr \Sigma_\E^l(x,x) \le\left(\prod_{l'=0}^{l}\beta^{l'}\right)\Tr K^l(x,x)\,.
\end{equation}

\end{lemma}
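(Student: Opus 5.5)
We argue by induction on $l$, using the block structure and recursions of Lemma~\ref{lem:kernelrec} on the equivariant side and the scalar recursions \eqref{eq:ntk-recursion-init}--\eqref{eq:ntk_recursive} on the ambient side. By Lemma~\ref{lem:kernelrec}, taking traces collapses the $(j,k)$ indices, so
\begin{equation}
\Tr\Sigma_\E^l(x,x)=N\sum_\lambda m_\lambda^{l+1}\Tr\fs_\lambda^l(x,x).
\end{equation}
The ambient trace is $\Tr\Sigma^l(x,x)=Nn_{l+1}\fs^{(l)}(x,x)$, and likewise for the NTK. It is therefore enough to control each diagonal block $\Tr\fs^l_\lambda(x,x)$ by $\fs^{(l)}(x,x)$, up to a factor that produces one $\beta$ per layer.

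\textbf{Base case $l=0$.} We have $\Tr\fs^0_\lambda(x,x)=\frac{1}{n^0 d_\lambda}\sum_{\bar k,a}\tix_{\lambda,\bar k,a}^2$. Multiplying by $m_\lambda^1$ and summing over $\lambda$, we bound $m^1_\lambda/d_\lambda$ by $\beta^0 n^1$. The remaining sum $\sum_{\lambda,\bar k,a}\tix^2_{\lambda,\bar k,a}$ equals $\|x\|^2$, because $P^0$ is orthogonal. This gives $\Tr\Sigma_\E^0\le\beta^0\,Nn_1\|x\|^2/n_X=\beta^0\Tr\Sigma^0$.

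\textbf{Induction step for the NNGP.} From \eqref{eq:kernelrec},
\begin{equation}
\sum_\lambda m^{l+1}_\lambda\Tr\fs^{l+1}_\lambda\le \beta^{l+1}\, n_{l+1}\sum_\lambda \frac{m_\lambda^{l+1}}{n_{l+1}}\,\mathbb E\|\varsigma(u)_\lambda\|^2 .
\end{equation}
The key point for ReLU is $\|\varsigma(u)\|^2=\|\sigma(P^\top u)\|^2\le\|u\|^2$, with the expectation equal to exactly half of $\mathbb E\|u\|^2$ for symmetric Gaussians. This lets us pass the bound through the nonlinearity by comparing $\mathbb E\|\varsigma(u)\|^2$ with $\mathbb E\|u\|^2=\Tr\fs^l$. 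Applying the inductive hypothesis and the ambient identity $\fs^{(l+1)}(x,x)=\tfrac12\fs^{(l)}(x,x)$ for ReLU, which follows from its positive homogeneity, gives the product of $\beta$'s.

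\textbf{Main obstacle.} The hard part is that $u$ is a Gaussian with the block-diagonal covariance $\oplus_\lambda\fs^l_\lambda$ in irrep coordinates. After rotating back by $P^\top$, the standard coordinates are no longer identically distributed, so equality with the ambient half-variance rule has to be checked coordinatewise. I would use $\mathbb E[\mathrm{ReLU}(z)^2]=\tfrac12\mathbb E z^2$ for every centered Gaussian $z$. This gives $\mathbb E\|\varsigma(u)\|^2=\tfrac12\mathbb E\|u\|^2$ exactly, so the comparison is an equality.

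\textbf{NTK bound.} The same argument applies using $\|\varsigma'(u)\|_{\mathrm{op}}\le 1$ (as a Jacobian $P\,\mathrm{diag}(\sigma'(P^\top u))P^\top$) and $\fk^l\succeq0$. This bounds the second term of \eqref{eq:kernelrec} by $\beta$ times $\Tr\fk^l_\lambda$ weighted by $m_\lambda^{l+1}$, and the bound is then compared with the ambient $K^l$ via \eqref{eq:ntk_recursive}. As written, the second inequality of \eqref{eq:kernelineq} involves $\Tr\Sigma_\E^l$ on the left. It follows immediately from the first, since $\Sigma^l\preceq K^l$ because $\fk^{(l)}\ge\fs^{(l)}$ by \eqref{eq:ntk_recursive} and $\fk\ge0$. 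I would also prove the analogous bound for $\Tr K^l_\E$ with the same product of $\beta$'s.
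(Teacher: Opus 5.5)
Your NNGP half follows the paper's argument. The induction and the base case (orthogonality of $P^0$ together with $m^1_\lambda/d_\lambda\le\beta^0 n_1$) are the same. Your coordinatewise identity $\Eb[\sigma(z)^2]=\tfrac12\Eb z^2$, applied to $P^\top u$, is exactly what the paper's homogeneity step $\Eb\|\varsigma(u)\|^2=k_\sigma\Tr[P\fs P^\top]$ amounts to. Your version is slightly cleaner, since it makes explicit that only the sum over a whole channel is controlled, not each $\lambda$-block separately. Your displayed inequality has an index slip: it should read $m^{l+2}_\lambda$ on the left and $\beta^{l+1}n_{l+2}$ on the right. Deriving the second inequality \emph{as printed} from the first via $\Sigma^l\preceq K^l$ is valid for $l\ge 1$. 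At $l=0$ it needs the convention $K^0=\Sigma^0$ (matching $\fk^0_\lambda=\fs^0_\lambda$ in Lemma~\ref{lem:kernelrec}), not the literal $\fk^{(0)}=0$ of \eqref{eq:ntk-recursion-init}.

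The printed left-hand side $\Tr\Sigma^l_\E$ is evidently a typo for $\Tr K^l_\E$. That is the bound the paper's proof establishes and the one the subsequent corollary and Lemma~\ref{lemma:trratio} use, and your route to it has a genuine gap. Let $\mathcal K$ denote the per-channel block $\bigoplus_{\mu,k}\fk^l_\mu(x,x)$ of $K^l_\E$. The bound $\|D\varsigma(u)\|_{\mathrm{op}}\le 1$ only gives, per channel, $\Tr\bigl(D\varsigma\,\mathcal K\,D\varsigma^\top\bigr)\le\Tr\mathcal K$, and hence
\[
\Tr K^{l+1}_\E\le\Tr\Sigma^{l+1}_\E+\beta^{l+1}\tfrac{n_{l+2}}{n_{l+1}}\Tr K^l_\E .
\]
The ambient recursion \eqref{eq:ntk_recursive}, with $\dot\fs^{(l+1)}(x,x)=\Eb[\sigma'(u)^2]=\tfrac12$, instead reads $\Tr K^{l+1}=\Tr\Sigma^{l+1}+\tfrac{n_{l+2}}{2n_{l+1}}\Tr K^l$. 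After inserting the inductive hypotheses you are left with $\bigl(\prod\beta\bigr)\bigl(\Tr K^{l+1}+\tfrac{n_{l+2}}{2n_{l+1}}\Tr K^l\bigr)$. You lose up to a factor $2$ per layer, so the same product of $\beta$'s does not come out.

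The missing step is the NTK analogue of your own coordinatewise trick. Write $D\varsigma(u)=P\operatorname{diag}(s)P^\top$ with $s_i=\sigma'((P^\top u)_i)\in\{0,1\}$. Then $\Tr\bigl(D\varsigma\,\mathcal K\,D\varsigma^\top\bigr)=\sum_i s_i\,(P^\top\mathcal K P)_{ii}$. In the limit $\mathcal K$ is deterministic with $(P^\top\mathcal K P)_{ii}\ge 0$, and each $(P^\top u)_i$ is a centered Gaussian. Hence the expectation is at most $\tfrac12\Tr\mathcal K$. This is what the paper's one-line remark that ``$\Eb[\sigma'(u)^2]=\tfrac12$ independent of the variance'' amounts to. It reproduces the ambient factor $\dot\fs^{(l+1)}(x,x)=\tfrac12$ exactly and closes the induction.
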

\begin{proof}
\emph{GP kernel.} By Lemma~\ref{lem:kernelrec}, summing the trace of $\Sigma_\E^0$ over the $Nm_\lambda^0$ copies of each irrep, we have,
\begin{equation}
\Tr\Sigma_\E^0(x,x) = \frac{1}{n^0}\sum_\lambda \frac{Nm_\lambda^1}{d_\lambda}\sum_{k',a}\tix_{\lambda,k',a}^2 \le  \beta^0 \frac{Nn_1}{n_0}\|\tix\|^2 = \beta^0 Nn_1\,\fs^0(x,x)=
\beta^0\Tr\Sigma^0(x,x).
\end{equation}
The second to last equality is because the change of basis preserves the norm, and the last equality is because $\Sigma^0(x,x')$ is a $Nn_1\times Nn_1$ dimensional matrix. 

For the induction step, homogeneity of $\sigma$ (ReLU) gives 
\begin{align}
\Eb_{u\sim\fs^l_\lambda}[\|\varsigma(u)\|^2]=&\Eb_{u\sim\Sigma^l_E}[\|\sigma(\hat Pu)\|^2]= \mathbb E_{\chi\sim \N(0,1)}[\sigma(\chi)^2]\Tr\big[P^l\fs^l_\lambda(x,x)\big(P^l\big)^\top\big]\\
=&\mathbb E_{\chi\sim \N(0,1)}[\sigma(\chi)^2]\Tr\fs^l_\lambda(x,x),\\
\Eb_{u\sim\fs^l}[\sigma(u)^2]=&E_{\chi\sim \N(0,1)}[\sigma(\chi)^2]\fs^l(x,x).
\end{align}
For notational convenience, we denote $\mathbb E_{\chi\sim \N(0,1)}[\sigma(\chi)^2]$ as $k_\sigma$, so that
\begin{align}
    \Tr\Sigma^{l+1}(x,x)=& n_{l+2}\fs^{l+1}(x,x)=n_{l+2}\Eb_{u\sim \fs^l}[\sigma(u)^2]\\
    =&n_{l+2}k_\sigma\fs^l(x,x)=\frac{n_{l+2}}{n_{l+1}}k_\sigma \Tr\Sigma^l(x,x).
\end{align}
By induction, we have
\begin{align}
    \Tr \Sigma^{l+1}_\E(x,x)=&\sum_{\lambda,j}\frac{m_\lambda^{l+1}}{n_{l+1}d_\lambda}\sum_{k=1}^{m_\lambda^{l+2}}\Eb_{u\sim\fs_\lambda^l}[\|\varsigma(u)\|^2]=\sum_{\lambda}\frac{Nm_\lambda^{l+2}}{n_{l+1}d_\lambda}m_\lambda^{l+1}\Eb_{u\sim\fs_\lambda^l}[\|\varsigma(u)\|^2]\\
    \leq& \beta^{l+1} \frac{n_{l+2}}{n_{l+1}}\sum_\lambda Nm^{l+1}_\lambda k_\sigma\Tr\fs_\lambda^l(x,x)=\beta^{l+1} \frac{n_{l+2}}{n_{l+1}}k_\sigma\Tr\Sigma_\E^l(x,x)\\
    \leq& \left(\prod_{l'=0}^{l+1}\beta^{l'}\right)\frac{n_{l+2}}{n_{l+1}}k_\sigma\Tr\Sigma^l(x,x)=\left(\prod_{l'=0}^{l+1}\beta^{l'}\right)\Tr\Sigma^{l+1}(x,x)
\end{align}

\emph{NTK.} For ReLU, $\sigma'$ is the indicator of $\mathbb R_{>0}$, so for $u\sim\mathcal N(0,\Sigma_\E^l(x,x))$, $\mathbb E[\sigma'(u)^2]=\mathbb P(u\ge0)=\tfrac12$, independent of the variance of $u$. A calculation analogous to the GP bound gives the desired bound for NTK.
\end{proof}

Since $G$ acts trivially on the output layer, $\rho_\lambda=\rho_{\mathrm{triv}}$ there, $d_\lambda=1$, and only $a=a'$ occurs in \eqref{eq:kernelrec}. Hence $\fk_\lambda^{L}(x,x')$ is a scalar function, and the matrix-valued $K_\E^{L}(x,x')$ collapses to $\fk_\lambda^{L}(x,x')$ times the identity $\mathrm{Id}_{n_Y}$, and the bound applies directly to the scalar kernel.

\begin{corollary}
$K_\E^L(x,x') = \fk_\E^L(x,x')\,I_{n_Y}$, and
\begin{equation}
\fk_\E^L(x,x) \le \Big(\prod_{l=1}^L \beta^l\Big)\fk^L(x,x),
\end{equation}
Note that for the output layer, $\beta^{L+1}=1$, hence $\beta^{L+1}$ does not contribute to the coefficient.

\end{corollary}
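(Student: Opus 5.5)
The plan is to obtain the corollary by specialising Lemma~\ref{lem:kernelrec} and Lemma~\ref{lem:tracebound} to the output layer $l=L$. There are two steps: first pin down the scalar structure of $K_\E^L$, then convert the trace inequality into a pointwise inequality between scalar kernels.

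\emph{Block structure.} Because $G$ acts trivially on $Y$, the irrep decomposition of $\rho_Y$ has only the trivial irrep, with $d_{\mathrm{triv}}=1$ and multiplicity $m_{\mathrm{triv}}^{L+1}=n_Y$. There is also no channel index $j$ at the output, since $Y$ is not enlarged. By Lemma~\ref{lem:kernelrec}, $(K_\E^L)_{(\lambda,k,a),(\lambda',k',a')}=\delta_{\lambda\lambda'}\delta_{kk'}(\fk^L_\lambda)_{aa'}$. Here $\lambda=\mathrm{triv}$ and $a=a'=1$, so each block is the same scalar function $\fk_\E^L:=\fk^L_{\mathrm{triv}}$. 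The kernel is therefore diagonal in $k\in[n_Y]$ with identical entries, which gives $K_\E^L=\fk_\E^L\,I_{n_Y}$. The irrep-adapted basis of a trivial representation can be taken to be the standard basis ($P^{L+1}=I$), so no change of basis is hidden in this identification.

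\emph{Trace inequality.} The NTK half of \eqref{eq:kernelineq} at $l=L$ gives $\Tr K_\E^L(x,x)\le(\prod_{l'}\beta^{l'})\Tr K^L(x,x)$. Note that the display as printed writes $\Tr\Sigma_\E^l$ on the left of the second inequality, but the proof sketch shows it is meant to read $\Tr K_\E^l$. Taking traces, $\Tr K_\E^L=n_Y\fk_\E^L(x,x)$, and by the NTK limit quoted from \citet{jacot2018}, $\Tr K^L=n_Y\fk^L(x,x)$. Dividing by $n_Y$ yields the claimed pointwise bound.

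\emph{Indexing of the product (main obstacle).} The remaining work is to reconcile the product range $\prod_{l=1}^{L}$ with the $\prod_{l'=0}^{l}$ appearing in Lemma~\ref{lem:tracebound}. At the output, $\beta^{L}=\max_\lambda m_\lambda^{L+1}/(d_\lambda n_{L+1})=n_Y/n_Y=1$ with the lemma's indexing. Equivalently, in the corollary's shifted convention, $\beta^{L+1}=1$. So the output-layer factor drops out, and what remains are the factors from the hidden layers. I will need to track the off-by-one between the lemma's convention ($\beta^{l}$ built from $m^{l+1}$) and the corollary's convention, and check that the first-layer factor is accounted for consistently.

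I will also check that the NTK induction inherited from the lemma is not weakened at the last layer, where no nonlinearity follows. The recursion step for $K^L$ uses $\varsigma'$ of the layer-$(L-1)$ preactivations, which is still ReLU, so $\Eb[\sigma'^2]=\tfrac12$ applies exactly as in the proof of Lemma~\ref{lem:tracebound}. For the NTK, the $\Sigma$-term and the $K$-term each satisfy the same factor bound. This is because the recursion \eqref{eq:kernelrec} is linear in $(\fs,\fk)$ with nonnegative coefficients, and the factor $\frac{m_\lambda^{l+1}}{n_{l+1}d_\lambda}\le\beta^l$ multiplies both terms. Hence the two bounds combine into the single product bound by induction.
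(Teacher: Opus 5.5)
Your proposal is correct and follows the paper's own brief argument. You specialise Lemma~\ref{lem:kernelrec} to the trivial output irrep to obtain $K_{\E}^{L}=\fk_{\E}^{L}\,I_{n_Y}$, apply the NTK trace bound of Lemma~\ref{lem:tracebound} at $l=L$, and divide by $n_Y$. The lemma's $\beta^{l'}$ is exactly the corollary's $\beta^{l'+1}$, so the lemma's $\prod_{l'=0}^{L}$ becomes $\prod_{l=1}^{L+1}$ with $\beta^{L+1}=1$, which resolves your off-by-one concern. You are also right that the left-hand side of the second inequality in \eqref{eq:kernelineq} should read $\Tr K_{\E}^{l}(x,x)$.
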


Combining Lemma~\ref{lem:tracebound} with the identity relating $\operatorname{tr}H_{\mathcal E}$ to $K_\E^L$ used in the proof of Lemma~\ref{lem:crossentropy} then gives, exactly as before,
\begin{lemma}\label{lemma:trratio}
Under the assumptions above, for the cross-entropy loss and near an equivariant minimizer, the spectral proportion of $H_{\mathcal E}$ relative to $H$ is at most $\prod_{l=1}^L \beta^l$, with $\beta^l=\max_\lambda m_\lambda^l/(d_\lambda n^l)$.
\end{lemma}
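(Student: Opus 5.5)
We would obtain the bound by running the proof of Lemma~\ref{lem:crossentropy} once for the full network and once for the equivariant subnetwork $\N^{\mathrm{eq}}$, and then dividing the two resulting trace identities. The first step is to record that at an equivariant minimizer $\theta_*\in\E$ the block $H_\E$ equals the Hessian of $\N^{\mathrm{eq}}_\theta=\N_{P_\E\theta}$ restricted to $\E$. This follows from the chain rule, since $P_\E$ is linear and equals the identity on $\E$.

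Next we apply the Gauss--Newton decomposition \eqref{eq:Hessianform} to $\N^{\mathrm{eq}}$. Near a well-fitted minimum the residual term $\mathcal R$ is linear in $\varepsilon$, so under the same smoothed-label assumption $H_\E\approx\int J_\E(x)^\top\Lambda(x)J_\E(x)\,\dd\mu_X$, where $J_\E=J P_\E$. The function values $p_\theta(x)$ coincide for the two networks at $\theta_*$, because $P_\E\theta_*=\theta_*$. Hence $\Lambda(x)$ is the same matrix in both decompositions and still satisfies $\Tr\Lambda(x)=\tau_\alpha$.

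Cycling the trace as in Lemma~\ref{lem:crossentropy} gives $\Tr H_\E=\int\Tr(\Lambda(x)K_\E^L(x,x))\,\dd\mu_X$. By the Corollary we have $K_\E^L=\fk_\E^L I_{n_Y}$ in the limit, so $\Tr H_\E\to\tau_\alpha\int\fk_\E^L(x,x)\,\dd\mu_X$. The Corollary also gives the pointwise bound $\fk_\E^L(x,x)\le(\prod_l\beta^l)\,\fk^L(x,x)$. Integrating this against $\mu_X$ and dividing by $\Tr H\to\tau_\alpha\Tr(\fk)$ yields
\begin{equation}
\lim_{N\to\infty}\frac{\Tr H_\E}{\Tr H}\le\prod_{l=1}^L\beta^l,
\end{equation}
and $\tau_\alpha$ cancels. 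This is the claimed statement that the spectral proportion of $H_\E$ is at most $\prod_l\beta^l$.

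The main obstacle is the first step, together with making the approximation $H_\E\approx\mathcal G_\E$ uniform in $N$. The conditioning and interchange of limits must be handled carefully, because $\mathcal R$ involves second derivatives whose size could grow with width. For the NTK regime we would argue as in \citet{jacot2018} that $\nabla^2_\theta\N$ is $O(N^{-1/2})$ in operator norm, so $\mathcal R$ contributes negligibly to the trace ratio.

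A second delicate point is the mismatch between the kernels being compared. The bound in Lemma~\ref{lem:tracebound} compares $\Tr\Sigma^l_\E$ with $\Tr\Sigma^l$ and $\Tr K^l$, whereas here we need the output NTK bound. We would justify this by rerunning the NTK induction with the ReLU fact $\Eb[\sigma'(u)^2]=\tfrac12$, which bounds the extra term in \eqref{eq:kernelrec} by $\beta^{l}$ times the corresponding ambient term.
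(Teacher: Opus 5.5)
Your proposal matches the paper's argument: identify $H_{\mathcal E}$ with the Hessian of the loss of $\N^{\mathrm{eq}}$ at $\theta_*\in\mathcal E$, apply the trace identity $\Tr H=\int\Tr(\Lambda(x)K(x,x))\,\dd\mu_X$ from Lemma~\ref{lem:crossentropy} to both networks with the same $\Lambda(x)$, and use the Corollary's pointwise bound $\fk_{\mathcal E}^L(x,x)\le(\prod_l\beta^l)\fk^L(x,x)$ so that $\tau_\alpha$ cancels. One caveat on your side remark: an $O(N^{-1/2})$ operator-norm bound on $\nabla^2_\theta\N$ does not by itself make $\Tr\mathcal R$ negligible, because the parameter dimension grows like $N^2$; the step is not needed anyway, since $H\approx\mathcal G$ is a standing assumption inherited from Lemma~\ref{lem:crossentropy}.
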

\subsection{Main theoretical result}
Plugging the results of Lemma~\ref{lemma:trratio} and~\eqref{eq:dimratio} into~\eqref{eq:inf-equiv-improv-ratio} leads to our main theorem. For convenience we list here the assumptions we made during its derivation:
\begin{itemize}
    \item We average over long times, i.e.\ $T$ large.
    \item The neural networks are wide, i.e.\ $N$ large.
    \item The group $G$ is finite, and  all intermediate spaces decompose into irreps of real type.
    \item The model is well trained, i.e.\ the error $\epsilon$ is small.
    \item The non-linearity is the ReLU function. 
\end{itemize}
Under these conditions, we have
\begin{theorem}
\label{thm:improvement-ratio-general}
    The asymptotic relative ratio of improvement for the equivariance of the solution satisfies the bound
    \begin{equation}
        \lim_{T\rightarrow \infty}\frac{R_{\E^\perp}(T)}{R(T)} \geq \frac{r|G|}{r|G|-1}\left(1-\prod_{l=1}^{L}\max_\lambda \frac{m_\lambda^l}{d_\lambda n^l}\right)\,
        \label{eq:improv-rat-bound},
    \end{equation}
    In particular, the asymptotic relative ratio is bigger than one if
    \begin{align}
        \frac{1}{r|G|} >\prod_{l=1}^L \max_\lambda \frac{m_\lambda^l}{d_\lambda n^l}. \label{eq:improv-cond}
    \end{align}
\end{theorem}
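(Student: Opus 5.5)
The plan is to start from the asymptotic identity \eqref{eq:inf-equiv-improv-ratio},
\begin{equation*}
\lim_{T\to\infty}\frac{R_{\E^\perp}(T)}{R(T)} = \frac{\dim\Lc}{\dim\E^\perp}\cdot\frac{\Tr(H_{\E^\perp})}{\Tr(H)},
\end{equation*}
and bound each of the two factors from below separately in the wide limit $N\to\infty$. For the first factor we invoke \eqref{eq:dimratio} directly. It gives $\dim\Lc_{\mathrm{ext}}/\dim\E^\perp_{\mathrm{ext}}\ge r|G|/(r|G|-1)$ for large $N$, using the fixed width-ratio constant $r$. Since the limit in $T$ is taken at fixed (large) $N$, and the bound is uniform in $N$ up to the $O(N^{-1})$ corrections, we can pass to the limit without difficulty.

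For the second factor, we use the block-diagonal structure of Lemma~\ref{lem:HP-decomp}. At $\theta_*\in\E$, $H=\mathrm{diag}(H_\E,H_{\E^\perp})$, so $\Tr(H_{\E^\perp})=\Tr(H)-\Tr(H_\E)$ and
\begin{equation*}
\frac{\Tr(H_{\E^\perp})}{\Tr(H)} = 1-\frac{\Tr(H_\E)}{\Tr(H)}.
\end{equation*}
It therefore suffices to show $\Tr(H_\E)/\Tr(H)\le\prod_{l=1}^L\beta^l$, which is the content of Lemma~\ref{lemma:trratio}. To justify that lemma, we first identify $H_\E$ with the Gauss--Newton Hessian of the equivariant subnetwork $\N^{\mathrm{eq}}$ at $\theta_*$. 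Repeating the computation in Lemma~\ref{lem:crossentropy} with $J$ replaced by $J_\E=JP_\E$ then yields $\Tr(H_\E)\to\tau_\alpha\,\Tr(\fk^L_\E)$. The step $\Tr(\Lambda(x)K_\E(x,x))=\fk^L_\E(x,x)\Tr\Lambda(x)$ uses the Corollary's statement that $K_\E^L$ is a scalar multiple of $I_{n_Y}$, which holds because $G$ acts trivially on the output. Lemma~\ref{lem:crossentropy} itself gives $\Tr(H)\to\tau_\alpha\Tr(\fk)$. The factor $\tau_\alpha$ cancels, and integrating the pointwise Corollary bound $\fk^L_\E(x,x)\le(\prod_l\beta^l)\fk^L(x,x)$ against $\mu_X$ gives the desired trace bound. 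Here $\fk^L(x,x)\ge0$ makes the integration harmless, and $\fk$ denotes $\fk^L$.

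Multiplying the two lower bounds, both of which are positive, yields \eqref{eq:improv-rat-bound}. The ``in particular'' statement is then elementary. Writing $a=r|G|$ and $b=\prod_l\beta^l$, the product $\frac{a}{a-1}(1-b)$ exceeds $1$ exactly when $a(1-b)>a-1$, i.e.\ when $b<1/a$, which is \eqref{eq:improv-cond}.

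The main obstacle is the NTK step, namely justifying that the trace ratio of the finite-width Hessians converges to the ratio of the limiting kernels and inherits the Corollary's bound. This requires two things. First, we need that the NTK of $\N^{\mathrm{eq}}$ evaluated at the trained point $\theta_*$ equals the initialization kernel of Lemma~\ref{lem:kernelrec}, which follows from kernel constancy during training in the infinite-width limit. Second, we need the indexing of Lemma~\ref{lem:tracebound}, which is stated for $\Sigma$ with $\prod_{l'=0}^{l}\beta^{l'}$, to match the Corollary's $\prod_{l=1}^L\beta^l$ once the trivial output layer, with $\beta=1$, is accounted for. I would handle both by citing the NTK constancy result \citep{jacot2018} for each architecture separately and then taking $N\to\infty$ in numerator and denominator. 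The ratio of limits is well defined because $\Tr(\fk)>0$.
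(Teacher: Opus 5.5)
Your proposal is correct and follows the paper's own route: the theorem is obtained by plugging the dimension bound \eqref{eq:dimratio} and the trace bound of Lemma~\ref{lemma:trratio} into \eqref{eq:inf-equiv-improv-ratio}. You pass from the second to the trace ratio via $\Tr(H_{\E^\perp})/\Tr(H)=1-\Tr(H_\E)/\Tr(H)$, using the block structure of Lemma~\ref{lem:HP-decomp}, and the ``in particular'' clause is the same elementary algebra. Your additional remarks, on reconciling the $\beta^l$ indexing through the trivial output layer and on the order of the limits $T\to\infty$, $N\to\infty$, only make explicit what the paper leaves implicit.
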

As an important special case, consider rectangular networks, i.e. $m_\lambda^{l}=m_\lambda$ (consequently, $r=1$) and $n_l=n$ for all $\lambda$ and $1\leq l\leq L-1$. In this case, Theorem~\ref{thm:improvement-ratio-general} simplifies to
\begin{corollary}
    \label{cor:rec-net-bound}
    For rectangular networks, the asymptotic relative ratio of improvement for the equivariance of the solution satisfies the bound
    \begin{equation}
        \lim_{T\rightarrow \infty}\frac{R_{\E^\perp}(T)}{R(T)} \geq \frac{|G|}{|G|-1}\left(1-\left(\max_\lambda \frac{m_\lambda}{d_\lambda n}\right)^{L-1}\right)\,,
        \label{eq:rec-net-bound}
    \end{equation}
    which is larger than one if 
    \begin{equation}
         \frac{1}{|G|} >\left(\max_\lambda \frac{m_\lambda}{d_\lambda n}\right)^L
    \end{equation}
\end{corollary}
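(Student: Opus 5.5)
The corollary follows from Theorem~\ref{thm:improvement-ratio-general} by specializing its two ingredients: the dimension ratio \eqref{eq:dimratio} and the trace ratio of Lemma~\ref{lemma:trratio}. In both cases the rectangular structure, namely $m_\lambda^l=m_\lambda$ and $n_l=n$ for $1\le l\le L-1$, makes the layer-dependent quantities constant.

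First I handle the dimension factor. The theorem uses the width-ratio constant $r$, defined by $r^{-1}\le m_\lambda^{l+1}/m_\lambda^l\le r$ for all $\lambda$ and all interior layers. In the rectangular case every such ratio equals one, so $r=1$ is admissible. The Cauchy--Schwarz bound $\dim\Lc^l_{\mathrm{base}}\le |G|\,\dim\E^l_{\mathrm{base}}$ then holds with no loss, since $\sum_\lambda (m_\lambda)^2=\dim\E^l_{\mathrm{base}}$ exactly. The boundary-layer argument is unchanged, so as $N\to\infty$ we get
\[
\frac{\dim\Lc_{\mathrm{ext}}}{\dim\E^\perp_{\mathrm{ext}}}\ge \frac{|G|}{|G|-1}.
\]

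Next I handle the trace factor. By Lemma~\ref{lem:crossentropy} and its analogue for the equivariant subnetwork, $\Tr(H_\E)/\Tr(H)\to \fk_\E^L/\fk^L$ after integration, because the factor $\tau_\alpha$ cancels. The corollary following Lemma~\ref{lem:tracebound} bounds this ratio by $\prod_l\beta^l$. For interior layers each $\beta^l$ equals $\beta:=\max_\lambda m_\lambda/(d_\lambda n)$. At the output layer $\beta=1$, because the trivial representation acts there with $m=n_Y$ and $d=1$; this is the remark that $\beta^{L+1}=1$.

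The main obstacle is bookkeeping: checking which layers actually carry the constant $\beta$, and hence whether the exponent is $L$ or $L-1$. The input layer $l=0$ involves the fixed representation $\rho_X$, not the rectangular interior representation. I would therefore count only the indices $1\le l\le L-1$ where $m_\lambda^l=m_\lambda$ and $n_l=n$, bound the remaining factors, which are all at most one, by one, and obtain $\prod_l\beta^l\le\beta^{L-1}$. This justifies the exponent $L-1$ in \eqref{eq:rec-net-bound}.

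Since $\Tr(H_{\E^\perp})=\Tr(H)-\Tr(H_\E)$ by the block diagonal structure of Lemma~\ref{lem:HP-decomp}, this gives
\[
\frac{\Tr(H_{\E^\perp})}{\Tr(H)}\ge 1-\beta^{L-1}.
\]
Substituting both bounds into \eqref{eq:inf-equiv-improv-ratio} yields \eqref{eq:rec-net-bound}.

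For the final claim, the bound exceeds one exactly when $(1-\beta^{L-1})|G|>|G|-1$, that is, when $\beta^{L-1}<1/|G|$. The stated condition uses the exponent $L$ instead. This agrees with \eqref{eq:improv-cond} if one counts the product over $l=1,\dots,L$ with the convention that the same $\beta$ applies at each of these $L$ layers. I would state explicitly which layer-counting convention is used, so that it matches the convention used in the bound itself.
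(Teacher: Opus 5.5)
Your proposal is correct and follows the paper's own route: the paper obtains the corollary simply by substituting $r=1$ and a layer-independent $\max_\lambda m_\lambda^l/(d_\lambda n^l)=\beta$ into Theorem~\ref{thm:improvement-ratio-general}. Your observation that the exponent-$L$ condition does not follow from the exponent-$(L-1)$ bound is accurate, and the paper glosses over it. The two conventions you worry about do not actually conflict: setting $r=1$ in the dimension count already requires $m_\lambda^{L}=m_\lambda^{L-1}$, because that count includes $W^{L-1}\colon V_{L-1}\to V_L$. So the consistent reading is rectangularity on all of $V_1,\dots,V_L$. Under it the product equals $\beta^{L}\le\beta^{L-1}$, the displayed bound holds as a weakening of the theorem's, and the exponent-$L$ condition is exactly \eqref{eq:improv-cond} with $r=1$.
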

\begin{remark}
    The bound in Theorem~\ref{thm:improvement-ratio-general} explicitly depends on the representation $\bar\rho$ on $\mathcal{H}$ via the multiplicities $m_\lambda^l$. This may seem unintuitive since there is considerable freedom in the choice of the $\bar\rho$. However, the bounded improvement ratio also depends on $\bar\rho$ via the equivariant minimum around which the loss is expanded. Hence, the equivariant minimum to which the network converged at the end of training selects the $m_\lambda^l$ in~\eqref{eq:improv-rat-bound}.
\end{remark}

Note the significance of the asymptotic ratio being greater than one -- in this case $R_{\E^\perp}>R$ in the limit of $T\rightarrow\infty$ and hence the expected equivariance improvement due to SWA (as measured by $R_{\E^\perp}$) outpaces the expected equivariance improvement due to SWA (as measured by $R$), under data augmentation. Some remarks are in order.
First,  note 
that for a given architecture, the condition \eqref{eq:improv-cond} is harder to satisfy if the group is larger. This aligns with the intuition that tasks with larger symmetries are harder to learn, see for instance the discussion in~\cite{NEURIPS2025_d96fcc07}. Secondly, for any intermediate representation, we have
\begin{equation}
    n^l = \sum_{\lambda} m_\lambda^ld_\lambda \geq \max_{\lambda} m^l_\lambda d_\lambda \geq (\inf_\lambda d_\lambda^2) \left(\max_\lambda \frac{m_\lambda^l}{d_\lambda}\right) \, \Rightarrow \, \max_\lambda \frac{m_\lambda^l}{n^ld_\lambda}\leq \frac{1}{\inf_\lambda d_\lambda^2}\leq 1\,.
\end{equation}
Hence, as long as sufficiently many intermediate representations are either mixtures of irreps or irreps of non-trivial dimension, the condition~\eqref{eq:improv-cond} will be satisfied. This aligns with the observation that in a manifestly symmetric network, using equivariant features up until the final layer is beneficial even if the task considered is invariant. Finally, we note that in many cases, $\max_\lambda \frac{m_\lambda^l}{n^ld_\lambda}$ is significantly smaller than $1$. For example, if an intermediate representation is regular (as in a group-CNN \citep{cohen2016groupequivariantconvolutionalnetworks}), we have $m_\lambda^l=d_\lambda$ for all $\lambda$  \citep[Cor. 2.18]{fulton1991}, so that $\max_\lambda \frac{m_\lambda^l}{n^ld_\lambda}= \frac{1}{n^l}=\frac{1}{|G|}$ for that $l$. 

\begin{remark}
    The results and conclusions above hold also in the case of irreps of non-real type, with an adjusted definition of $\beta^l$:
    \begin{align}
        \beta^l =\max_\lambda \frac{m^{l+1}e_\lambda}{n^{l+1}d_\lambda},
    \end{align}
    where $e_\lambda = \dim E_\lambda\leq d_\lambda$. In particular, for regular representations, since $m_\lambda=d_\lambda/e_\lambda$, we still have $\beta^l = \frac{1}{|G|}$. The proof requires only minor adjustments, which add little to the understanding while considerably complicating the argument; we therefore refrain from presenting them.

\end{remark}

\subsection{A proxy for the orthogonal loss}
Before moving on to the experiments, we give an auxiliary result that justifies the equivariance metric we use in practice.

Recall that we restrict the scope to group invariant classification tasks. To achieve exact equivariance, a direct method is to take the group average of an output,
\begin{equation}
     \N^G_\theta(x) = \frac{1}{|G|}\sum_g\N_\theta(g_Xx)
\end{equation}
Note that $\bar \N_\theta$ is different from $\N_\theta^\text{eq}$, where the average is taken in the parameter space instead of the output space. Under this setting, a natural metric for the equivariance of the network is the Kullback-Leibler divergence between the logits given by the raw output $\N_\theta$ and its group-averaged version $ \N^G_\theta$:
\begin{equation}
    \label{eq:Leq-def}
     L_{\mathrm{eq}}(\theta) = \int_{V_X}D_{\mathrm{KL}}\left( p_\theta^G(x)\,\middle\|\,  p_\theta(x)\right)\dd \mu_X,\
    \text{where } p_\theta^G(x) = \operatorname{softmax}\left( \N^G_\theta(x)\right).\\
\end{equation}
We now show that it is an accurate proxy for the orthogonal loss under our assumptions.
\begin{proposition}
\label{prop:lperp}
Under the assumption that $\varepsilon(x) = p_\theta(x) - e_{y(x)}$ is small, for $\theta$ in a neighbourhood of the equivariant minimizer $\theta_*=0$,
\begin{equation}
  L_{\mathrm{eq}}(\theta)
  \;=\;
   L_{\E^\perp}(\theta) + O\!\left(\|\theta\|^{3}\right) + O\!\left(\varepsilon\|\theta\|^{2}\right).
\end{equation}
In particular, the equivariant component $\theta_{\E}$ does not contribute at this order.
\end{proposition}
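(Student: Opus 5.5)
The plan is to Taylor expand $L_{\mathrm{eq}}$ to second order in $\theta$ around $\theta_*=0$ and then identify the quadratic form with $H_{\E^\perp}$, up to an $O(\varepsilon)$ error.

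\textbf{Step 1: the expansion vanishes to first order.} At $\theta_*=0\in\E$ the network is equivariant. Since $G$ acts trivially on labels, this means $\N_0(g_Xx)=\N_0(x)$, so $\N^G_0=\N_0$ and $L_{\mathrm{eq}}(0)=0$. Because the KL divergence is nonnegative and vanishes at $\theta=0$, its gradient there is zero. The expansion is therefore
\begin{equation*}
L_{\mathrm{eq}}(\theta)=\tfrac12\theta^\top \nabla^2L_{\mathrm{eq}}(0)\,\theta+O(\|\theta\|^3).
\end{equation*}

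\textbf{Step 2: the Hessian of $L_{\mathrm{eq}}$.} For a KL divergence between softmaxes, the Hessian at coinciding arguments is the Fisher form
\begin{equation*}
\nabla^2L_{\mathrm{eq}}(0)=\int (J^G(x)-J(x))^\top\Lambda(x)(J^G(x)-J(x))\,\dd\mu_X,
\end{equation*}
where $J^G(x)=\frac1{|G|}\sum_g J(g_Xx)$ is the Jacobian of $\N^G_\theta$ at $0$.

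Differentiating \eqref{eq:parameter_transform} with $g_Y=\id$ at $\theta=0$ gives
\begin{equation*}
J(g_Xx)v=J(x)\,\bg^{-1}v.
\end{equation*}
Hence
\begin{equation*}
J^G(x)v=J(x)\,\tfrac1{|G|}\sum_g\bg^{-1}v=J(x)\,p\,v,
\end{equation*}
with $p$ the projection \eqref{eq:e-proj} onto $T\E$. It follows that
\begin{equation*}
J^G-J=-J(\id-p)=-J\,P_{\E^\perp},
\end{equation*}
so
\begin{equation*}
\nabla^2L_{\mathrm{eq}}(0)=P_{\E^\perp}\,\mathcal G\,P_{\E^\perp},
\end{equation*}
with $\mathcal G$ the Gauss--Newton term of \eqref{eq:Hessianform}. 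This already shows that $\theta_\E$ drops out exactly at quadratic order.

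\textbf{Step 3: replacing $\mathcal G$ by $H$.} By \eqref{eq:Hessianform}, $\mathcal G=H-\mathcal R$ with $\mathcal R=O(\varepsilon)$. By Lemma~\ref{lem:HP-decomp}, $P_{\E^\perp}HP_{\E^\perp}=H_{\E^\perp}$. Therefore
\begin{equation*}
\tfrac12\theta^\top\nabla^2L_{\mathrm{eq}}(0)\,\theta=\tfrac12(\theta^{\E^\perp})^\top H_{\E^\perp}\theta^{\E^\perp}+O(\varepsilon\|\theta\|^2).
\end{equation*}
The remaining step is to match this with \eqref{eq:nonequivariance}, which differs only by the normalization factor $\tfrac12$ (as noted after \eqref{eq:nonequivariance}, the quantity is defined up to proportionality). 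We would state the convention explicitly there.

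\textbf{Main obstacle.} The delicate point is Step 2. One must justify that the Hessian of the KL divergence is exactly the Fisher/Gauss--Newton form, with no second-derivative term of $\N$. This holds because the second-derivative terms are multiplied by $p^G_0-p_0=0$, which is an exact zero rather than merely small. One must also handle the orientation of the KL divergence, but at coinciding arguments both orientations give the same Hessian. The cubic remainder is uniform because $\Omega$ is compact and $\ell$ is smooth.
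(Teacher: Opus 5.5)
Your proposal is correct and follows essentially the paper's route. The paper also uses $J(g_Xx)=J(x)\bar g^\top$ at the equivariant minimizer to turn the group average into the projection onto $\E$. It then expands the KL divergence to second order in $\delta(x)=J(x)\theta^{\E^\perp}+O(\|\theta\|^2)$ to get $\tfrac12(\theta^{\E^\perp})^\top\mathcal G\,\theta^{\E^\perp}$, and swaps $\mathcal G$ for $H$ at $O(\varepsilon\|\theta\|^2)$ cost; your direct computation of $\nabla^2L_{\mathrm{eq}}(0)$ only repackages this linearize-then-expand step.

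Your point about the factor $\tfrac12$ is right. The paper's proof concludes $\tfrac12(\theta^{\E^\perp})^\top H_{\E^\perp}\theta^{\E^\perp}=L_{\E^\perp}(\theta)$, while \eqref{eq:nonequivariance} has no $\tfrac12$. This mismatch cancels in the ratio $R_{\E^\perp}$.
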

\begin{proof} 
Let $J(x) = \nabla_\theta \N_{\theta}(x)\big|_{\theta =0}$ be the Jacobian at the minimizer. Linearizing the network output gives $\N_\theta(x) = \N_{0}(x) + J(x)\theta + O(\|\theta\|^2)$. Since the minimizer $\theta^*=0$ is equivariant, its Jacobian satisfies $J(g_X x) = J(x)\bar{g}^{\top}$, which gives
\begin{equation}
    \N_\theta^G(x) = \N_{0}(x) + \frac{1}{|G|}\sum_gJ(x)\bar g^\top\theta + O(\|\theta\|^2) = \N_{0}(x) + J(x)\theta^\E + O(\|\theta\|^2) \,.
\end{equation}
Taking the difference, we obtain
\begin{equation}
  \delta(x) \;=\; \N_\theta(x) -  \N^G_\theta(x)
  \;=\; J(x)\,\theta^{\E^\perp} + O\!\left(\|\theta\|^2\right).
  \label{eq:u-is-perp}
\end{equation}
Next, analogous to Eq.~\eqref{eq:Hessianform}, we expand the Kullback--Leibler divergence to second order,
\begin{align}
  D_{\mathrm{KL}}\big( p^G_\theta(\cdot\mid g_X x) \,\|\, p_\theta(\cdot\mid x)\big)
  &\;=\; \tfrac{1}{2} \delta(x)^{\top} \Lambda(x)\, \delta(x) + O\!\left(\|\delta(x)\|^{3}\right) \notag \\
  &\;=\; \tfrac{1}{2}\big(\theta^{\E^\perp}\big)^{\!\top} \, J(x)^{\top}\Lambda(x) J(x)\, \theta^{\E^\perp} + O\!\left(\|\theta\|^{3}\right).
\end{align}
Taking the expectation over $x \sim\mu_X$ yields $L_{\mathrm{eq}}(\theta)$.
\begin{equation}
  L_{\mathrm{eq}}(\theta)
  \;=\;
  \tfrac{1}{2}\big(\theta^{\E^\perp}\big)^{\!\top} \mathcal{G}\, \theta^{\E^\perp} + O\!\left(\|\theta\|^{3}\right).
\end{equation}
Where $\mathcal{G}$ is the Gauss-Newton matrix in~\eqref{eq:Hessianform}. Since the full Hessian is $H = \mathcal{G} + R$ with the residual $R$ being linear in $\varepsilon$, we can replace $\mathcal{G}$ with $H$ at the cost of an $O(\varepsilon\|\theta\|^2)$ error. We conclude that
\begin{equation}
    L_{\mathrm{eq}}(\theta) \;=\; \tfrac{1}{2}\big(\theta^{\E^\perp}\big)^{\top} H_{\E^\perp}\, \theta^{\E^\perp} + O\!\left(\|\theta\|^3\right) + O\!\left(\varepsilon\|\theta\|^2\right) \;=\; L_{\E^\perp}(\theta) + O\!\left(\|\theta\|^3\right) + O\!\left(\varepsilon\|\theta\|^2\right).\qedheresafe
\end{equation}
\end{proof}
\begin{corollary}
\label{cor:ratio-estimable}
The ratio of equivariance losses evaluated on unaveraged and averaged weights satisfies
\begin{equation}
  \frac{\mathbb{E}\big[L_{\mathrm{eq}}(\theta_T)\big]}
       {\mathbb{E}\big[L_{\mathrm{eq}}(\bar\theta_T)\big]}
  \;=\; R_{\E^\perp}(T)\,\big(1 + o(1)\big)
\end{equation}
in the regime in which Proposition~\ref{prop:lperp} applies.
\end{corollary}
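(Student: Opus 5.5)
We derive the corollary from Proposition~\ref{prop:lperp} by taking expectations over the Ornstein--Uhlenbeck trajectory, once for $\theta_T$ and once for $\bar\theta_T$, and then dividing. By Proposition~\ref{prop:lperp}, for every $\theta$ in the neighbourhood of $\theta_*=0$ where the expansion is valid,
\begin{equation}
L_{\mathrm{eq}}(\theta) = L_{\E^\perp}(\theta) + r(\theta), \qquad |r(\theta)| \le C\big(\|\theta\|^3 + \varepsilon\|\theta\|^2\big).
\end{equation}
Taking expectations gives $\Eb[L_{\mathrm{eq}}(\theta_T)] = \Eb[L_{\E^\perp}(\theta_T)] + \Eb[r(\theta_T)]$, and likewise for $\bar\theta_T$. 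It therefore suffices to show that both error expectations are $o(1)$ relative to the respective main terms. The claim then follows, since $\frac{a+o(a)}{b+o(b)} = \frac{a}{b}(1+o(1))$ and $a/b = R_{\E^\perp}(T)$ by \eqref{eq:r-perp}.

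\textbf{Bounding the errors.} Both $\theta_T$ and $\bar\theta_T$ are Gaussian with explicit second moments, given in \eqref{eq:1d-sol} and \eqref{eq:1d-avg-sol}. The third absolute moment of a Gaussian is controlled by its second moment: $\Eb\|\theta\|^3 \le C(\Eb\|\theta\|^2)^{3/2}$. This yields
\begin{equation}
|\Eb[r]| \le C\big(s^{3/2} + \varepsilon s\big), \qquad s = \Eb\|\theta\|^2.
\end{equation}
For the main term, we use that $H_{\E^\perp}\succ 0$ at the strict minimum with smallest eigenvalue $\lambda_{\min}>0$, so that
\begin{equation}
\Eb[L_{\E^\perp}] \ge \lambda_{\min}\, s_{\perp}, \qquad s_\perp = \Eb\|\theta^{\E^\perp}\|^2.
\end{equation}
Hence the relative error is of order $(s^{3/2}+\varepsilon s)/s_\perp$. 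This is small provided that (i) the overall scale of $\theta$ is small, which is the regime of Proposition~\ref{prop:lperp} and of the normalized noise $\sqrt{\eta/B}$, (ii) $\varepsilon\ll 1$, and (iii) $s$ is comparable to $s_\perp$.

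\textbf{The main obstacle: condition (iii).} We need $\Eb\|\theta\|^2 \lesssim \Eb\|\theta^{\E^\perp}\|^2$, i.e.\ the equivariant component must not dominate the parameter fluctuations. For $\theta_T$ at large $T$ this follows from the stationary variances $\approx 1/2$ per direction, computed via \eqref{eq:1d-sol} in the eigenbasis (recall $\Sigma\approx H$). The ratio $s/s_\perp$ is then bounded by the dimension ratio $\dim\Lc/\dim\E^\perp$, which is finite and bounded by \eqref{eq:dimratio}.

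For $\bar\theta_T$, the averaged second moments scale like $1/(\lambda_i T)$ for large eigenvalues. The ratio is again controlled by $\lambda_{\max}/\lambda_{\min}$ and the dimension counts, provided the slow directions with $\lambda_iT\ll 1$ have small initial components $\theta_0^i$. I would handle this by absorbing those slow contributions into the $\epsilon_T,\bar\epsilon_T$ terms already present in \eqref{eq:rperp-as-trH}.

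I expect to state the $o(1)$ somewhat informally, as the paper does: uniform in $T$ once the scale of $\theta$ and $\varepsilon$ are small. The proof would also require the trajectory to stay in the neighbourhood where the expansion holds, which for Gaussians can be ensured up to exponentially small tail mass using the local Lipschitz assumption on the loss.
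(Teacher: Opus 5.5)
Your argument is correct and follows the route the paper has in mind. The paper gives no separate proof: it treats the corollary as the immediate consequence of taking expectations in Proposition~\ref{prop:lperp} for $\theta_T$ and $\bar\theta_T$ and dividing, and your Gaussian moment bounds just make the $o(1)$ explicit.

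One fix is needed. \eqref{eq:dimratio} is a \emph{lower} bound on $\dim\Lc/\dim\E^\perp$, so it cannot bound $s/s_\perp$ in condition (iii). There you should simply use that this ratio is a finite constant of the architecture.

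Alternatively, you can sidestep (iii) entirely. Since $\N^G_\theta=\N_\theta$ for every $\theta\in\E$, the difference $\delta(x)$ vanishes on $\E$. The remainder in Proposition~\ref{prop:lperp} is therefore really $O(\|\theta\|\,\|\theta^{\E^\perp}\|^2)+O(\varepsilon\|\theta^{\E^\perp}\|^2)$. That is small relative to $L_{\E^\perp}$ with no comparison between $\Eb\|\theta\|^2$ and $\Eb\|\theta^{\E^\perp}\|^2$ needed.
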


\begin{remark}
From a practical standpoint, the orthogonal loss $L_{\E^\perp}$ is a theoretical construct: computing it explicitly requires knowledge of the Hessian and the orthogonal projection in the parameter space, which is typically intractable for deep networks. In contrast, $L_{\mathrm{eq}}$ is an easily computable, well-motivated metric that directly measures the equivariance of the network's predictions. Proposition~\ref{prop:lperp} essentially bridges this gap, establishing $L_{\mathrm{eq}}$ as a proxy for $L_{\E^\perp}$ near a well-fitted minimum. Consequently, any theoretical predictions or dynamics derived for $L_{\E^\perp}$, in particular, the ratio $R_{\E^\perp}$, can directly transit to $L_{\mathrm{eq}}$. This justifies the use of $L_{\mathrm{eq}}$ in our experiments to evaluate the learned equivariance of the network, with the confidence that this empirical metric follows the theoretical behavior of $L_{\E^\perp}$.
\end{remark}


\section{Experiments}
In this section, we present both synthetic and practical experiments to validate our theoretical results. In Section~\ref{sec:synth-expmts}, we conduct synthetic experiments to verify the theoretical predictions of Theorem~\ref{thm:improvement-ratio-general}. In Section~\ref{sec:practical}, we evaluate the performance of SWA on a range of image and graph classification tasks, chosen to cover diverse data modalities and architectural families with a chosen augmentation group.

\subsection{Synthetic experiments}
\label{sec:synth-expmts}
We conduct synthetic experiments to validate our theoretical settings, and verify our theoretical results. Throughout the section, we work with multi-layer perceptrons (MLPs) since they have completely unrestricted parameter space, and our theory applies without modification.

\subsubsection{Empirical validation of the quadratic approximation}
\label{sec:quad}
Before applying our theoretical framework to weight averaging, we first conduct an experiment to validate the quadratic approximation of the loss landscape near a well-fitted minimum. We employ a teacher-student setup where both models are MLPs with identical architectures, and the input data is sampled from a standard Gaussian distribution. In this controlled setting, the global minimum is explicitly known, allowing us to compute the exact empirical Hessian at the minimizer. 

To simulate the dynamics of late-stage training, we perturb the parameters away from the minimum with stochastic gradient ascent, which explicitly pushes the parameters along the noise distribution of SGD. We then perform standard stochastic gradient descent from these perturbed points, tracking both the true loss and its theoretically estimated quadratic approximation.

We first analyze the spectral properties of the Hessian. As illustrated in Figure~\ref{fig:hessrank}, the effective rank of the Hessian monotonically increases with the number of data samples used for its computation, eventually plateauing at a sample size of 45. Furthermore, the resulting matrix is nearly full rank; the difference is primarily due to the positive homogeneity of the ReLU activation functions, which causes scale invariances in the parameter space.

Despite this slight rank deficiency, the quadratic approximation remains accurate under the training dynamics. Figure~\ref{fig:quadapprox} compares the true loss against the quadratic estimation for the optimization trajectory originating from the gradient ascent perturbations. The approximation aligns closely with the full loss, and the approximation error strictly decreases and stabilizes as the parameters converge back toward the minimizer. Crucially, this high accuracy holds because the parameter trajectory is not exploring the landscape arbitrarily. As is shown with the calculation, the covariance matrix of the SGD noise is asymptotically the Hessian, causing the optimization path to be heavily biased toward the top eigenspaces where the Hessian dominates the local geometry. By naturally avoiding flat or highly non-linear directions, the SGD trajectory remains confined to a region where the loss is strictly dominated by the quadratic term, rendering our approximation highly accurate.
\begin{figure}
    \centering
    \begin{subfigure}{0.33\linewidth}
        \centering
        \includegraphics[width=\linewidth]{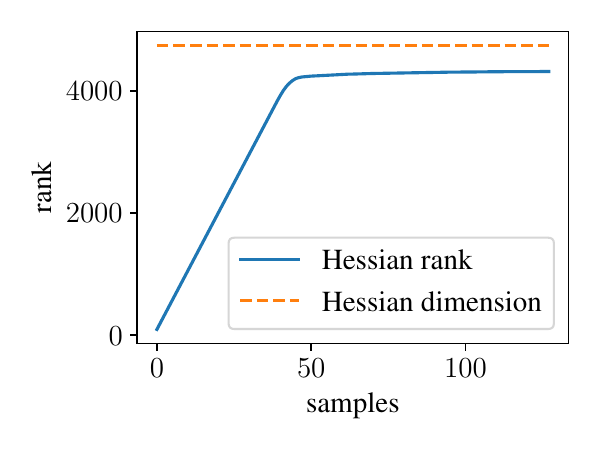}
        \caption{}
        \label{fig:hessrank}
    \end{subfigure}
    \begin{subfigure}{0.66\linewidth}
        \centering
        \includegraphics[width=\linewidth]{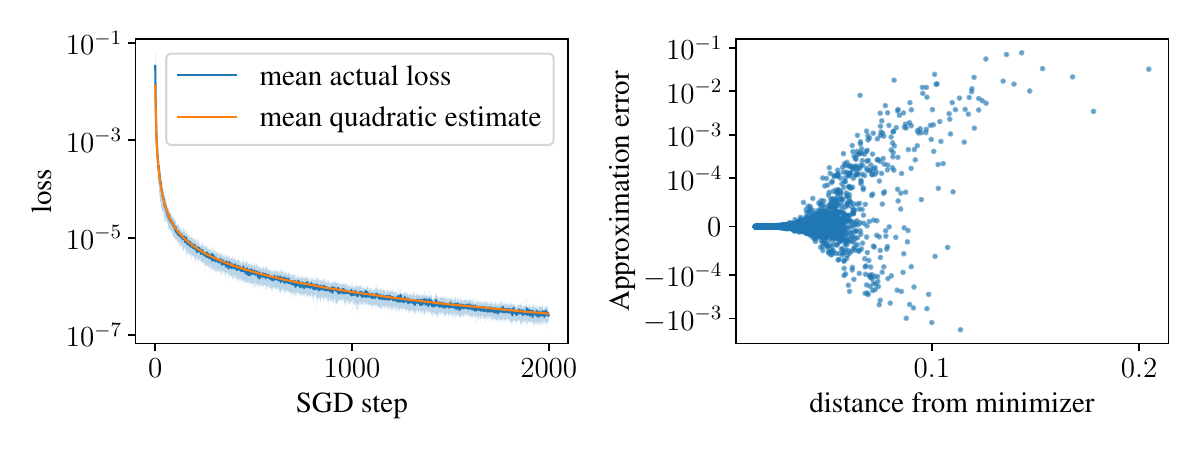}
        \caption{}
        \label{fig:quadapprox}
    \end{subfigure}
    \caption{Empirical validation of the quadratic approximation in a teacher-student MLP setting. 
        \textbf{(a)} Effective rank of the empirical Hessian evaluated at the global minimum, plotted against the number of data samples used.
        \textbf{(b)} Evaluation of the quadratic approximation along an optimization trajectory (sampled via gradient ascent). \textit{Left:} The true loss strictly follows the quadratic estimation. \textit{Right:} The approximation error between full- and quadratic loss stabilizes and strictly decreases as the parameter distance to the minimizer diminishes.}
    \label{fig:example}
    \vspace{-1em}
\end{figure}
\subsubsection{Verification of the theorem}
\label{sec:verification}
With the quadratic approximation for the loss function justified, we now extend the teacher-student setup of Section~\ref{sec:quad}, and verify our main theorem (Theorem~\ref{thm:improvement-ratio-general}).

\textbf{Setup.}
In this experiment, the teacher and student share the same expressivity in terms
of equivariant functions, but they are no longer identical: the teacher is a
multilayer perceptron whose hidden layers carry the regular representation of
$G = C_n$, so that the network is exactly $G$-equivariant by construction. On the other hand, the student is an
ordinary MLP of matching width and depth, with no constraint tying its
weights together. Working with MLPs rather than convolutional architectures is
deliberate: the student's parameter space is then the full space with no affine restriction $\mathcal{L} \subsetneqq
\mathcal{H}$, which is exactly the setting in which Theorem~\ref{thm:improvement-ratio-general} was
derived and the theorem applies without modification.

The teacher's weights are embedded into the student's parameter space. Therefore, the
minimizer is accessible to the student without relying on the convergence of SGD
to recover it. Because the layerwise representations are explicit, the group
action $\bar\rho$ on the parameter space is a permutation of coordinates, the decomposition $\mathcal{E} \oplus \mathcal{E}^{\perp}$ is
explicit, and the structural factors $r$ and $m_\lambda$ are directly accessible. As a result, the trace ratio
$\operatorname{Tr}(H_{\mathcal{E}^{\perp}})/\operatorname{Tr}(H)$ and the
asymptotic ratios are evaluated exactly.

Starting from $\theta_*$, we perturb the student away from $\theta_*$ by entrywise independent Gaussian diffusion added
directly to the weights, with the standard deviation proportional to that of each weight. From each sampled parameter, we
train the network with SGD, 
evaluating $L(\theta_T)$, $L(\bar\theta_T)$, $L_{\mathrm{eq}}(\theta_T)$ and
$L_{\mathrm{eq}}(\bar\theta_T)$ along the trajectory, from which we obtain
empirical estimates of $R(T)$, $R_{\mathrm{eq}}(T)$ and their ratio.

\textbf{Dependence on the averaging time.}
For a fixed cyclic group $G = C_4$ acting on the input, we track
$R_{\mathrm{eq}}(T)/R(T)$ as a function of the number of SWA epochs $T$; the result
is shown in Figure~\ref{fig:synthetic_ratio}~(left). The ratio increases
monotonically from values below the asymptotic bound early in averaging, before
gradually decreasing and plateauing above
the theoretical bound for large $T$.

\textbf{Dependence on the group size.}
We repeat the same procedure for teachers equivariant under cyclic groups $C_n$
of increasing order, holding the student architecture and training budget
fixed, and record the plateaued value of $R_{\mathrm{eq}}(T)/R(T)$ for each $n$; see
Figure~\ref{fig:synthetic_ratio}~(right). The resulting values decrease with
$|G|$, in qualitative agreement with the trend implied by
Theorem~\ref{thm:improvement-ratio-general} and the discussion following it: larger symmetry groups lead to lower dimension ratio, since $r|G|/(r|G|-1) \to 1$ as $|G|$ grows, so the guaranteed
excess of $R_{\mathrm{eq}}$ over $R$ shrinks correspondingly. Within the range of
group sizes tested, the empirical ratio nonetheless remains bounded above $1$
throughout, indicating that the equivariance-specific benefit of SWA persists
even as it weakens for larger groups.

Taken together, these two experiments substantiate both qualitative predictions
of Theorem~\ref{thm:improvement-ratio-general}: convergence of $R_{\mathrm{eq}}/R$ toward its asymptotic
value as the averaging window grows, and a decreasing dependence of that
asymptotic value on the size of the augmentation group.

\begin{figure}[t]
    \centering
    \begin{subfigure}[b]{0.45\textwidth}
        \centering
        \includegraphics[width=\textwidth]{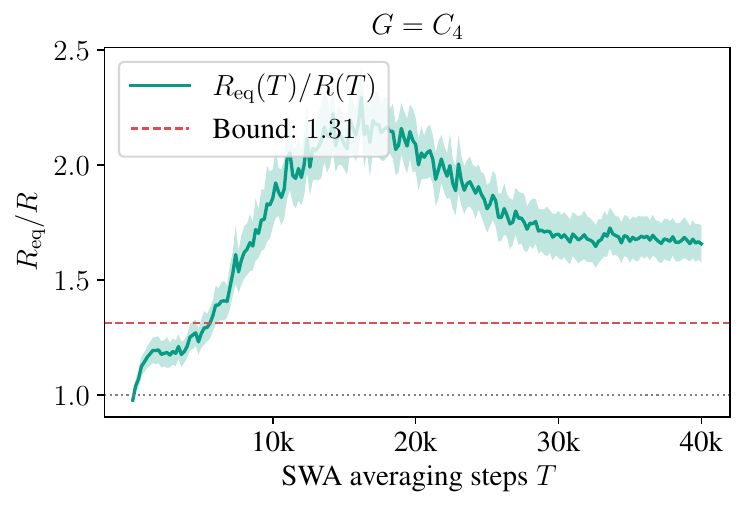}
        \caption{}
        \label{fig:ratio_vs_T}
    \end{subfigure}
    \hfill
    \begin{subfigure}[b]{0.45\textwidth}
        \centering
        \includegraphics[width=\textwidth]{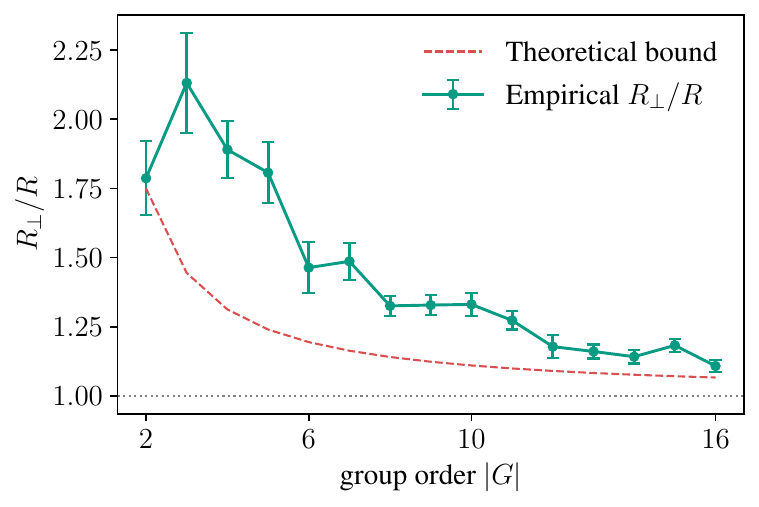}
        \caption{}
        \label{fig:ratio_vs_group}
    \end{subfigure}
    \caption{Synthetic verification of Theorem~\ref{thm:improvement-ratio-general} in the
   teacher-student setting. \textbf{(a)}~The empirical ratio
    $R_{\mathrm{eq}}(T)/R(T)$ for $G = C_4$ as a function of the SWA averaging time
    $T$. The curve gradually rises above the asymptotic bound
    $r|G|/(r|G|-1)\cdot 1/(1-|G|^{-3}) \approx 1.31$ (dashed line) and plateaus after mild decrease.
    \textbf{(b)}~Plateaued value of $R_{\mathrm{eq}}/R$ against the group order $|G|$ for cyclic groups $C_n$. }
    \label{fig:synthetic_ratio}
    \vspace{-1em}
\end{figure}

\subsection{Practical tasks}
\label{sec:practical}
We evaluate the performance of SWA on a range of image and graph classification tasks, chosen
to cover diverse input modalities and architectural families while
keeping the augmentation group fixed across each experimental block.
\subsubsection{Datasets and architectures}

\textbf{Image classification.}
We use [MNIST, Fashion-MNIST, CIFAR-10, CIFAR-100, and ImageNet-100],
augmented under the action of discrete cyclic subgroups of $SO(2)$. For the primary experiments, we use the group of 90-degree rotations $C_4$. This range spans low-resolution grayscale images, medium-resolution natural images with varying class counts (CIFAR-10/100), and higher-resolution natural images (ImageNet-100).

To explore the generality across architectures of the equivariance gain from SWA, we evaluate four families on this image-classification task:
\begin{itemize}
  \item \textbf{Multilayer perceptrons (MLPs).} These serve as a baseline and provide the closest match to the theoretical setting of Section~\ref{sec:theory}, where the parameter space is unrestricted, and Theorem~\ref{thm:improvement-ratio-general} applies without modification.
  \item \textbf{Convolutional networks (CNNs)}: VGG-16, PreResNet-164, and WideResNet-28$\times$10. CNNs are the standard architecture for image classification and admit a natural lift to their group-equivariant    counterparts (G-CNNs), which are exactly $G$-equivariant by construction. Comparing standard CNNs trained with $G$-augmentation against this architectural baseline isolates the contribution of SWA-induced statistical equivariance from architecturally-enforced equivariance.
  \item \textbf{Vision transformers}: ViT-S. ViTs process inputs as patch sequences, so any equivariance present is learned from data rather than architecturally imposed. This makes ViTs a test of whether SWA improves equivariance as a generic sampling mechanism, rather than a consequence of the architectural inductive biases of CNNs.
  \item \textbf{Graph neural networks on superpixels}: ViG-Ti applied to superpixel graphs extracted from the image inputs. This architecture lies between CNNs and pure GNNs. The input geometry is derived from the image, but the computation over graphs ignores the pixel grid information. It tests whether the SWA equivariance gain persists when the data geometry is diluted into node and edge features from exact pixel values.
\end{itemize}
\textbf{Graph classification.}
To verify that the framework extends beyond image data, we additionally evaluate on three graph classification benchmarks: [DHFR, BZR, COX2], using graph isomorphism networks. Here the relevant symmetry is 3D rotation, and we augment with discrete subgroups of $SO(3)$. We use the GIN introduced by \cite{xu2019powerfulgraphneuralnetworks}; it takes in positional features of the atoms and is therefore not manifestly equivariant. The graph classification experiments serve as a confirmation of consistency of the SWA effect ($\Delta L_{\mathrm{eq}} < 0$).
\subsubsection{Training and evaluation}
For training, we use cross entropy loss and the stochastic gradient descent (SGD) optimizer, along with canonical choices of hyperparameters for each architecture. For each model, we train for a number of epochs until convergence, and then train for an additional number of epochs while taking the average of the weights at the end of each epoch. We evaluate the models before and after applying SWA, and the gain from applying SWA.

Besides the cross entropy loss and accuracy on the test set, we use the following metrics for evaluation of equivariance:
\begin{itemize}
  \item Orbit Same Prediction (OSP): the average proportion of the prediction within the same orbit that matches the mode prediction of the orbit. This metric is a direct measure of the equivariance of the model, as it quantifies how consistent the predictions are within the same orbit. Mathematically, the OSP can be defined as:
\begin{equation}
\mathrm{OSP} = \frac{1}{N} \sum_{i=1}^{N}\frac{1}{|G|}\max_{1\leq j \leq n_Y} \sum_{g\in G} \mathbf{1}\{\hat{y}_{g_X x_i} = j\}
\end{equation}
where $N$ is the number of test samples, and $\hat{y}_x$ denotes the prediction of sample $x$. Note that the OSP is lower bounded by the accuracy, and for a perfectly equivariant model, the OSP would be $100\%$.
\item Equivariance Loss: the KL-divergence between the prediction of a sample and the average prediction of its orbit, as defined in Eq.~\eqref{eq:Leq-def}. This serves as a continuous measure that captures how much the predictions deviate from being equivariant.
\end{itemize}

\subsubsection{Results}
We begin with image classification results, reporting the absolute gains in accuracy and equivariance from SWA, and show that the relative improvement ratio $R_{\mathrm{eq}}/R$ is consistently larger than $1$. That is, SWA improves equivariance more than it improves general performance.

\begin{table*}[t]
\centering
\footnotesize
\setlength{\tabcolsep}{4pt}
\begin{tabular}{ll
    S[table-format=2.2]@{\,}l
    S[table-format=2.2]@{\,}l
    S[table-format=-1.2]@{\,}l
    S[table-format=2.2]@{\,}l
    S[table-format=2.2]@{\,}l
    S[table-format=-1.2]@{\,}l
}
\toprule
& & \multicolumn{6}{c}{Accuracy (\%)} & \multicolumn{6}{c}{OSP (\%)} \\
\cmidrule(lr){3-8} \cmidrule(lr){9-14}
Dataset & Model 
  & \multicolumn{2}{c}{Before} & \multicolumn{2}{c}{After} & \multicolumn{2}{c}{$\Delta$} 
  & \multicolumn{2}{c}{Before} & \multicolumn{2}{c}{After} & \multicolumn{2}{c}{$\Delta$} \\
\midrule
\multirow{2}{*}{MNIST} 
 & MLP & 94.55 & \scriptsize$\pm$0.20 & 94.68 & \scriptsize$\pm$0.15 & 0.13 & \scriptsize$\pm$0.11 & 97.73 & \scriptsize$\pm$0.12 & 97.93 & \scriptsize$\pm$0.09 & 0.20 & \scriptsize$\pm$0.07 \\
 & CNN & 74.23 & \scriptsize$\pm$6.59 & 71.74 & \scriptsize$\pm$3.91 & -2.49 & \scriptsize$\pm$6.70 & 94.56 & \scriptsize$\pm$0.57 & 96.85 & \scriptsize$\pm$0.24 & 2.30 & \scriptsize$\pm$0.56 \\
\midrule
\multirow{2}{*}{FashionMNIST} 
 & MLP & 85.28 & \scriptsize$\pm$0.09 & 85.98 & \scriptsize$\pm$0.07 & 0.71 & \scriptsize$\pm$0.11 & 95.62 & \scriptsize$\pm$0.21 & 96.71 & \scriptsize$\pm$0.16 & 1.09 & \scriptsize$\pm$0.15 \\
 & CNN & 75.43 & \scriptsize$\pm$2.79 & 85.21 & \scriptsize$\pm$0.08 & 9.77 & \scriptsize$\pm$2.74 & 96.08 & \scriptsize$\pm$0.33 & 98.45 & \scriptsize$\pm$0.06 & 2.37 & \scriptsize$\pm$0.30 \\
\midrule
\multirow{3}{*}{CIFAR-10} 
 & VGG-16 & 87.83 & \scriptsize$\pm$0.16 & 89.83 & \scriptsize$\pm$0.09 & 2.00 & \scriptsize$\pm$0.20 & 92.97 & \scriptsize$\pm$0.15 & 95.62 & \scriptsize$\pm$0.06 & 2.65 & \scriptsize$\pm$0.13 \\
 & PRN & 87.24 & \scriptsize$\pm$0.44 & 92.14 & \scriptsize$\pm$0.23 & 4.89 & \scriptsize$\pm$0.29 & 94.70 & \scriptsize$\pm$0.20 & 98.07 & \scriptsize$\pm$0.09 & 3.37 & \scriptsize$\pm$0.15 \\
 & WRN & 90.09 & \scriptsize$\pm$0.41 & 94.72 & \scriptsize$\pm$0.20 & 4.63 & \scriptsize$\pm$0.27 & 95.40 & \scriptsize$\pm$0.13 & 98.90 & \scriptsize$\pm$0.07 & 3.50 & \scriptsize$\pm$0.17 \\
\midrule
\multirow{3}{*}{CIFAR-100} 
 & VGG-16 & 54.76 & \scriptsize$\pm$0.66 & 60.17 & \scriptsize$\pm$0.63 & 5.41 & \scriptsize$\pm$0.33 & 81.43 & \scriptsize$\pm$0.53 & 87.53 & \scriptsize$\pm$0.26 & 6.10 & \scriptsize$\pm$0.37 \\
 & PRN & 58.51 & \scriptsize$\pm$0.46 & 67.38 & \scriptsize$\pm$0.79 & 8.87 & \scriptsize$\pm$0.38 & 86.87 & \scriptsize$\pm$0.10 & 93.63 & \scriptsize$\pm$0.08 & 6.76 & \scriptsize$\pm$0.15 \\
 & WRN & 65.89 & \scriptsize$\pm$0.39 & 74.93 & \scriptsize$\pm$0.20 & 9.04 & \scriptsize$\pm$0.46 & 87.91 & \scriptsize$\pm$0.24 & 96.05 & \scriptsize$\pm$0.10 & 8.14 & \scriptsize$\pm$0.18 \\
\midrule
\multirow{2}{*}{ImageNet-100} 
 & ViG-Ti & 72.36 & \scriptsize$\pm$0.54 & 72.19 & \scriptsize$\pm$0.97 & -0.17 & \scriptsize$\pm$0.46 & 91.10 & \scriptsize$\pm$0.14 & 93.00 & \scriptsize$\pm$0.29 & 1.90 & \scriptsize$\pm$0.24 \\
 & ViT-S & 61.59 & \scriptsize$\pm$1.04 & 76.27 & \scriptsize$\pm$0.08 & 14.68 & \scriptsize$\pm$1.04 & 88.71 & \scriptsize$\pm$0.32 & 95.95 & \scriptsize$\pm$0.22 & 7.25 & \scriptsize$\pm$0.45 \\
\bottomrule
\end{tabular}
\caption{Accuracy and OSP results under $C_4$ rotation augmentation, showcasing the impact of stochastic weight averaging (SWA). Positive $\Delta$ values indicate an improvement after applying SWA. Model abbreviations: PRN (PreResNet-164), WRN (WideResNet-28$\times$10).}
\label{tab:swa_acc_osp_results}
\vspace{-1em}
\end{table*}

\textbf{Boost of accuracy and equivariance.} Table~\ref{tab:swa_acc_osp_results} summarizes the empirical impact of SWA on test accuracy and OSP across various datasets and architectures. A primary observation is the positive correlation between the gains in standard generalization ($\Delta$Acc) and the improvements in equivariance ($\Delta$OSP). Across different models, particularly on more complex tasks such as CIFAR-10 and CIFAR-100, boosts in test accuracy are consistently accompanied by proportionally large increases in OSP. For instance, the models that benefit the most in accuracy (e.g., PreResNet and WideResNet on CIFAR-100) concurrently achieve the highest absolute gains in OSP (up to 8.14\%). This strong correlation suggests that a significant portion of SWA's performance benefit stems directly from the network acquiring better transformation invariance.

Furthermore, the improvement in equivariance via SWA appears as a robust and universal phenomenon. The improvement in OSP is strictly positive across all tested configurations, even in regimes where standard accuracy gains are negative (the ViG-Tiny setting on ImageNet-100). This persistent enhancement in equivariance, irrespective of the fluctuations in overall task accuracy, agrees with our theoretical framework.

\begin{table*}[t]
\centering
\footnotesize
\setlength{\tabcolsep}{6pt}
\begin{tabular}{ll
    S[table-format=1.3]
    S[table-format=1.3]@{\,}l
    S[table-format=1.3]
    S[table-format=1.3]@{\,}l
    S[table-format=1.2]
}
\toprule
& & \multicolumn{3}{c}{Loss ($L$)} & \multicolumn{3}{c}{Equivariance Loss ($L_{\mathrm{eq}}$)} & {Ratio} \\
\cmidrule(lr){3-5} \cmidrule(lr){6-8}
Dataset & Model 
  & {Before} & \multicolumn{2}{c}{After} 
  & {Before} & \multicolumn{2}{c}{After} 
  & {$R_\mathrm{eq} / R$} \\
\midrule
\multirow{2}{*}{MNIST} 
 & MLP & 0.187 & 0.183 & \scriptsize$\pm$0.003 & 0.021 & 0.018 & \scriptsize$\pm$0.001 & 1.14 \\
 & CNN & 0.791 & 0.795 & \scriptsize$\pm$0.099 & 0.019 & 0.005 & \scriptsize$\pm$0.000 & 3.93 \\
\midrule
\multirow{2}{*}{FashionMNIST} 
 & MLP & 0.422 & 0.406 & \scriptsize$\pm$0.002 & 0.021 & 0.014 & \scriptsize$\pm$0.001 & 1.44 \\
 & CNN & 0.720 & 0.455 & \scriptsize$\pm$0.002 & 0.011 & 0.003 & \scriptsize$\pm$0.000 & 2.52 \\
\midrule
\multirow{3}{*}{CIFAR-10} 
 & VGG-16 & 0.430 & 0.311 & \scriptsize$\pm$0.004 & 0.125 & 0.041 & \scriptsize$\pm$0.001 & 2.21 \\
 & PRN & 0.499 & 0.300 & \scriptsize$\pm$0.009 & 0.079 & 0.017 & \scriptsize$\pm$0.001 & 2.79 \\
 & WRN$\times$10 & 0.407 & 0.220 & \scriptsize$\pm$0.009 & 0.083 & 0.009 & \scriptsize$\pm$0.001 & 4.98 \\
\midrule
\multirow{3}{*}{CIFAR-100} 
 & VGG-16 & 2.627 & 1.495 & \scriptsize$\pm$0.032 & 0.327 & 0.056 & \scriptsize$\pm$0.005 & 3.32 \\
 & PRN & 2.050 & 1.608 & \scriptsize$\pm$0.042 & 0.155 & 0.045 & \scriptsize$\pm$0.001 & 2.70 \\
 & WRN & 1.652 & 1.359 & \scriptsize$\pm$0.010 & 0.161 & 0.026 & \scriptsize$\pm$0.001 & 5.09 \\
\midrule
\multirow{2}{*}{ImageNet-100} 
 & ViG-T & 1.178 & 1.352 & \scriptsize$\pm$0.058 & 0.088 & 0.066 & \scriptsize$\pm$0.004 & 1.53 \\
 & ViT-S & 1.907 & 1.154 & \scriptsize$\pm$0.011 & 0.118 & 0.027 & \scriptsize$\pm$0.001 & 2.60 \\
\bottomrule
\end{tabular}
\caption{Loss and Equivariance loss ($L_{\mathrm{eq}}$) results under $C_4$ rotation augmentation. $R_{\mathrm{eq}} / R$ measures how much more efficiently SWA minimizes the equivariance loss compared to the overall cross-entropy loss. Model abbreviations: PRN (PreResNet-164), WRN (WideResNet-28$\times$10).}
\label{tab:swa_loss_leq_results}
\vspace{-1em}
\end{table*}
\textbf{Relative improvement ratio.}
In Section~\ref{sec:verification}, we empirically verified our main theorem (Theorem~\ref{thm:improvement-ratio-general}) on synthetic tasks tailored to our theoretical setting. On practical datasets, however, several assumptions underlying the theorem no longer hold (e.g., the unrestricted parameter space), and the architectures involve structural information — such as the exact layerwise irrep decomposition — that is intractable to compute. We therefore restrict ourselves to demonstrating the qualitative claim that SWA improves equivariance more than it improves predictive performance. We revisit the image classification tasks and compare the gains in loss and in equivariance loss, listed in Table~\ref{tab:swa_loss_leq_results}.

Across all architectures and datasets we observe $R_{\mathrm{eq}}/R > 1$, i.e.\ SWA reduces the equivariance loss $L_{\mathrm{eq}}$ strictly more efficiently than it reduces the overall loss $L$. The magnitudes admit a more quantitative reading. For the augmentation group $C_4$ used throughout Table~\ref{tab:swa_loss_leq_results}, the right-hand side of \eqref{eq:improv-rat-bound} is at most $r|G|/(r|G| - 1) = 4/3$, attained at $r = 1$ and in the limit of vanishing multiplicity factor $\max_\lambda m_\lambda/(d_\lambda n)$; larger $r$ or larger multiplicities only decrease it. 

The MLPs, which are the architectures closest to the setting of Theorem~\ref{thm:improvement-ratio-general} in that the parameter space is unrestricted ($\mathcal{L} = \mathcal{H}$), give ratios of $1.14$ (MNIST) and $1.44$ (FashionMNIST), i.e.\ they lie closely to the theoretical value $4/3$. In particular, the MNIST value falling slightly below $4/3$ is not a contradiction, since we disregard the structural facts that lead to actual values for the factors $r$ and $\max_\lambda m_\lambda/(d_\lambda n)$. In summary, this reads as evidence that the bound is close to tight in the regime where its assumptions hold.

The deep convolutional models behave differently, reaching ratios between $2.21$ and $5.09$, with bound $4/3$ being merely conservative for these settings. Specifically, the ratio shows a correlation with task complexity and model capacity. The deviation from perfect prediction is potentially the root cause of the looseness of the bound. Furthermore, the dimension count \eqref{eq:dimratio} assumes an unrestricted parameter space, whereas for residual networks, their admissible parameters form a proper affine subspace $\mathcal{L} \subset \mathcal{H}$ whose equivariant part $\mathcal{E}$ is a smaller fraction of $\mathcal{L}$ than the count suggests. Both effects push the true ratio upwards relative to the bound, and the trend in Table~\ref{tab:swa_loss_leq_results} broadly increases with model depth within each dataset, from VGG-16 to WideResNet-28$\times$10.
\begin{table}[t]
\centering
\footnotesize
\begin{tabular}{ll
    S[table-format=2.2]@{\,}l
    S[table-format=2.2]@{\,}l
    S[table-format=-1.2]@{\,}l
    S[table-format=2.2]@{\,}l
    S[table-format=2.2]@{\,}l
    S[table-format=-1.2]@{\,}l
}
\toprule
& & \multicolumn{6}{c}{Accuracy (\%)} & \multicolumn{6}{c}{OSP (\%)} \\
\cmidrule(lr){3-8} \cmidrule(lr){9-14}
Dataset & Group
  & \multicolumn{2}{c}{Before} & \multicolumn{2}{c}{After} & \multicolumn{2}{c}{$\Delta$}
  & \multicolumn{2}{c}{Before} & \multicolumn{2}{c}{After} & \multicolumn{2}{c}{$\Delta$} \\
\midrule
\multirow{3}{*}{DHFR}
 & Tetra & 79.90 & \scriptsize$\pm$2.07 & 81.45 & \scriptsize$\pm$1.57 & 1.55 & \scriptsize$\pm$1.82 & 96.36 & \scriptsize$\pm$0.18 & 98.13 & \scriptsize$\pm$1.15 & 1.77 & \scriptsize$\pm$1.13 \\
 & Octa  & 82.28 & \scriptsize$\pm$2.44 & 79.47 & \scriptsize$\pm$2.15 & -2.81 & \scriptsize$\pm$3.58 & 97.11 & \scriptsize$\pm$0.27 & 97.79 & \scriptsize$\pm$0.76 & 0.68 & \scriptsize$\pm$0.61 \\
 & Icosa & 80.23 & \scriptsize$\pm$2.44 & 78.42 & \scriptsize$\pm$4.09 & -1.81 & \scriptsize$\pm$2.23 & 96.96 & \scriptsize$\pm$0.40 & 99.42 & \scriptsize$\pm$0.51 & 2.46 & \scriptsize$\pm$0.47 \\
\cmidrule(l){2-14}
\multirow{3}{*}{BZR}
 & Tetra & 85.83 & \scriptsize$\pm$3.15 & 84.15 & \scriptsize$\pm$3.34 & -1.68 & \scriptsize$\pm$2.99 & 98.22 & \scriptsize$\pm$0.67 & 98.98 & \scriptsize$\pm$1.22 & 0.76 & \scriptsize$\pm$1.14 \\
 & Octa  & 83.13 & \scriptsize$\pm$2.51 & 85.61 & \scriptsize$\pm$3.04 & 2.48 & \scriptsize$\pm$1.07 & 97.68 & \scriptsize$\pm$0.57 & 99.29 & \scriptsize$\pm$0.71 & 1.61 & \scriptsize$\pm$0.49 \\
 & Icosa & 86.46 & \scriptsize$\pm$2.47 & 83.66 & \scriptsize$\pm$2.38 & -2.80 & \scriptsize$\pm$2.01 & 98.82 & \scriptsize$\pm$0.40 & 99.76 & \scriptsize$\pm$0.22 & 0.94 & \scriptsize$\pm$0.29 \\
\cmidrule(l){2-14}
\multirow{3}{*}{COX2}
 & Tetra & 78.54 & \scriptsize$\pm$2.05 & 77.87 & \scriptsize$\pm$0.89 & -0.67 & \scriptsize$\pm$1.27 & 96.75 & \scriptsize$\pm$0.45 & 98.62 & \scriptsize$\pm$0.95 & 1.87 & \scriptsize$\pm$0.98 \\
 & Octa  & 78.91 & \scriptsize$\pm$1.88 & 80.64 & \scriptsize$\pm$1.17 & 1.73 & \scriptsize$\pm$2.43 & 97.30 & \scriptsize$\pm$0.36 & 99.96 & \scriptsize$\pm$0.10 & 2.66 & \scriptsize$\pm$0.40 \\
 & Icosa & 78.68 & \scriptsize$\pm$3.13 & 80.00 & \scriptsize$\pm$1.17 & 1.32 & \scriptsize$\pm$3.99 & 97.66 & \scriptsize$\pm$0.18 & 99.65 & \scriptsize$\pm$0.45 & 2.00 & \scriptsize$\pm$0.52 \\
\bottomrule
\end{tabular}
\caption{Accuracy and OSP on the graph classification benchmarks under
augmentation by discrete rotation subgroups of $SO(3)$ (tetrahedral,
octahedral and icosahedral), before and after stochastic weight averaging.
Positive $\Delta$ values indicate improvement.}
\label{tab:results_graph}
\vspace{-1em}
\end{table}

\textbf{Generalization to graph classification tasks.}
The results above establish that SWA robustly improves equivariance on image classification tasks, occasionally independently of whether accuracy improves at all. However, it is natural to ask whether this pattern is specific to convolutional architectures and image data, or reflects a more general property of SWA. To test this, we repeat the same analysis on graph classification tasks, where both the underlying group action and the network architecture differ substantially from the image setting. 

Table~\ref{tab:results_graph} reports accuracy and OSP for the three
benchmarks under augmentation by the tetrahedral, octahedral and icosahedral
rotation groups. The pattern differs from the image experiments. The effect of weight averaging on the test accuracy is essentially random: five of the nine
configurations show a negative $\Delta\mathrm{Acc}$, the magnitudes range from
$-2.81\%$ to $+2.48\%$, and in every setting the error bar over seeds is
comparable to or larger than the mean. Equivariance, in contrast, improves in all
nine configurations, with $\Delta\mathrm{OSP}$ between $0.68\%$ and $2.66\%$ and
a clearly resolved effect in the majority of them.

\section{Conclusion}

We analyzed stochastic weight averaging (SWA) applied to networks trained on augmented data, framing the problem in terms of an Ornstein--Uhlenbeck approximation of the SGD trajectory near an equivariant minimizer. Decomposing the parameter space into equivariant and orthogonal subspaces $\E \oplus \E^{\perp}$, we showed that the loss ratios $R(T)$ and $R_{\mathcal{E}^{\perp}}(T)$ both diverge with the averaging time $T$, and that their relative rate is governed by the trace ratio $\mathrm{Tr}(H_{\mathcal{E}^{\perp}})/\mathrm{Tr}(H)$ at the minimizer. Associating the ratio with the neural tangent kernel in the infinite-width limit, we derived a bound (Theorem~\ref{thm:improvement-ratio-general}) showing that this relative ratio exceeds one under mild conditions on the group and the minimizer, establishing that SWA yields an equivariance boost more prominent than its performance boost. We verified this prediction on a synthetic teacher-student task where all structural constants are accessible exactly; we demonstrated the qualitative effect of SWA increasing equivariance across image and graph classification benchmarks spanning a variety of model families. 

Several future directions remain open. First, our analysis is confined to the lazy NTK regime. Extending it to feature learning, where the kernel evolves during training would bring the analysis closer to practical training dynamics.
Second, our analysis is local to a single equivariant minimizer. On the other hand, real SGD trajectories may transition between basins during training. Understanding how competition or transitions between minima interact with the equivariance boost is left to future work.
Furthermore, we treat the covariance matrix $\Sigma(\theta)$ as constant at the minimizer, whereas in practice it varies along the trajectory. Injecting this complexity into the approximation would enrich the framework and refine the theoretical predictions.
A complete empirical analysis of when SWA increases equivariance and when it does not remains an open problem. This involves factors including different tasks, architectures, and hyperparameters, therefore requiring more extensive studies that go beyond what was done in this work.


\section*{Acknowledgements}
This work was partially supported by the Wallenberg AI, Autonomous Systems and Software Program (WASP) funded by the Knut and Alice Wallenberg Foundation. The computations were enabled by resources provided by the National Academic Infrastructure for Supercomputing in Sweden (NAISS), partially funded by the Swedish Research Council through grant agreement no.\ 2025/22-1341. We thank Jakob Galley for interesting discussions.

\renewcommand*{\bibfont}{\normalfont\footnotesize}
\printbibliography

\appendix

\section{Proof of Proposition \ref{prop:ensemble}} \label{app:proof_ensemble}
    It is well known that the evolution of \eqref{eq:stochapp} is described by the Fokker-Planck equation: If $p_t$ is the density of the distribution at time $t$, we have
\begin{align} \label{eq:fokkerplanck}
    \partial_t p_t(\theta) =& -\nabla \cdot (p_t\nabla L ) + \nabla^2\cdot (Dp_t)
\end{align}    
Here, $D$ is the \emph{diffusion tensor}:
\begin{align}
    D(\theta) = \frac{1}{2}P(\theta)P(\theta)^T = \frac{1}{2}\Sigma(\theta)
\end{align}
We can express \eqref{eq:fokkerplanck} in operator form $\partial_t p_t = \Ac p_t$, with
\begin{equation}
    \Ac p = \nabla \cdot (p_t\nabla L ) + \nabla^2\cdot (Dp_t).
\end{equation}
It will be convenient to introduce also the push-forward operation as an operator:
\[
    (\Sg f)(\theta,t) = f(\bar g \theta,t).
\]
This for instance makes the difference between $\nabla L(\overline g \theta)$, which is $\Sg \nabla L$, and $\nabla[L(\bg \theta)]$, which is $\nabla(\Sg L)$.

The time-derivative $\partial_t$ trivially commutes with $\Sg$. Its commutation relations with the divergence and Laplace operators are as follows.
\begin{lemma} \label{lemma:diffops}
    For all $g\in G$, 
    \begin{align}
        \nabla \cdot[\bg^T \overline \Sg f]  =  \Sg [\nabla \cdot f],\quad\nabla^2 \cdot [\bg\Sg D \bg^T] = \Sg [\nabla^2 \cdot D]
    \end{align}
\end{lemma}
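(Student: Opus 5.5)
The plan is a direct coordinate computation that uses only the chain rule and the unitarity (orthogonality) of $\bar\rho$, i.e.\ $\bg^\top\bg=\bg\bg^\top=\mathrm{Id}$. I write $g_{ab}$ for the entries of $\bg$ and use Einstein summation throughout. I would not invoke any earlier result beyond this orthogonality assumption from the problem setting.

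\emph{First identity.} Let $f$ be a vector field and put $h(\theta)=\bg^\top(\Sg f)(\theta)=\bg^\top f(\bg\theta)$, so that $h_i(\theta)=g_{ai}f_a(\bg\theta)$. By the chain rule, $\partial_i[f_a(\bg\theta)]=g_{ci}(\partial_c f_a)(\bg\theta)$. Hence
\begin{equation*}
\nabla\cdot h=g_{ai}g_{ci}(\partial_c f_a)(\bg\theta)=(\bg\bg^\top)_{ac}(\partial_cf_a)(\bg\theta)=(\partial_af_a)(\bg\theta)=\Sg[\nabla\cdot f].
\end{equation*}
Equivalently, $\nabla\cdot h=\Tr(\bg^\top\, Df(\bg\theta)\,\bg)=\Tr(Df(\bg\theta))$ by cyclicity of the trace. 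This step is routine.

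\emph{Second identity.} Here $\nabla^2\cdot M=\partial_i\partial_jM_{ij}$ for a matrix field $M$, and the same mechanism should apply, now with two derivatives. Take $M(\theta)$ to be the conjugate of $\Sg D$ by $\bg$. Each of the two derivatives produces one factor of $\bg$ from the chain rule. Each index $i,j$ of $M$ carries one further factor of $\bg$ from the conjugation. Summing over $i$ and over $j$ then contracts each derivative factor against a conjugation factor, giving two copies of $\bg\bg^\top=\mathrm{Id}$. What remains is $(\partial_a\partial_bD_{ab})(\bg\theta)=\Sg[\nabla^2\cdot D]$. Symmetry of $D$, which holds since $D=\tfrac12\Sigma$, is not needed but can be used to tidy the indices.

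\emph{Main obstacle.} The only delicate point is the placement of transposes. Writing the conjugation as $M_{ij}=g_{ia}D_{ab}(\bg\theta)g_{jb}$ and differentiating gives the index sum $g_{ia}g_{ci}=(\bg^2)_{ca}$, which is not the identity. The contraction collapses correctly when the conjugation is oriented as in the first identity, i.e.\ $\bg^\top(\Sg D)\bg$, with the transpose on the side matching the $\bg^\top$ in $\bg^\top\Sg f$.

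I would therefore first fix the convention consistently. One option is to read the stated $\bg(\cdot)\bg^\top$ with $\bg$ acting through the pull-back $\theta\mapsto\bg^{-1}\theta=\bg^\top\theta$. The other is to state the identity for $\bg^\top\Sg D\,\bg$. Either way this is the form actually needed when verifying, via Lemma~\ref{lemma:cov}, that $\Sg p_t$ solves the Fokker--Planck equation \eqref{eq:fokkerplanck}. Once the orientation is fixed, both identities follow from the same chain-rule-plus-orthogonality computation.
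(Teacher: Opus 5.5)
Your proof is correct and uses the same index computation as the paper: the chain rule contributes one factor of $\bg$ per derivative, and orthogonality $\bg\bg^\top=\id$ collapses the contractions, with your first identity matching the paper's line by line. Your orientation remark is also right and reflects what the paper's proof actually does: it expands $(\bg^\top D\bg)_{ij}=g_{mi}D_{mn}g_{nj}$, so it establishes $\nabla^2\cdot[\bg^\top\Sg D\,\bg]=\Sg[\nabla^2\cdot D]$, whereas the literal $\bg\Sg D\bg^\top$ version produces $\bg^2$ and fails in general; your second option is the clean fix, because redefining $\Sg$ through $\bg^{-1}$ would instead break the first identity.
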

\begin{proof}
    Using Einstein summation, the chain rule gives, for any vector field \(h\),
    \begin{align}
        \nabla \cdot [\Sg h] = \partial_i \Sg h_i = g_{ji}\partial_j h_i 
    \end{align}
    where $g_{ij}$ denote the entries of the matrix corresponding to the action $\bg$. Since $(\bg^T  \Sg f)_i = g_{ki}\Sg f_k$, we get
    \begin{align}
        \nabla \cdot ( \bg^T \overline \Sg f) = g_{ki}\partial_i(\Sg f_k) = g_{ki}g_{ji} \Sg \partial_j f_k .
    \end{align}
    Now, since $\bg$ is orthogonal, we have $g_{ki}g_{ji} = g_{ki}g_{ij}^T=\delta_{kj}$, so 
    \begin{align}
         g_{ki}g_{ji} \Sg \partial_j f_k  = \delta_{kj} \Sg \partial_j f_k =  \Sg \partial_k f_k = \Sg [\nabla \cdot f],
    \end{align}
    which is the first claim. Similarly, the chain rule implies for every tensor field $E$
    \begin{align}
        \nabla^2 \cdot [\Sg E] = \partial_{i}\partial_j \Sg D_{ij} = g_{ki}g_{\ell j} \partial_k\partial_\ell D_{ij}.
    \end{align}
    Since $(\bg^TD \bg )_{ij} = g_{mi}D_{mn}g_{nj}$, we get
    \begin{align}
         \nabla^2 \cdot [\bg\Sg D \bg^T] &= g_{mi}g_{nj} \partial_{i}\partial_j[\Sg D_{mn}] = g_{mi}g_{nj}g_{ki}g_{\ell j} \Sg \partial_k\partial_\ell D_{mn} = \delta_{km}\delta_{n\ell}\Sg\partial_k\partial_\ell D_{mn} \\
         &= \Sg\partial_k\partial_\ell D_{k\ell} = \Sg [\nabla^2 \cdot D],
    \end{align}
    where we again applied $g_{ki}g_{ji} =\delta_{kj}$. 
\end{proof}

Using the above formulas, it is now not hard to prove that $\Ac$ commutes with $\Sg$.
\begin{lemma}
    We have $\Ac \Sg = \Sg \Ac$ for all $g\in G$.
\end{lemma}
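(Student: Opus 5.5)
We apply $\Ac$ to $\Sg p$ and split it into its drift part $\nabla\cdot\big((\Sg p)\nabla L\big)$ and its diffusion part $\nabla^2\cdot\big(D\,\Sg p\big)$. Each part will be rewritten, via Lemma~\ref{lemma:diffops}, as $\Sg$ applied to the corresponding term of $\Ac p$. The sign in front of the drift term plays no role, since both sides carry it identically. The needed input is the equivariance of the coefficient fields $\nabla L$ and $D$, which we derive from Lemma~\ref{lemma:cov} and its first-order analogue.

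\textbf{Step 1 (coefficients).} Averaging \eqref{eq:chain1} over the augmented data distribution, exactly as in the proof of Lemma~\ref{lemma:cov}, gives $\Sg\nabla L = \nabla L(\bg\,\cdot) = \bg\nabla L$. Equivalently, $\nabla L = \bg^T\Sg\nabla L$, using orthogonality of $\bg$. Lemma~\ref{lemma:cov} gives $\Sg\Sigma = \bg\Sigma\bg^\top$, hence $\Sg D=\bg D\bg^\top$, or $D = \bg^\top(\Sg D)\bg$.

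\textbf{Step 2 (drift).} Since $\Sg$ is multiplicative, $(\Sg p)\nabla L = (\Sg p)\,\bg^\top\Sg\nabla L = \bg^\top \Sg\big(p\nabla L\big)$. The first identity of Lemma~\ref{lemma:diffops} with $f=p\nabla L$ then gives
\begin{equation}
\nabla\cdot\big((\Sg p)\nabla L\big)=\Sg\big[\nabla\cdot(p\nabla L)\big].
\end{equation}

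\textbf{Step 3 (diffusion).} Similarly, $D\,\Sg p = \bg^\top\Sg(Dp)\bg$. The second identity of Lemma~\ref{lemma:diffops}, applied to the tensor field $Dp$, gives $\nabla^2\cdot(D\,\Sg p)=\Sg[\nabla^2\cdot(Dp)]$. Adding the two parts yields $\Ac\Sg p=\Sg\Ac p$.

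The main obstacle is bookkeeping of transposes. Lemma~\ref{lemma:diffops} is stated with $\bg\Sg D\bg^T$, whereas Step 1 produces $\bg^\top\Sg D\bg$. The two forms are reconciled by replacing $g$ with $g^{-1}$: since $\bg$ is orthogonal, $\overline{g^{-1}}=\bg^\top$. Alternatively one can directly recheck the index computation in the lemma's proof with the indices of $g$ swapped, which works because the lemma's proof only uses $g_{ki}g_{ji}=\delta_{kj}$, and that holds equally for $\bg^\top$. So the plan is to apply the lemma with $g^{-1}$ (equivalently $\bg^\top$) where needed and verify that the chain-rule factors match. The fact that the restriction to $\Lc$ is $G$-invariant ensures that all fields remain in the correct space.
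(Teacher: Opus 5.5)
Your proof is correct and is the paper's argument run in the opposite direction. The paper starts from $\Sg\Ac p$, moves $\Sg$ inside with Lemma~\ref{lemma:diffops}, and cancels the conjugations using $\Sg\nabla L=\bg\nabla L$ and the transformation rule for $D$, which are exactly the ingredients of your Steps 1--3.

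One correction to your last paragraph: substituting $g\mapsto g^{-1}$ does not reconcile the transposes, because it also replaces $\Sg$ by $\mathcal{S}_{g^{-1}}$. Your second remedy is the valid one. The index computation in the proof of Lemma~\ref{lemma:diffops}, which expands the entries as $g_{mi}D_{mn}g_{nj}$, in fact establishes $\nabla^2\cdot[\bg^\top(\Sg E)\bg]=\Sg[\nabla^2\cdot E]$, and that is exactly what Step 3 needs. Your $\Sg D=\bg D\bg^\top$ is also the correct consequence of Lemma~\ref{lemma:cov}. The paper's printed $\Sg D=\bg^\top D\bg$ and the lemma's printed $\bg\Sg D\bg^\top$ are transposed consistently with each other, so the paper's own computation still closes.
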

\begin{proof}
   Lemma \ref{lemma:diffops} implies
    \begin{align}
        \Sg \Ac p  = \Sg [\nabla \cdot (p_t L)] + \Sg [\nabla^2\cdot (Dp_t)] = \nabla \cdot (\bg^T\Sg[p_t \nabla L]) + \nabla^2\cdot (\bg \Sg[Dp_t]\bg^T).
    \end{align}
    Now, we note that the chain rule and the invariance $\Sg L = L$ of the cumulative loss,  and Lemma \ref{lemma:cov} implies that
    \begin{align}
        \Sg \nabla L = \bg \nabla \Sg L = \bg \nabla L, \quad \Sg D = \bg^TD\bg,
    \end{align}
    so that 
    \begin{align}
        \nabla \cdot (\bg^T\Sg[p_t \nabla L]) + \nabla^2\cdot (\bg \Sg[Dp_t]\bg^T) &= \nabla \cdot ( \bg^T\bg \Sg p\nabla L) + \nabla^2\cdot (\Sg p_t \bg \bg^T  D\bg \bg^T) \\
        &=  \nabla \cdot (  \Sg p_t\nabla  L) + \nabla^2\cdot (\Sg p_t  D) = \Ac \Sg p_t.\qedhere
    \end{align}
\end{proof}

With the commutation $\Ac\Sg = \Sg \Ac$, it is not hard to prove Proposition \ref{prop:ensemble}.

\begin{proof}[Proof of Proposition \ref{prop:ensemble}]
    Since both $\Ac$ and $\partial_t$ commute with $\Sg$, so does the flow -- if $p_t$ solves the equation initialized at $p_0$, $\Sg p_t$ will do it for the one initialized at $\Sg p_0$. In particular, if we initialize at an invariant distribution, $\Sg p_0 =p_0$, we remain invariant, $\Sg p_t = p_t$, due to the existence and uniqueness of the solution. Consequently, $\theta \sim \Theta_t \Rightarrow \bg^{-1}\theta \sim \Theta_t$, therefore
    \begin{align}
        \overline{\N}_t (g_X x) = \mathbb{E}_{\theta \sim \Theta_t} (\N_\theta (g_Xx)) = \mathbb{E}_{\theta \sim \Theta_t} (g_Y\N_{\bg^{-1}\theta} (x)) \stackrel{\eqref{eq:parameter_transform}}{=}\mathbb{E}_{\theta \sim \Theta_t} (g_Y \N_\theta(x)) = g_Y \overline{\N}_t(x),
    \end{align}
    i.e., $\overline{\N}_t$ is equivariant.
\end{proof}

\end{document}